\algnewcommand\INPUT{\item[\textbf{Input:}]}
\algnewcommand\OUTPUT{\item[\textbf{Output:}]}
\newcommand{\mP}{\mathbb P}
\newcommand{\mB}{\mathcal B}
\newcommand{\ms}{\mathcal{S}}
\newcommand{\ma}{\mathcal{A}}
\newcommand{\mam}{\mathcal M}
\newcommand{\mE}{\mathbb E}
\newcommand{\mcs}{\mathcal S}
\newcommand{\mca}{\mathcal A}
\newcommand{\mf}{\mathcal F}
\newcommand{\mcv}{\mathcal V}
\newcommand{\ltwo}[1]{\left\|#1\right\|_2}
\newcommand{\lone}[1]{\left|#1\right|}
\newcommand{\lF}[1]{\left\|#1\right\|_F}
\newcommand{\mR}{\mathbb{R}}
\newtheorem{theorem}{Theorem}
\newtheorem{lemma}{Lemma}
\newtheorem{assumption}{Assumption}
\newcommand{\printfnsymbol}[1]{%
	\textsuperscript{\@fnsymbol{#1}}%
}
\begin{document}
	
\title{Sample Complexity Bounds for Two Timescale Value-based Reinforcement Learning Algorithms}

\author[ ]{Tengyu Xu, Yingbin Liang}
\affil[ ]{Department of Electrical and Computer Engineering, The Ohio State University}
\affil[ ]{\{xu.3260, liang.889\}@osu.edu}

\date{}
\maketitle

\begin{abstract}
Two timescale stochastic approximation (SA) has been widely used in value-based reinforcement learning algorithms. In the policy evaluation setting, it can model the linear and nonlinear temporal difference learning with gradient correction (TDC) algorithms as linear SA and nonlinear SA, respectively. In the policy optimization setting, two timescale nonlinear SA can also model the greedy gradient-Q (Greedy-GQ) algorithm. In previous studies, the non-asymptotic analysis of linear TDC and Greedy-GQ has been studied in the Markovian setting, with diminishing or accuracy-dependent stepsize. For the nonlinear TDC algorithm, only the asymptotic convergence has been established. In this paper, we study the non-asymptotic convergence rate of two timescale linear and nonlinear TDC and Greedy-GQ under Markovian sampling and with accuracy-independent constant stepsize. For linear TDC, we provide a novel non-asymptotic analysis and show that it attains an $\epsilon$-accurate solution with the optimal sample complexity of $\mathcal{O}(\epsilon^{-1}\log(1/\epsilon))$ under a constant stepsize. For nonlinear TDC and Greedy-GQ, we show that both algorithms attain $\epsilon$-accurate stationary solution with sample complexity $\mathcal{O}(\epsilon^{-2})$. It is the first non-asymptotic convergence result established for nonlinear TDC under Markovian sampling and our result for Greedy-GQ outperforms the previous result orderwisely by a factor of $\mathcal{O}(\epsilon^{-1}\log(1/\epsilon))$.
\end{abstract}

\section{Introduction}

Two timescale stochastic approximation (SA) algorithms have wide applications in reinforcement learning (RL) \cite{sutton2018reinforcement}. Typically, two timescale SA algorithms involve iterations of two types of variables updated at different speeds, i.e., the stepsizes for two iterates are chosen differently so that one iterate runs much faster than the other \cite{borkar1997stochastic,borkar2009stochastic}. Such algorithms are widely used to solve both policy evaluation and policy optimization problems in RL, in which the goal of policy evaluation is to estimate the expected total reward (i.e. value function) of a target policy, and the goal of policy optimization is to search for a policy with the optimal expected total reward.

In the policy evaluation problem, temporal difference (TD) learning \cite{sutton1988learning} is one of the most widely used algorithms when a linear function class is utilized to approximate the value function. However, in the off-policy setting, in which the target policy to be evaluated is different from the behavior policy that generates samples, TD learning may diverge to infinity. To overcome such an issue, \cite{sutton2009fast} proposed the two timescale linear TD with gradient correction (TDC) algorithm, which has convergence guarantee in the off-policy setting. The two timescale linear TDC is a special case of two timescale linear SA, whose asymptotic convergence has been established in \cite{sutton2009fast,borkar2009stochastic} and \cite{yu2017convergence,tadic2004almost,yaji2016stochastic} for the i.i.d. and Markovian settings, respectively. The non-asymptotic convergence rate of two timescale linear TDC/SA has also been studied. In the i.i.d. setting, under diminishing stepsize, \cite{dalal2017finite} established the sample complexity of $\mathcal{O}(\epsilon^{-1.5})$, and an improved complexity of $\mathcal{O}(\epsilon^{-1})$ was later established in \cite{dalal2019tale}. In the Markovian setting, \cite{xu2019two} established the complexity of $\mathcal{O}(\epsilon^{-1.5}\log^2(1/\epsilon))$ under a diminishing stepsize, and \cite{gupta2019finite} established the complexity of $\mathcal{O}(\epsilon^{-1-\zeta}\log^2(1/\epsilon))$ under a $\epsilon$-dependent stepsize, where $\zeta$ can be an arbitrarily small positive constant. Recently, \cite{kaledin2020finite} provides a tighter complexity bound of $\mathcal{O}(\epsilon^{-1})$ for two timescale linear SA under a diminishing stepsize. Although having progressed significantly, existing convergence guarantee were established either under a {\em diminishing} stepsize or a drastically small {\em $\epsilon$-level} stepsize, which yield very slow convergence and are rarely used in practice.
\begin{list}{$\bullet$}{\topsep=0.ex \leftmargin=0.15in \rightmargin=0.in \itemsep =0.01in}
	\item {\em Thus, the first goal of this paper is to investigate the two timescale linear TDC {\bf under a constant stepsize (not $\epsilon$-dependent)}, which is commonly adopted in practice, and to provide the finite-sample convergence guarantee for such a case. This necessarily requires a new approach differently from the existing ones. }
\end{list}

When a nonlinear function is utilized to approximate the value function, TD learning still suffers from the divergence issue \cite{tsitsiklis1996analysis}. To address that, \cite{bhatnagar2009convergent} proposed the two timescale nonlinear TDC, which can be modeled as a two timescale nonlinear SA. 
The asymptotic convergence of two timescale nonlinear SA has been well established in \cite{borkar1997stochastic,tadic2004almost,karmakar2018two}. However, the non-asymptotic convergence of two timescale nonlinear SA has only been established in the i.i.d.\ setting under some restrict assumptions such as global (local) stability and local linearizion \cite{borkar2018concentration,mokkadem2006convergence}. 
So far, the non-asymptotic convergence performance of two timescale nonlinear TDC has not been studied under the general Markovian sampling.
\begin{list}{$\bullet$}{\topsep=0.ex \leftmargin=0.15in \rightmargin=0.in \itemsep =0.01in}
	\item {\em The second goal of this paper is to provide the first non-asymptotic convergence analysis for two timescale nonlinear TDC with a constant stepsize, under {\bf Markovian sampling}, and without restricted assumptions.}
\end{list}

Moreover, in the policy optimization problem, Q-learning \cite{watkins1992q} has been widely used and has achieved significant success in practice. However, in the function approximation setting, Q-learning does not have convergence guarantee \cite{baird1995residual} unless under some restricted regularity assumptions \cite{melo2008analysis, zou2019finite, cai2019neural}. In corresponding to this, \cite{maei2010gq} proposed the Greedy-GQ algorithm in the linear function approximation setting, in which the algorithm is guaranteed to converge to a locally optimal policy without restricted assumptions. Similarly to nonlinear TDC algorithms, Greedy-GQ also adopts a two timescale update scheme, and is a special case of two timescale nonlinear SA. Under single-sample update and Markovian sampling, \cite{wang2020finite} provided the non-asymptotic convergence rate of Greedy-GQ with diminishing stepsize, which achieves the complexity of $\mathcal{O}(\epsilon^{-3}\log(\epsilon^{-1}))$. However, such a rate does not attain the typical complexity order of nonconvex optimization, and can be potentially improved with a larger stepsize.
\begin{list}{$\bullet$}{\topsep=0.ex \leftmargin=0.15in \rightmargin=0.in \itemsep =0.01in}
	\item {\em The last focus of this paper is to provide an improved non-asymptotic convergence rate for two timescale Greedy-GQ under a constant stepsize.}
\end{list}

\subsection{Our Contributions}

For two timescale linear TDC, we show that it achieves the sample complexity of $\mathcal{O}(\epsilon^{-1}\log(\epsilon^{-1}))$, which has the optimal dependence on $\epsilon$ due to the lower bound given in \cite{dalal2019tale}. Such a rate has been established in \cite{kaledin2020finite}, but only under a {\em diminishing} stepsize, which is rarely used in practice due to the slow empirical performance. In contrast, our guarantee is established under a {\bf constant (not $\epsilon$-dependent) stepsize}, which is commonly used in practice. Our analysis approach leverages the mini-batch sampling for each iteration to control the convergence error, which is significantly different from that in \cite{kaledin2020finite}, and can be of independent interest.

For two timescale nonlinear TDC, we establish the first non-asymptotic convergence rate under {\bf Markovian sampling}. We show that the mini-batch two timescale nonlinear TDC algorithm achieves the sample complexity of $\mathcal{O}(\epsilon^{-2})$.

For two timescale Greedy-GQ, we 
show that mini-batch two timescale Greedy-GQ with a constant stepsize and under Markovian sampling achieves the sample complexity of $\mathcal{O}(\epsilon^{-2})$. Our result orderwisely outperforms the previous result of Greedy-GQ with diminishing stepsize in \cite{wang2020finite} by a factor of $\mathcal{O}(\epsilon^{-1}\log(1/\epsilon))$.

\subsection{Related Work}
Due to the vast amount of studies on SA and value-based RL algorithms, we include here only the studies that are highly related to our work.

\textbf{Policy evaluation with {\em linear} function approximation.} In the on-policy setting, TD learning \cite{sutton1988learning} has been proposed to solve the policy evaluation problem in the linear function approximation setting. The non-asymptotic convergence rate of TD learning has been established in \cite{dalal2018finite} for the i.i.d. setting and in \cite{bhandari2018finite,srikant2019finite,hu2019characterizing} for the Markovian setting. \cite{cai2019neural} explored the linearizable structure of neural networks in the overparameterized regime, and studied the non-asymptotic convergence rate of TD learning with neural network approximation.
\cite{zou2019finite} studied the convergence rate of SARSA with linear function approximation in the Markovian setting, which can been viewed as a policy evaluation with dynamic changing transition kernel.

In the off-policy setting, GTD, GTD2 and TDC have been proposed to solve the divergence issue of TD learning \cite{sutton2008convergent,sutton2009fast,maei2011gradient}. The convergence rate of one timescale GTD and GTD2 algorithms has been established in \cite{liu2015finite} by converting the objective into a convex-concave saddle problem in the i.i.d. setting, and was further generalized to the Markovian setting in \cite{wang2017finite}. For two timescale linear TDC, in the i.i.d.\ setting, the non-asymptotic analysis was provided in \cite{dalal2017finite,dalal2019tale}. In the Markovian setting, the non-asymptotic convergence rate was first established in \cite{xu2019two} under diminishing stepsize and in \cite{gupta2019finite} under constant stepsize. The result in \cite{xu2019two} was later improved by \cite{kaledin2020finite} to achieve the optimal convergence rate.

\textbf{Policy evaluation with {\em nonlinear} function approximation.}
Two timescale nonlinear TDC is proposed by \cite{bhatnagar2009convergent}, in which a smooth nonlinear function is utilized to approximate the value function. Nonlinear TDC with i.i.d.\ samples is a special case of two time-scale nonlinear SA with martingale noise, whose asymptotic convergence has been established in \cite{bhatnagar2009convergent,maei2011gradient} by using asymptotic convergence results in nonlinear SA \cite{borkar1997stochastic,borkar2009stochastic,tadic2004almost}. Under the global/local asymptotic stability assumptions or local linearizion assumption, the non-asymptotic convergence of two timescale nonlinear SA with martingale noise has been studied in \cite{borkar2018concentration}. Under certain stability assumptions, the asymptotic convergence of two timescale nonlinear SA with Markov noise was established in \cite{karmakar2016asymptotic,karmakar2018two}. 
A concurrent study \cite{qiu2020single} also investigated nonlinear TDC and obtained the same sample complexity of $\mathcal{O}(\epsilon^{-2})$ as our result. However, \cite{qiu2019finite} only considered the i.i.d. setting, whereas we considered the more general Markovian setting.

\textbf{Policy optimization with {\em linear} function approximation.}
Q-learning \cite{watkins1992q} is one of the most widely used value-based policy optimization algorithms. The asymptotic and non-asymptotic convergence have been established for Q-learning with linear function approximation in \cite{melo2008analysis} and \cite{zou2019finite}, respectively, under certain regularity assumption. Under a similar regularity assumption, \cite{cai2019neural} established the convergence rate of Q-Learning in the neural network approximation setting. However, without regularity assumptions, Q-Learning does not have convergence guarantee in the function approximation setting.
\cite{maei2010toward} proposed two timescale Greedy-GQ to solve the divergence issue of Q-Learning with linear function approximation, and the asymptotic convergence of Greedy-GQ was also established therein. Recently, \cite{wang2020finite} studied the non-asymptotic convergence rate of Greedy-GQ under diminishing stepsize in the Markovian setting. In this paper, we provide an orderwisely better convergence rate than that in \cite{wang2020finite}.

\section{Markov Decision Process}\label{sc: mdp}

Consider a Markov decision process (MDP) denoted $(\mcs, \mca, \mathsf{P},r,\gamma)$. Here, $\mcs\subset \mR^d$ is a state space, $\mca$ is an action set, $\mathsf{P}=\mathsf{P}(s^\prime|s,a)$ is the transition kernel,  $r(s, a, s^\prime)$ is the reward function bounded by $r_{\max}$, and $\gamma\in(0,1)$ is the discount factor. A stationary policy $\pi$ maps a state $s\in \mcs$ to a probability distribution $\pi(\cdot|s)$ over the action space $\mca$. At time-step $t$, suppose the process is in some state $s_t\in \mcs$. Then an action $a_t\in \mca$ is taken based on the distribution $\pi(\cdot|s_t)$, the system transitions to a next state $s_{t+1}\in \mcs$ governed by the transition kernel $\mathsf{P}(\cdot|s_t,a_t)$, and a reward $r_t=r(s_t, a_t, s_{t+1})$ is received. We assume the associated Markov chain $p(s^\prime|s)=\sum_{a\in\mca}p(s^\prime|s,a)\pi(a|s)$ is ergodic, and let $\mu_\pi$ be the induced stationary distribution of this MDP, i.e., $\sum_{s}p(s^\prime|s)\mu_{\pi}(s)=\mu_{\pi}(s^\prime)$. The state value function for policy $\pi$ is defined as: $V^\pi\left(s\right)=\mE[\sum_{t=0}^{\infty}\gamma^t r(s_t,a_t, s_{t+1})|s_0=s,\pi]$, and the state-action value function is defined as: $Q^\pi(s,a)=\mE[\sum_{t=0}^{\infty}\gamma^t r(s_t,a_t, s_{t+1})|s_0=s, a_0=a, \pi]$. It is known that $V^\pi(s)$ is the unique fixed point of the Bellman operator $T^\pi$, i.e., $V^\pi(s) = T^\pi V^\pi(s)\coloneqq r^\pi(s)+\gamma\mE_{s^\prime |s} V^\pi(s^\prime)$, where $r^\pi(s)=\mE_{a, s^\prime|s}r(s,a,s^\prime)$ is the expected reward of the Markov chain induced by the policy $\pi$. We take the following standard assumption for the MDP in this paper, which has also been adopted in previous works \cite{bhandari2018finite,zou2019finite,srikant2019finite,xu2019two,xu2020reanalysis}.

\begin{assumption}[Geometric ergodicity]\label{ass1}
	There exist constants $\kappa>0$ and $\rho\in(0,1)$ such that
	\begin{flalign*}
	\sup_{s\in\mcs}d_{TV}(\mP(s_t|s_0=s),\mu_{\pi_b})\leq \kappa\rho^t, \forall t\geq 0,
	\end{flalign*}
	where $\mP(s_t|s_0=s)$ is the distribution of $s_t$ conditioned on $s_0=s$ and $d_{TV}(P,Q)$ denotes the total-variation distance between the probability measures $P$ and $Q$.
\end{assumption}
\Cref{ass1} holds for any time-homogeneous Markov chain with finite state space and any uniformly ergodic Markov chain with general state space \cite{bhandari2018finite,zou2019finite,xu2019two}.

\section{Two Timescale TDC with Linear Function Approximation}\label{sc: linearTDC}
In this section we first introduce the two timescale linear TDC algorithm to solve the policy evaluation problem, and then present our convergence rate result.

\subsection{Algorithm}

When $\mcs$ is large or infinite, a linear function $\hat{v}(s,\theta)=\phi(s)^\top\theta$ is often used to approximate the value function $V^\pi(s)$, where $\phi(s)\in\mR^d$ is a fixed feature vector for state $s$ and $\theta\in\mR^d$ is a parameter vector. We can also write the linear approximation in the vector form as $\hat{v}(\theta)={\rm \Phi} \theta$, where ${\rm \Phi}$ is the $|\mcs|\times d$ feature matrix. Without loss of generality, we assume that the feature vector $\ltwo{\phi(s)}\leq 1$ for all $s\in\mcs$ and the columns of the feature matrix $\Phi$ are linearly independent. 
Here we consider policy evaluation problem in the off-policy setting. Namely, a sample path $\{ (s_t,a_t,s_{t+1}) \}_{t\geq 0}$ is generated by the Markov chain according to a behavior policy $\pi_b$, but our goal is to obtain the value function of a target policy $\pi$, which is different from $\pi_b$. 

To find a parameter $\theta^*\in\mR^d$ with $ \mE_{\mu_{\pi_b}}\hat{v}(s,\theta^*)= \mE_{\mu_{\pi_b}}T^\pi\hat{v}(s,\theta^*)$. The linear TDC algorithm \cite{sutton2009fast} updates the parameter by minimizing the mean-square projected Bellman error (MSPBE) objective, defined as
\begin{flalign*}
J(\theta)&=\mE_{\mu_{\pi_b}}[\hat{v}(s,\theta)-{\rm \Pi} T^\pi\hat{v}(s,\theta)]^2,
\end{flalign*}
where ${\rm \Pi}$ is the orthogonal projection operation onto the function space $\hat{\mcv}=\{\hat{v}(\theta)\  |\  \theta\in\mR^d\  \text{and}\ \hat{v}(\cdot, \theta)=\phi(\cdot)^\top \theta   \}$. When the columns of the feature matrix $\Phi$ are linearly independent, \cite{sutton2009fast} shows that $J(\theta)$ is strongly convex and has $\theta^*=-A^{-1}b$ as its global minimum, i.e., $J(\theta^*)=0$, where $A=\mE_{\mu_{\pi_b}}[(\gamma\mE_\pi[\phi(s^\prime)|s]-\phi(s))\phi(s)]$ and $b=\mE_{\mu_{\pi_b}}[\mE_\pi[r(s,a,s^\prime)|s]\phi(s)]$. A convenient way to find $\theta^*$ is to minimize the MSPBE objective function $J(\theta)$ using the gradient descent method: $\theta_{t+1}=\theta_t - \frac{\alpha}{2} \nabla J(\theta_t)$, where $\alpha>0$ is the stepsize and the gradient $\nabla J(\theta)$ was derived by \cite{bhatnagar2009convergent} as follows:
{\begin{flalign}\label{linear-TDCupdate}
	-\frac{1}{2}\nabla J(\theta) = \mE_{\mu_{\pi_b}}[\mE_\pi[\delta(\theta)|s]\phi(s)]-\gamma\mE_{\mu_{\pi_b}}[ \mE_\pi[\phi(s^\prime)|s]\phi(s)^\top ]w(\theta),
	\end{flalign}}
where $\delta(\theta)=r(s,a,s^\prime) + \gamma \hat{v}(s^\prime,\theta) - \hat{v}(s,\theta)$ is the temporal difference error, $w(\theta)\coloneqq \mE_{\mu_{\pi_b}}[\phi(s)\phi(s)^\top]^{-1}\mE_{\mu_{\pi_b}}[ \mE_\pi[\delta(\theta)|s]\phi(s)]$.
In practice, stochastic gradient descent (SGD) method is usually adopted to perform the update in \cref{linear-TDCupdate} approximately. However, directly sampling is not applicable to $w(\theta)$. To solve such an issue, an auxiliary parameter $w_t$ can be introduced to estimate the vector $w(\theta_t)$, i.e., $w_t\approx w(\theta_t)$, by solving a linear SA with the following corresponding ODE:
\begin{flalign*}
\dot{w}=-\mE_{\mu_{\pi_b}}[\phi(s)\phi(s)^\top]w + \mE_{\mu_{\pi_b}}[\mE_\pi[\delta(\theta)|s]\phi(s)].
\end{flalign*}

Given $w_t$, the parameter $\theta_t$ can then be updated with a stochastic approximation of $\nabla J(\theta_t)$ obtained via directly sampling:
\begin{flalign}
\theta_{t+1}=\theta_t + \alpha \frac{1}{\lone{\mB_t}}\sum_{j\in \mB_t}g(\theta_t, w_t,x_j),\label{eq: linearTDC_stoc}
\end{flalign}
where $\mB_t$ is the mini-batch sampled from the MDP, $g(\theta_t, w_t,x_j)=\rho(s_j, a_j)(\delta_j(\theta_t)\phi(s_j)-\gamma\phi(s_{j+1})\phi(s_j)^\top w_t)$, $\rho(s,a)=\pi(a|s)/\pi_b(a|s)$ is the importance weighting factor with $\rho_{\max}$ being its maximum value, and $x_j$ denotes the sample $(s_j,a_j,s_{j+1})$.

\Cref{al: linearTDC} is an {\bf online }algorithm based on a {\bf single sample path}. \Cref{al: linearTDC} adopts a two timescale update scheme, in which parameters $\theta_t$ and $w_t$ are updated simultaneously but with different stepsizes. Specifically, the main parameter $\theta_t$ iterates at a slow timescale with a smaller stepsize, and the auxiliary parameter $w_t$ iterates at a fast timescale with a larger stepsize. By doing so, $w_t$ can be close to $w(\theta_t)$ asymptotically, so that $\theta_t$ is updated approximately in the direction of $-\nabla J(\theta)$. 
\Cref{al: linearTDC} utilizes an accuracy-independent constant stepsize, i.e., $\alpha, \beta=\mathcal{O}(1)$ for both the updates of $\theta_t$ and $w_t$, and a mini-batch of samples $\{ (s_j, a_j, s_{j+1}) \}_{i_t\leq j\leq i_t+M-1}$ are taken sequentially from the trajectory at each iteration to perform the update. As we will show later, linear TDC in this setting is guaranteed to converge to the global optimal with an arbitrary accuracy level.

\begin{algorithm}[H]
	\null
	\caption{Two Timescale Linear TDC}
	\label{al: linearTDC}
	\begin{algorithmic}[1]
		\STATE {\bfseries Input:} batch size $M$, learning rate $\alpha$ and $\beta$
		\STATE {\bfseries Sampling:} A trajectory $\{ s_j, a_j\}_{j\geq 0}$ is sampled by following the behaviour policy $\pi_b$
		\STATE \textbf{Initialization: } $\theta_0$ and $w_0$
		\FOR{$t=0,\cdots,T-1$}
		\STATE $i_t=tM$
		\STATE $w_{t+1} = w_t + \beta\frac{1}{M}\sum_{j=i_t}^{i_t+M-1}(-\phi(s_j)\phi(s_j)^\top w_t + \rho(s_j, a_j)\delta_j(\theta_t)\phi(s_j))$
		\STATE $\theta_{t+1}=\theta_t + \alpha\frac{1}{M}\sum_{j=i_t}^{i_t+M-1} \rho(s_j, a_j)(\delta_j(\theta_t)\phi(s_j)-\gamma \phi(s_{j+1})\phi(s_j)^\top w_t)$
		\ENDFOR
		\STATE {\bfseries Output:} $\theta_T$
	\end{algorithmic}
\end{algorithm}

\subsection{Convergence Analysis}
We define matrix $C=-\mE_{\mu_{\pi_b}}[\phi(s)\phi(s)^\top]$. Let $\lambda_1=\lone{\lambda_{\max}(A^\top C^{-1} A)}$, $\lambda_2=\lone{\lambda_{\max}(C)}$ and $R_\theta=\ltwo{\theta^*}$. The following theorem provides the convergence rate and sample complexity of \Cref{al: linearTDC}.
\begin{theorem}\label{thm1}
	Suppose \Cref{ass1} hold. Consider \Cref{al: linearTDC} of two timescale linear TDC update. Let the stepsize $\alpha\leq \min\left\{ \frac{1}{8\lambda_1}, \frac{\lambda_1\lambda_2}{12}, \frac{\sqrt{\lambda_2\beta}}{4\sqrt{6}\rho_{\max}} \frac{\lambda_2\sqrt{\lambda_2}\beta}{16\rho^2_{\max}}, \frac{\lambda_1\lambda_2\beta}{64\rho^2_{\max}}, \frac{\lambda_1\lambda^2_2\beta}{768}  \right\}$, $\beta \leq  \min\left\{ \frac{1}{8\lambda_2}, \frac{\lambda_2}{4}\right\}$ and the batch size $M\geq 128\left( \rho^2_{\max} + \frac{1}{\lambda^2_2} \right)\frac{1+(\kappa-1)\rho}{1-\rho}\max\left\{1,\frac{8\beta+8\lambda_2\beta^2}{\lambda_1\lambda_2\alpha}, \frac{8+12\lambda_1\alpha}{\lambda_1} \right\}$. Then we have
	{\small \begin{flalign}
		&\mE[\ltwo{\theta_{T}-\theta^*}^2] \leq \left(1- \frac{\min\{  \lambda_1\alpha, \lambda_2\beta \}}{8} \right)^{T}\Delta_0 + \frac{A_1}{M},\label{eq: thm1_ub}
		\end{flalign}}where $\Delta_0 = \ltwo{w_0 - w^*(\theta_0)}^2 + \ltwo{\theta_0 - \theta^*}^2$, where $A_1$ is a constant defined in \cref{eq: 32} in \Cref{sc: proflemma1}. Furthermore, let $M\geq \frac{2A_1}{\epsilon}$ and $T\geq \frac{8}{\min\{  \lambda_1\alpha, \lambda_2\beta \}}\ln\left( \frac{2\Delta_0}{\epsilon} \right)$. The total sample complexity for \Cref{al: linearTDC} to achieve an $\epsilon$-accurate optimal solution $\theta^*$, i.e., $\mE[\ltwo{\theta_T-\theta^*}^2]\leq \epsilon$, is given by
	{\small \begin{flalign*}
		TM=\Theta\left( \frac{1}{\epsilon}\log\left( \frac{1}{\epsilon} \right) \right).
		\end{flalign*}}
\end{theorem}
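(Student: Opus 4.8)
The plan is to track the two coupled error processes $z_t \coloneqq w_t - w^*(\theta_t)$ (fast timescale) and $y_t \coloneqq \theta_t - \theta^*$ (slow timescale), where $w^*(\theta)=-C^{-1}\mE_{\mu_{\pi_b}}[\mE_\pi[\delta(\theta)|s]\phi(s)]$ is the fixed point of the fast ODE, and to establish a one-step Lyapunov inequality for $V_t \coloneqq \mE[\ltwo{z_t}^2] + \mE[\ltwo{y_t}^2]$. First I would rewrite the two updates in \Cref{al: linearTDC} as $w_{t+1} = w_t + \beta(\bar h(\theta_t,w_t) + \xi^w_t)$ and $\theta_{t+1} = \theta_t + \alpha(\bar g(\theta_t, w_t) + \xi^\theta_t)$, where $\bar h(\theta,w)=Cw+\mE_{\mu_{\pi_b}}[\mE_\pi[\delta(\theta)|s]\phi(s)]$ and $\bar g(\theta,w)$ are the mean-field updates under $\mu_{\pi_b}$ — so that $\bar h(\theta,w^*(\theta))=0$ and $\bar g(\theta^*,w^*(\theta^*))=-\tfrac12\nabla J(\theta^*)=0$ — and $\xi^w_t,\xi^\theta_t$ are the mini-batch approximation errors. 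Here $\lambda_2$ plays the role of the smallest-magnitude eigenvalue of $-C=\mE_{\mu_{\pi_b}}[\phi\phi^\top]$ (so $C$ is Hurwitz and $\langle z,Cz\rangle\le-\lambda_2\ltwo{z}^2$), while $\lambda_1$ plays the role of the strong-monotonicity constant of $-\tfrac12\nabla J$ near $\theta^*$.

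The core of the argument is then a pair of recursions. For the fast error, using the Hurwitz property of $C$ and the $\mathcal{O}(\alpha)$-Lipschitz drift of $w^*(\theta_t)$ in $t$, I would obtain
\[
\mE[\ltwo{z_{t+1}}^2] \le (1 - c_1\lambda_2\beta)\,\mE[\ltwo{z_t}^2] + \frac{c_2\,\alpha^2}{\lambda_2\beta}\,\mE[\ltwo{y_t}^2] + (\text{noise}).
\]
For the slow error, using the strong monotonicity encoded in $\lambda_1$ together with the fact that the gap between the sampled direction $g(\theta_t,w_t,\cdot)$ and $-\tfrac12\nabla J(\theta_t)$ is controlled by $\ltwo{z_t}$, I would obtain
\[
\mE[\ltwo{y_{t+1}}^2] \le (1 - c_3\lambda_1\alpha)\,\mE[\ltwo{y_t}^2] + c_4\,\alpha\,\mE[\ltwo{z_t}^2] + (\text{noise}).
\]
Adding these — after choosing the ratio $\alpha/\beta$ (and $\alpha$ itself) small enough, which is exactly where the multi-term upper bound on $\alpha$ comes from — makes the cross terms absorbable into the contractions and yields $V_{t+1}\le(1-c\min\{\lambda_1\alpha,\lambda_2\beta\})V_t+(\text{noise})$.

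The main obstacle is bounding the noise terms $\xi^w_t,\xi^\theta_t$ coming from Markovian mini-batch sampling. Since $\mB_t=\{i_t,\dots,i_t+M-1\}$ is a block of consecutive samples disjoint from the past, I would condition on $\mf_{i_t}$ — which makes $\theta_t,w_t$ measurable — and combine the Markov property with \Cref{ass1}: the conditional bias of each sample obeys $\ltwo{\mE[g(\theta_t,w_t,x_j)\mid\mf_{i_t}]-\bar g(\theta_t,w_t)}\le L\kappa\rho^{\,j-i_t}(1+\ltwo{\theta_t}+\ltwo{w_t})$, so averaging over the block produces an $\mathcal{O}\!\big(\tfrac{\kappa}{M(1-\rho)}\big)$ bias, while a covariance-decay estimate (again from geometric ergodicity) gives an $\mathcal{O}\!\big(\tfrac1M\big)(1+\ltwo{\theta_t}^2+\ltwo{w_t}^2)$ second-moment bound — which is why the factor $\tfrac{1+(\kappa-1)\rho}{1-\rho}$ appears in the condition on $M$. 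The $\ltwo{\theta_t}^2,\ltwo{w_t}^2$-dependent parts are then handled by $\ltwo{\theta_t}^2\le2\ltwo{y_t}^2+2R_\theta^2$ and $\ltwo{w_t}^2\le2\ltwo{z_t}^2+2\ltwo{w^*(\theta_t)}^2$, with $\ltwo{w^*(\theta_t)}$ controlled by $\ltwo{y_t}$ and $R_\theta,r_{\max},\rho_{\max}$; for $M$ above the stated threshold the $\ltwo{y_t}^2,\ltwo{z_t}^2$ pieces are absorbed into the contraction, leaving a residual of order $A_1/M$, where $A_1$ collects the $R_\theta$-, $r_{\max}$-, $\rho_{\max}$- and $\lambda_i$-dependent constants together with a $1/(c\min\{\lambda_1\alpha,\lambda_2\beta\})$ factor from summing the geometric series.

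Finally, unrolling $V_{t+1}\le(1-c\min\{\lambda_1\alpha,\lambda_2\beta\})V_t+A_1/M$ from $t=0$ and tracking the explicit constants (so that $c=1/8$) gives \cref{eq: thm1_ub} with $\Delta_0=\ltwo{w_0-w^*(\theta_0)}^2+\ltwo{\theta_0-\theta^*}^2$, since $\mE[\ltwo{\theta_T-\theta^*}^2]\le V_T$. For the sample complexity, choosing $M\ge 2A_1/\epsilon$ forces the bias term below $\epsilon/2$, and $T\ge\frac{8}{\min\{\lambda_1\alpha,\lambda_2\beta\}}\ln(2\Delta_0/\epsilon)$ forces the geometric term below $\epsilon/2$; as $\alpha,\beta$ are constants independent of $\epsilon$ we have $\min\{\lambda_1\alpha,\lambda_2\beta\}=\Theta(1)$, hence $T=\Theta(\log(1/\epsilon))$ and $TM=\Theta(\epsilon^{-1}\log(1/\epsilon))$.
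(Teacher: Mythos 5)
Your proposal follows essentially the same route as the paper: coupled one-step recursions for the tracking error $\ltwo{w_t-w^*(\theta_t)}^2$ and the training error $\ltwo{\theta_t-\theta^*}^2$, combined into a Lyapunov quantity that contracts at rate $1-\tfrac18\min\{\lambda_1\alpha,\lambda_2\beta\}$ once the stepsize and batch-size conditions absorb the cross terms, and then unrolled. The only cosmetic difference is that you split the Markovian mini-batch noise into a bias and a covariance-decay piece, whereas the paper invokes a single mini-batch deviation lemma (its Lemma 1) yielding the $\frac{1+(\kappa-1)\rho}{(1-\rho)M}$ second-moment bound directly; the substance is the same.
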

\Cref{thm1} shows that the convergence error of \Cref{al: linearTDC} consists of two terms: the first term is the transient error decreasing at an exponential rate, and the second term is the variance error that diminishes as the batch size $M$ increases. This is in contrast to the single-sample TDC under constant stepsizes, which suffers from the variance and bias errors with order $\mathcal{O}(\beta^2/\alpha)$ \cite{gupta2019finite}. Thus, $\epsilon$-level small stepsizes $\alpha$ and $\beta$ are required in single-sample TDC to reduce the variance error to achieve the required $\epsilon$-accurate optimal solution, which can slow down the practical convergence speed significantly. In contrast, mini-batch TDC can attain high accuracy with a large constant (not $\epsilon$-level) stepsize. Our result of $\mathcal{O}(\epsilon^{-1}\log(1/\epsilon))$ achieves the optimal complexity order due to the lower bound given in \cite{dalal2019tale}. In contrast to the same sample complexity established in \cite{kaledin2020finite}, which is applicable only under a diminishing stepsize, our result given in \Cref{thm1} is applicable under the constant stepsize, which is practically preferred due to the much better performance.

We next provide a sketch of the proof for \Cref{thm1}. 
\begin{proof}[{\bf Proof Sketch of \Cref{thm1}}]
	The proof of \Cref{thm1} consists of the following three steps. At $t$-th step, we call $\ltwo{\theta_t-\theta^*}^2$ as the training error and $\ltwo{w_t - w(\theta_t)}^2$ as the tracking error.
	
	\noindent\textbf{Step 1:} We establish the following induction relationships for the tracking error:
	\begin{flalign}\label{eq: sketch1}
	\mE\left[ \ltwo{w_{t+1}-w(\theta_{t+1})}^2 \right]&\leq (1 - \Theta(\lambda_2\beta) +  \Theta(\alpha^2/\beta) )\mE\left[ \ltwo{w_{t}-w(\theta_{t})}^2 \right] \nonumber\\
	&\quad + \Theta(\alpha^2/\beta + \lambda_1\alpha)\mE[\ltwo{\theta_t - \theta^*}^2] + \Theta(1/M).
	\end{flalign}
	
	\noindent\textbf{Step 2:} We then establish the induction relationships for the training error:
	\begin{flalign}\label{eq: sketch2}
	\mE\left[\ltwo{\theta_{t+1}-\theta^*}^2\right]&\leq ( 1 - \Theta(\lambda_1\alpha) + \Theta( \alpha^2))\mE\left[\ltwo{\theta_{t}-\theta^*}^2\right] \nonumber\\
	&\quad + \Theta(\alpha + \alpha^2)\mE\left[ \ltwo{w_{t}-w(\theta_{t})}^2 \right] + \Theta(1/M).
	\end{flalign}
	
	\noindent\textbf{Step 3:} Combing \cref{eq: sketch1} and \cref{eq: sketch2} and letting the stepsize $\alpha$ and $\beta$ and batch size $M$ satisfy the requirement specified in \Cref{thm1}, we establish the induction relationship of $\Delta_t =\mE[\ltwo{\theta_{t}-\theta^*}^2] + \mE[ \ltwo{w_{t}-w(\theta_{t})}^2]$ as follows:
	\begin{flalign}\label{eq: sketch3}
	\Delta_{t+1} \leq \left(1- \Theta(\min\{  \lambda_1\alpha, \lambda_2\beta \}) \right) \Delta_t + \Theta( 1/M).
	\end{flalign}
	Applying \cref{eq: sketch3} recursively from $t=T-1$ to $0$ yields the desired convergence result.
\end{proof}

\section{Two Timescale TDC with Nonlinear Function Approximation}\label{sc: nonlinearTDC}

In this section we first introduce the nonlinear two timescale TDC algorithm to solve the policy evaluation problem, then we provide our non-asymptotic convergence rate result.
\subsection{Algorithm}\label{sc: nonlineartd}

In this section we consider policy evaluation problem with nonlinear function approximation, in which a parameterized smooth {\em nonlinear} function $\hat{v}(s,\theta)$ is used to approximate the value function $V^\pi(s)$. \cite{bhatnagar2009convergent} proposed an algorithm to find a parameter for the approximator $\hat{v}(s,\theta)$, named nonlinear TDC.
The nonlinear TDC updates the parameter by minimizing the following mean-square projected Bellman error objective defined as:
\begin{flalign}
J(\theta)&=\mE_{\mu_{\pi}}[\hat{v}(s,\theta)-{\rm \Pi}_\theta T^\pi \hat{v}(s,\theta)]^2,\label{mspbe1}
\end{flalign}
where ${\rm \Pi}_\theta$ is the orthogonal projection operation into the function space $\bar{\mcv}=\{\bar{V}(s,\zeta)\  |\  \zeta\in\mR^d\  \text{and}\ \bar{v}(s, \zeta)=\phi_\theta(s)^\top \zeta \,\text{with}\, (\phi_\theta(s))_i=\nabla_{\theta_i}\hat{v}(s,\theta)   \}$.
In general, since $J(\theta)$ defined in \cref{mspbe1} is nonconvex with respect to the parameter $\theta$, finding the global minimum of $J(\theta)$ is NP-hard. However, we can still apply gradient descent method to find a local optimum (i.e., first-order stationary point) of $J(\theta)$, via updating the parameter $\theta$ iteratively as $\theta_{t+1}=\theta_t - \frac{\alpha_t}{2} \nabla J(\theta_t)$, where $\alpha_t>0$ is the stepsize and the gradient $\nabla J(\theta)$ was derived by \cite{bhatnagar2009convergent} as follows:
{\begin{flalign}\label{nonlinear-TDCupdate}
	-\frac{1}{2}\nabla J(\theta) = \mE[\delta(\theta)\phi_\theta(s)]-\gamma\mE[\phi_\theta(s^\prime)\phi_\theta(s)^\top ]w(\theta)-h(\theta,w(\theta)),
	\end{flalign}}
where $\delta(\theta)=r(s,a,s^\prime) + \gamma \hat{v}(s^\prime,\theta)-\hat{v}(s,\theta)$ is the temporal difference and
{\begin{flalign*}
	&w(\theta)\coloneqq \mE[\phi_\theta(s)\phi_\theta(s)^\top]^{-1}\mE[\delta(\theta)\phi_\theta(s)],\nonumber\\ &h(\theta,u)\coloneqq \mE[(\delta(\theta)-\phi_\theta(s)^\top u)\nabla^2_\theta V_\theta(s)u].
	\end{flalign*}}
Similarly to linear TDC studied in \Cref{sc: linearTDC}, in order to estimate the gradient in \cref{nonlinear-TDCupdate}, an auxiliary parameter $w_t$ can be used to estimate the vector $w(\theta_t)$, i.e., $w_t\approx w(\theta_t)$, by solving a linear SA with the following corresponding ODE:
\begin{flalign}\label{eq: ODE1}
\dot{w}=-\mE[\phi_\theta(s)\phi_\theta(s)^\top]w + \mE[\delta(\theta)\phi_\theta(s)].
\end{flalign}
Given $w_t$, the parameter $\theta_t$ can then be updated with a stochastic approximation of $\nabla J(\theta_t)$ obtained via directly sampling:
\begin{flalign}
\theta_{t+1}&=\theta_t + \alpha_t\frac{1}{\lone{\mB_t}}\sum_{j\in \mB_t} g(\theta_t, w_t,x_j), \label{eq: nonlinearTDC_stoc}
\end{flalign}
where $\mB_t$ is the minibatch sampled from the MDP, $x_j$ denotes the sample $(s_j,a_j,s_{j+1})$ and we define $g(\theta_t, w_t,x_j)=\delta_j(\theta_t)\phi_{\theta_t}(s_j)-\gamma\phi_{\theta_t}(s_{j+1})\phi_{\theta_t}(s_j)^\top w_t-h_j(\theta_t,w_t)$, where $h_j(\theta_t,w_t)=(\delta_j(\theta_t)-\phi_{\theta_t}(s_j)^\top w_t)\nabla^2_\theta V_{\theta_t}(s_j)w_t$.
The nonlinear TDC algorithm is shown in \Cref{al: nonlinearTDC}. Similarly to \Cref{al: linearTDC}, here we also use a mini-batch of samples for each update.

\begin{algorithm}[H]
	\null
	\caption{Two Time-scale Nonlinear TDC}
	\label{al: nonlinearTDC}
	\begin{algorithmic}[1]
		\STATE \textbf{Input} batch size $M$, learning rate $\alpha$ and $\beta$
		\STATE {\bfseries Sampling:} A trajectory $\{ s_j, a_j\}_{j\geq 0}$ is sampled by following the policy $\pi$
		\STATE \textbf{Initialization: } $\theta_0$ and $w_0$
		\FOR{$t=0, 1, ..., T-1$}
		\STATE $i_t=tM$
		\STATE $w_{t+1} = w_t + \beta\frac{1}{M}\sum_{j=i_t}^{i_t+M-1}(-\phi_{\theta_t}(s_j)\phi_{\theta_t}(s_j)^\top w_t + \delta_j(\theta_t)\phi_{\theta_t}(s_j))$
		\STATE $\theta_{t+1}=\theta_t + \alpha\frac{1}{M}\sum_{j=i_t}^{i_t+M-1}(\delta_j(\theta_t)\phi_{\theta_t}(s_j)-\gamma\phi_{\theta_t}(s_{j+1})\phi_{\theta_t}(s_j)^\top w_t-h_j(\theta_t,w_t))$
		\ENDFOR
		\OUTPUT $\tilde{\theta}_{\hat{T}}$ with $\hat{T}$ chosen uniformly from $\{1,\cdots,T\}$
	\end{algorithmic}
\end{algorithm}

\subsection{Convergence Analysis}

Our analysis of \Cref{al: nonlinearTDC} will be based on the following assumptions.
\begin{assumption}[Bounded feature]\label{ass2}
	For any state $s\in \mathcal{S}$ and any vector $\theta\in \mR^d$, we have $\ltwo{\phi_\theta(s)}\leq C_\phi$, $\lone{V(s,\theta)}\leq C_v$ and $\lF{\nabla^2_\theta V(s,\theta)}\leq D_v$, where $C_\phi$, $C_v$ and $D_v$ are positive constants.
\end{assumption}
\begin{assumption}[Smoothness]\label{ass3}
	For any state $s\in \mathcal{S}$ and any vector $\theta, \theta^\prime\in \mR^d$, we have $\lone{V(s,\theta)-V(s,\theta^\prime)}\leq L_v\ltwo{\theta-\theta^\prime}$, $\ltwo{\phi_\theta(s)-\phi_{\theta^\prime}(s)}\leq L_\phi \ltwo{\theta-\theta^\prime}$, and $\ltwo{\nabla^2_\theta V(s,\theta) - \nabla^2_\theta V(s,\theta^\prime) } \leq L_h\ltwo{\theta-\theta^\prime} $, where $L_v$, $L_\phi$, and $L_h$ are positive constants.
\end{assumption}
\begin{assumption}[Non-singularity]\label{ass4}
	For any vector $\theta\in \mR^d$, we have $\text{eig}\{\mE[\phi_\theta(s) \phi_\theta(s)^\top]\} \geq \lambda_v$, where $\lambda_v$ is a positive constant.
\end{assumption}
\begin{assumption}[Lipschitz gradient]\label{ass5}
	For any vector $\theta, \theta^\prime$ and $w,w^\prime\in \mR^d$, and any sample $x$, we have $\ltwo{\nabla J(\theta)- \nabla J(\theta^\prime)}\leq L_J\ltwo{\theta-\theta^\prime}$, and $\ltwo{g(\theta, w,x)-g(\theta, w^\prime,x)}\leq L_e\ltwo{w-w^\prime}$ where $L_J$ and $L_e$ are positive constants.
\end{assumption}
Assumptions \ref{ass2}-\ref{ass5} are equivalent to the assumptions adopted in the original nonlinear TDC analysis \cite{bhatnagar2009convergent}, and can be satisfied by appropriately choosing the approximation function class $\hat{v}(s,\theta)$. The following theorem characterizes the converge rate and sample complexity of \Cref{al: nonlinearTDC}.

\begin{theorem}\label{thm2}
	Consider the two timescale nonlinear TDC algorithm in \Cref{al: nonlinearTDC}. Suppose Assumptions \ref{ass1}-\ref{ass5} hold. Let the stepsize $\beta\leq \min\{\frac{\lambda_v}{8C^4_\phi}, \frac{8}{\lambda_v} \}$ and $\alpha\leq \min\{\frac{1}{2L_J}, \frac{\lambda_v \beta}{8\sqrt{2}L_w L_e}, \frac{L_J \lambda^2_v\beta^2}{384 L^2_wL^2_e} \}$. We have
	\begin{flalign*}
	\mE\left[\ltwo{\nabla J(\theta_{\hat{T}})}^2\right]\leq \frac{8(J(\theta_0)-\mE[J(\theta_T)])}{\alpha T} + \frac{B_1\ltwo{w_0-w(\theta_0)}^2}{T} + \frac{B_2}{M},
	\end{flalign*}
	where $B_1$ and $B_2$ are constants defined in \Cref{sc: proflemma2} in \cref{eq: 33}. Furthermore, let $M\geq \frac{2	B_2}{\epsilon}$ and $T\geq \frac{2}{\epsilon}\left[ \frac{8J(\theta_0)}{\alpha} + B_1\ltwo{w_0-w(\theta_0)}^2 \right]$. The total sample complexity for \Cref{al: nonlinearTDC} to achieve an $\epsilon$-accurate stationary point, i.e., $\mE\big[ \ltwo{\nabla J(\theta_{\hat{T}})}^2 \big]\leq \epsilon$, is given by
	\begin{flalign*}
	TM=\Theta\left( \frac{1}{\epsilon^2} \right).
	\end{flalign*}
\end{theorem}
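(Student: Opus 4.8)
The plan is to mirror the three-step scheme of the sketch of \Cref{thm1}, but since $J$ is nonconvex the strong-convexity contraction of the training error is replaced by a descent lemma for $J(\theta_t)$, so the final guarantee is on the averaged gradient norm rather than on $\ltwo{\theta_t-\theta^*}^2$. \textbf{Step 1: tracking-error recursion.} Write $z_t=w_t-w(\theta_t)$ and $\bar g_t=\frac1M\sum_{j=i_t}^{i_t+M-1}g(\theta_t,w_t,x_j)$. The $w$-update is a linear SA whose ODE \eqref{eq: ODE1} has $w(\theta)$ as a globally stable fixed point contracting at rate $\Theta(\lambda_v\beta)$ by \Cref{ass4}. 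Expanding $\ltwo{z_{t+1}}^2=\ltwo{(w_{t+1}-w(\theta_t))-(w(\theta_{t+1})-w(\theta_t))}^2$ I expect three error sources: (a) the mini-batch noise and Markovian bias of the averaged $w$-increment, controlled exactly as in \Cref{thm1} via \Cref{ass1} and the batch size $M$, contributing $\Theta(1/M)$; (b) the drift $w(\theta_{t+1})-w(\theta_t)$, which is $L_w$-Lipschitz in $\theta$ (a consequence of Assumptions \ref{ass2}--\ref{ass4}, after first noting that $w_t$ stays bounded) and hence of size $\Theta(\alpha)\ltwo{\bar g_t}$; and (c) the cross term between $z_{t+1}$ and the drift, which after Young's inequality at scale $\Theta(\lambda_v\beta)$ contributes $\Theta(\alpha^2/\beta)\ltwo{\bar g_t}^2$. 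Using $\mE[\bar g_t\mid\mathcal{F}_t]\approx-\tfrac12\nabla J(\theta_t)$ up to $\Theta(1/M)$ bias, the Lipschitz-in-$w$ bound of \Cref{ass5}, and a $\Theta(1/M)$ variance bound on $\bar g_t-\mE[\bar g_t]$, one gets $\mE[\ltwo{\bar g_t}^2\mid\mathcal{F}_t]=\Theta(\ltwo{\nabla J(\theta_t)}^2+\ltwo{z_t}^2+1/M)$ and therefore
\begin{align*}
\mE[\ltwo{z_{t+1}}^2] &\leq \big(1-\Theta(\lambda_v\beta)+\Theta(\alpha^2/\beta)\big)\mE[\ltwo{z_t}^2] \\
&\quad + \Theta(\alpha^2/\beta)\,\mE[\ltwo{\nabla J(\theta_t)}^2] + \Theta(1/M),
\end{align*}
where $\beta\leq\frac{\lambda_v}{8C^4_\phi}$ and $\alpha\leq\frac{\lambda_v\beta}{8\sqrt2 L_wL_e}$ make the net contraction factor $1-\Theta(\lambda_v\beta)$.

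\textbf{Step 2: descent lemma.} Applying $L_J$-smoothness of $J$ (\Cref{ass5}) to $\theta_{t+1}=\theta_t+\alpha\bar g_t$ gives $J(\theta_{t+1})\leq J(\theta_t)+\alpha\langle\nabla J(\theta_t),\bar g_t\rangle+\tfrac{L_J\alpha^2}{2}\ltwo{\bar g_t}^2$. Taking the conditional expectation, writing $\mE[\bar g_t\mid\mathcal{F}_t]=-\tfrac12\nabla J(\theta_t)+e_t$ with $\ltwo{e_t}\leq L_e\ltwo{z_t}+(\text{Markovian bias})$, and splitting the cross terms by Young's inequality turns the linear term into $-\Theta(\alpha)\ltwo{\nabla J(\theta_t)}^2+\Theta(\alpha)\ltwo{z_t}^2+\Theta(\alpha/M)$; since $\alpha\leq\frac1{2L_J}$ the quadratic term $\tfrac{L_J\alpha^2}{2}\mE\ltwo{\bar g_t}^2$ is absorbed into the negative $\ltwo{\nabla J(\theta_t)}^2$ term at the cost of further $\Theta(\alpha)\ltwo{z_t}^2+\Theta(\alpha/M)$. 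Hence
\begin{align*}
\mE[J(\theta_{t+1})] &\leq \mE[J(\theta_t)] - \Theta(\alpha)\,\mE[\ltwo{\nabla J(\theta_t)}^2] \\
&\quad + \Theta(\alpha)\,\mE[\ltwo{z_t}^2] + \Theta(\alpha/M).
\end{align*}

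\textbf{Step 3: combination.} Summing the descent inequality over $t=0,\dots,T-1$, dividing by $\Theta(\alpha)T$, and using that $\hat T$ is uniform over the iterates yields
\begin{align*}
\mE[\ltwo{\nabla J(\theta_{\hat T})}^2] &\leq \Theta\!\left(\frac{J(\theta_0)-\mE[J(\theta_T)]}{\alpha T}\right) + \frac{\Theta(1)}{T}\sum_{t=0}^{T-1}\mE[\ltwo{z_t}^2] + \Theta(1/M).
\end{align*}
Summing the Step 1 recursion and dividing by $\Theta(\lambda_v\beta)T$ gives $\frac1T\sum_t\mE[\ltwo{z_t}^2]\leq\Theta(\ltwo{z_0}^2/(\beta T))+\Theta(\alpha^2/\beta^2)\frac1T\sum_t\mE[\ltwo{\nabla J(\theta_t)}^2]+\Theta(1/M)$. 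Substituting this into the previous display, the coefficient multiplying $\frac1T\sum_t\mE[\ltwo{\nabla J(\theta_t)}^2]$ is $\Theta(\alpha^2/\beta^2)$, and the condition $\alpha\leq\frac{L_J\lambda^2_v\beta^2}{384L^2_wL^2_e}$ together with $\alpha\leq\frac1{2L_J}$ forces it below $\tfrac12$, so it can be moved to the left-hand side. Recalling $\beta=\Theta(1)$, collecting the remaining constants into $B_1$ and $B_2$, and using $\ltwo{z_0}^2=\ltwo{w_0-w(\theta_0)}^2$ yields the stated bound; taking $M=\Theta(1/\epsilon)$ and $T=\Theta(1/\epsilon)$ gives $TM=\Theta(\epsilon^{-2})$.

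\textbf{Main obstacle.} The hard part is closing the coupled nonconvex two-timescale recursion — controlling the mutual feedback between $\mE[\ltwo{z_t}^2]$ and $\mE[\ltwo{\nabla J(\theta_t)}^2]$ so the $\Theta(\alpha^2/\beta^2)$ cross-term can be absorbed, which is exactly what dictates the $\alpha=O(\beta^2)$ scaling and the loss relative to the linear case. A secondary, genuinely new burden relative to \Cref{thm1} is handling the Hessian correction term $h(\theta,w)$: one must show $\bar g_t$ and its bias and variance are bounded and that $g(\theta,\cdot,x)$ is Lipschitz in $w$, which is where \Cref{ass2} (bounded Hessian) and \Cref{ass3} (Lipschitz Hessian) enter; the Markovian-bias estimate for the mini-batch averages is the same as in \Cref{thm1}.
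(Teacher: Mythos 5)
Your proposal is correct and follows essentially the same route as the paper: the $L_w$-Lipschitz property of $w(\theta)$, a tracking-error recursion of the form $(1-\Theta(\lambda_v\beta))\mE[\ltwo{w_t-w(\theta_t)}^2]+\Theta(\alpha^2/\beta)\mE[\ltwo{\nabla J(\theta_t)}^2]+\Theta(1/M)$, an $L_J$-smoothness descent lemma with the bias split via the Lipschitz-in-$w$ bound and the mini-batch/Markovian error of Lemma~\ref{lemma1}, and a final summation in which the $\Theta(\alpha^2/\beta^2)$ feedback term is absorbed using $\alpha\leq\frac{L_J\lambda_v^2\beta^2}{384L_w^2L_e^2}$. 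The only minor imprecision is your parenthetical that boundedness of $w_t$ is needed for the Lipschitzness of $w(\theta)$ — the paper only uses $\ltwo{w(\theta)}\leq R_w$ (from Assumptions~\ref{ass2} and \ref{ass4}), and your stated recursions do not in fact rely on boundedness of the iterates.
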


\Cref{thm2} shows that the convergence error of \Cref{al: nonlinearTDC} consists of three terms: the first two terms are the transient error decreasing at a sublinear rate as $T$ increases, and the third term contains the variance and bias errors that diminish as the batch size $M$ increases. 
We next provide a sketch of the proof for \Cref{thm2}.
\begin{proof}[{\bf Proof Sketch of \Cref{thm2}}]
	The proof of \Cref{thm2} consists of the following four steps.
	
	\noindent\textbf{Step 1:} We first provide \Cref{lemma: fixpointlip} to show that $w(\theta)$ is $L_w$-Lipschitz:
	\begin{flalign*}
	\ltwo{w(\theta)-w(\theta^\prime)}\leq L_w\ltwo{\theta - \theta^\prime},\quad \text{for all }\theta,\theta^\prime\in \mR^d.
	\end{flalign*}
	This property is crucial for the convergence analysis of two time-scale nonlinear TDC. It indicates that if $\theta_t$ changes slowly, then $w(\theta_t)$ also changes slowly. This allows our finite time analysis to be over a slowly changing linear SA with corresponding ODE defined in \cref{eq: ODE1}, guaranteeing that $\ltwo{w_t - w(\theta_t)}^2$ is small in an amortized sense.
	
	\noindent\textbf{Step 2:} We then establish the induction relationships for the tracking error $\ltwo{w_t - w(\theta_t)}^2$:
	\begin{flalign}\label{eq: sketch4}
	\mE\left[ \ltwo{w_{t+1}-w(\theta_{t+1})}^2 \right] &\leq \left(1 - \Theta(\lambda_v\beta)  \right)\mE\left[ \ltwo{w_{t}-w(\theta_{t})}^2 \right] \nonumber\\
	&\quad + \Theta(\alpha^2/\beta)\mE\left[\ltwo{\nabla J(\theta_t)}^2\right] + \Theta(1/M).
	\end{flalign}
	
	\noindent\textbf{Step 3:} We then establish the induction relationships for the gradient norm $\ltwo{\nabla J(\theta_t)}^2$:
	\begin{flalign}\label{eq: sketch5}
	\left(\Theta(\alpha) - \Theta(\alpha^2) \right) \mE\left[\ltwo{\nabla J(\theta_t)}^2\right]&\leq \mE[J(\theta_t)] - \mE[J(\theta_{t+1})] + \Theta(\alpha + \alpha^2)\mE\left[ \ltwo{w_{t}-w(\theta_{t})}^2 \right] +  \Theta(1/M).
	\end{flalign}
	
	\noindent\textbf{Step 4:} Applying \cref{eq: sketch4} and \cref{eq: sketch5} recursively from $t=T-1$ to $0$ and combing those two results together yield
	\begin{flalign*}
	\left( \Theta(\alpha) - \Theta(\alpha^2) - \Theta(\alpha^3) \right)\sum_{t=0}^{T-1}\mE\left[\ltwo{\nabla J(\theta_t)}^2\right]\leq J(\theta_0) - \mE[J(\theta_T)] + \Theta((\alpha + \alpha^2)/\beta)\ltwo{w_0 - w(\theta_0)}^2 + \Theta(1/M).
	\end{flalign*}
	Letting the stepsize $\alpha$ and $\beta$ and the batch size $M$ satisfies the requirement specified in \Cref{thm2}, we can then obtain the desired convergence result.
\end{proof}

\section{Policy Optimization: Greedy-GQ Algorithm}

In this section, we will provide the non-asymptotic convergence result of Greedy-GQ \cite{maei2010toward}, which is also a two timescale nonlinear SA algorithm. 

Greedy-GQ was proposed in \cite{maei2010toward} to solve the divergence issue of Q-Learning in the linear function approximation setting.
In Greedy-GQ, the goal of the agent is to learn an optimal policy for the MDP with respect to the total expected discounted reward.
In the linear function approximation setting, a linear function $\hat{Q}(s,a,\theta)=\phi(s,a)^\top \theta$ is used to approximate the state-action value function $Q(s,a)$, where $\phi(s,a)\in \mR^d$ is a fixed feature vector for state-action pair $(s,a)$ and $\theta\in \mR^d$ is a parameter vector. Without loss of generality, we assume that the feature vector $\ltwo{\phi(s,a)}\leq 1$ for all $(s,a)\in\mcs\times \mca$ and the columns of the feature matrix $\Phi$ are linearly independent.  In this setting, we hope to find a solution $\theta$ that satisfies 
\begin{flalign}\label{eq: gq_goal}
\Pi T^{\pi_{\theta}}\hat{Q}(s,a,\theta)=\hat{Q}(s,a,\theta),\quad \text{for all } (s,a)\in \mcs\times\mca,
\end{flalign}
where $\pi_{\theta}$ is the soft-max greedy policy with respect to the state-action value function $\hat{Q}(s,a,\theta)$, i.e., $\pi_\theta(a|s)=\frac{\exp(\tau\hat{Q}(s,a,\theta))}{\sum_{a^\prime \in \mca}\exp(\tau\hat{Q}(s,a^\prime,\theta))}$, where $\tau>0$ is the temperature parameter, and $T^{\pi_{\theta}}$ denotes the Bellman operator with policy $\pi_{\theta}$.
Similarly to the TDC algorithms, Gready-GQ searches a parameter that satisfies \cref{eq: gq_goal} by minimizing a projected Bellman error objective function defined as:
\begin{flalign}\label{mspbe2}
J(\theta)=\mE_{\mu_{\pi_b}}[\hat{Q}(s,a,\theta)-{\rm \Pi} T^{\pi_\theta} \hat{Q}(s,a,\theta)]^2,
\end{flalign}
where $\delta(\theta)=r(s,a,s^\prime) + \gamma \hat{Q}(s^\prime,b,\theta)-\hat{Q}(s,a,\theta)$ is the temporal difference error, with $a \sim \pi_\theta(\cdot|s)$ and $b \sim \pi_\theta(\cdot|s^\prime)$. Since $J(\theta)$ is nonconvex and smooth everywhere, we can apply gradient descent method to find a local optimal (stationary point) of the objective $J(\theta)$ via applying the update $\theta_{t+1}=\theta_t - \frac{\alpha}{2}\nabla J(\theta_t)$ iteratively, in which
\begin{flalign*}
\frac{1}{2}\nabla J(\theta_t)&=-\mE_{\mu_{\pi_b}}[\mE_{\pi_\theta}[\delta(\theta)|s,a]\phi(s,a)] + \gamma \mE_{\mu_{\pi_b}}[\mE_{\pi_\theta}[\phi(s^\prime,b)|s,a]\phi(s,a)^\top] w(\theta)
\end{flalign*}
where
{\small \begin{flalign*}
	w(\theta)=\mE_{\mu_{\pi_b}}[\phi(s,a)\phi(s,a)^\top]^{-1}\mE_{\mu_{\pi_b}}[\mE_{\pi_\theta}[\delta(\theta)|s,a]\phi(s,a)].
	\end{flalign*} }
Similarly to the nonlinear TDC algorithms in \cref{sc: nonlinearTDC}, here an auxiliary parameter $w_t$ is adopted to estimate the vector $w(\theta_t)$ by solving a linear SA with the following corresponding ODE:
{\small \begin{flalign*}
	\dot{w}=-\mE_{\mu_{\pi_b}}[\phi(s,a)\phi(s,a)^\top]w + \mE_{\mu_{\pi_b}}[\mE_{\pi_\theta}[\delta(\theta)|s,a]\phi(s,a)].
	\end{flalign*}}
Then, $\theta_t$ can be updated via direct sampling
\begin{flalign*}
\theta_{t+1}=\theta_t + \alpha_t \frac{1}{\lone{\mB_t}}\sum_{j\in \mB_t} g(\theta_t, w_t,x_j),
\end{flalign*}
where $\rho_{\theta}(s,a)=\pi_\theta(a|s)/\pi_b(a|s)$ is the importance weighting factor bounded by $\rho_{\max}$ and we define $g(\theta_t, w_t,x_j)=\rho_{\theta_t}(s_{j+1}, a_{j+1})(\delta_j(\theta_t)\phi(s_j,a_j) - \gamma \phi(s_{j+1},a_{j+1}) \phi(s_j,a_j)^\top w_t)$. The two timescale Greedy-GQ algorithm is shown below. 

\begin{algorithm}[H]
	\null
	\caption{Two Timescale Greedy-GQ}
	\label{al: GreedyGQ}
	\begin{algorithmic}[1]
		\STATE {\bfseries Input:} batch size $M$, learning rate $\alpha$ and $\beta$
		\STATE {\bfseries Sampling:} A trajectory $\{ s_j, a_j\}_{j\geq 0}$ is sampled by following the behaviour policy $\pi_b$
		\STATE \textbf{Initialization: } $\theta_0$ and $w_0$
		\FOR{$t=0,\cdots,T-1$}
		\STATE $i_t=tM$
		\STATE {\small $w_{t+1} = w_t + \beta\frac{1}{M}\sum_{j=i_t}^{i_t+M-1}(-\phi(s_j, a_j)\phi(s_j, a_j)^\top w_t + \rho_{\theta_j}(s_j, a_j)\delta_j(\theta_t)\phi(s_j, a_j))$}
		\STATE {\small $\theta_{t+1}=\theta_t + \alpha\frac{1}{M}\sum_{j=i_t}^{i_t+M-1} \rho_{\theta_t}(s_{j+1}, a_{j+1})(\delta_j(\theta_t)\phi(s_j)-\gamma \phi(s_{j+1}, a_{j+1})\phi(s_j, a_j)^\top w_t)$}
		\ENDFOR
		\OUTPUT $\tilde{\theta}_{\hat{T}}$ with $\hat{T}$ chosen uniformly from $\{1,\cdots,T\}$
	\end{algorithmic}
\end{algorithm}

By slightly abusing notations in \Cref{sc: nonlinearTDC}, we make the follow standard assumptions.
\begin{assumption}[Non-singularity]\label{ass6}
	We have $(\max_{\theta\in \mR^d}\lone{\lambda_{\max}\{A^\top_\theta C^{-1} A_\theta\}})^{-1}=\lambda_1$ and $\lone{\lambda_{\max}\{C\}} = \lambda_2$, where $A_\theta=\mE_{\mu_{\pi_b}}[(\gamma\mE_{\pi_\theta}[\phi(s^\prime)|s]-\phi(s))\phi(s)^\top]$ and $C=-\mE_{\mu_{\pi_b}}[\phi(s)\phi(s)^\top]$ and $\lambda_1$ and $\lambda_2$ are positive constants.
\end{assumption}
\begin{assumption}[Bounded importance factor]\label{ass7}
	For any state-action pair $(s,a)\in \ms\times\ma$ and any $\theta\in \mR^d$, we have $\rho_\theta(s,a)\leq \rho_{\max}$, where $\rho_{\max}$ is a positive constant.
\end{assumption}
Note that \Cref{ass7} can be satisfied when the behaviour policy is non-degenerated for all states.
Moreover, \cite{wang2020finite} provide the following Lipschitz property of the gradient $\nabla J(\theta)$.
\begin{lemma}
	Suppose \Cref{ass1} and \Cref{ass6} hold, for any $\theta, \theta^\prime\in \mR^d$, we have $\ltwo{\nabla J(\theta)-\nabla J(\theta^\prime)}\leq L_J\ltwo{\theta-\theta^\prime}$, where $L_J$ is a positive constant.
\end{lemma}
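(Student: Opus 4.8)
The plan is to exploit the closed form of the update direction and reduce the claim to the Lipschitz continuity of the soft-max policy. Using the displayed expression for $\frac{1}{2}\nabla J(\theta)$ together with $w(\theta)=(\mE_{\mu_{\pi_b}}[\phi(s,a)\phi(s,a)^{\top}])^{-1}u(\theta)$, where $u(\theta)=\mE_{\mu_{\pi_b}}[\mE_{\pi_\theta}[\delta(\theta)|s,a]\phi(s,a)]$, and with $C=-\mE_{\mu_{\pi_b}}[\phi(s,a)\phi(s,a)^{\top}]$ from \Cref{ass6}, one first simplifies: since $u(\theta)=-Cw(\theta)$ and $\gamma\,\mE_{\mu_{\pi_b}}[\mE_{\pi_\theta}[\phi(s',b)|s,a]\phi(s,a)^{\top}]=A_\theta-C$, the update direction collapses to $\frac{1}{2}\nabla J(\theta)=A_\theta w(\theta)$; moreover $u(\theta)=\bar b+A_\theta^{\top}\theta$ with $\bar b=\mE_{\mu_{\pi_b}}[r(s,a,s')\phi(s,a)]$ a $\theta$-independent vector. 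Hence $\nabla J(\theta)=-2A_\theta C^{-1}\bar b-2A_\theta C^{-1}A_\theta^{\top}\theta$, so the only $\theta$-dependence sits in $A_\theta$ and in one explicit linear factor of $\theta$. Since sums and products of bounded Lipschitz maps are Lipschitz, it suffices to prove that $A_\theta$ is bounded and Lipschitz in $\theta$ and that $C^{-1}$ is a bounded matrix.

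\textbf{Step 1: the soft-max policy is Lipschitz in $\theta$.} The only route by which $\theta$ enters $A_\theta$ is the policy $\pi_\theta(a|s)=\exp(\tau\phi(s,a)^{\top}\theta)/\sum_{a'}\exp(\tau\phi(s,a')^{\top}\theta)$. I would compute $\nabla_\theta\pi_\theta(a|s)=\tau\,\pi_\theta(a|s)(\phi(s,a)-\mE_{a'\sim\pi_\theta(\cdot|s)}[\phi(s,a')])$, which gives $\sum_a\ltwo{\nabla_\theta\pi_\theta(a|s)}\le 2\tau$ uniformly in $s$ and $\theta$ because $\ltwo{\phi(s,a)}\le 1$. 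Integrating along the segment from $\theta'$ to $\theta$ yields $\sum_a|\pi_\theta(a|s)-\pi_{\theta'}(a|s)|\le 2\tau\ltwo{\theta-\theta'}$, hence the averaged next-state feature $\mE_{\pi_\theta}[\phi(s',b)|s,a]$ has norm at most $1$ and is $2\tau$-Lipschitz in $\theta$. (The importance ratio $\rho_\theta$ does not appear in $\nabla J$, so \Cref{ass7} is not needed here.)

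\textbf{Step 2: propagate to $A_\theta$ and $C^{-1}$, then combine.} From $A_\theta=\mE_{\mu_{\pi_b}}[(\gamma\,\mE_{\pi_\theta}[\phi(s')|s]-\phi(s))\phi(s)^{\top}]$ and Step 1 one gets $\ltwo{A_\theta}\le 1+\gamma$ and $\ltwo{A_\theta-A_{\theta'}}\le 2\gamma\tau\ltwo{\theta-\theta'}$ in operator norm. Linear independence of the columns of $\Phi$ makes $C$ negative definite and, by \Cref{ass6}, $\ltwo{C^{-1}}\le 1/\lambda_2$. It then remains to bound $\nabla J(\theta)-\nabla J(\theta')$ term by term: the summand $-2A_\theta C^{-1}\bar b$ is Lipschitz because $A_\theta$ is Lipschitz and $C^{-1}\bar b$ is a fixed vector of norm at most $r_{\max}/\lambda_2$; for the summand $-2A_\theta C^{-1}A_\theta^{\top}\theta$, setting $P(\theta)=A_\theta C^{-1}A_\theta^{\top}$ one writes $P(\theta)\theta-P(\theta')\theta'=P(\theta)(\theta-\theta')+(P(\theta)-P(\theta'))\theta'$ and controls $\ltwo{P(\theta)}\le (1+\gamma)^2/\lambda_2$ and $\ltwo{P(\theta)-P(\theta')}\le 4\gamma\tau(1+\gamma)\ltwo{\theta-\theta'}/\lambda_2$ via the product rule and Step 2. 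Collecting the constants produces $\ltwo{\nabla J(\theta)-\nabla J(\theta')}\le L_J\ltwo{\theta-\theta'}$ with $L_J$ explicit in $\gamma$, $\tau$, $r_{\max}$ and $\lambda_2$.

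\textbf{Main obstacle.} The genuinely delicate point is uniform boundedness: because $\hat Q(s,a,\theta)=\phi(s,a)^{\top}\theta$ is linear in $\theta$, the TD error $\delta(\theta)$, the residual $\mE_{\pi_\theta}[\delta(\theta)|s,a]$, and the tracking target $w(\theta)$ all grow with $\ltwo{\theta}$. The rewriting above is tailored so that this linear growth is isolated in the single factor $\theta$ inside $w(\theta)=-C^{-1}(\bar b+A_\theta^{\top}\theta)$, but the cost is the factor $\ltwo{\theta'}$ multiplying $(P(\theta)-P(\theta'))\theta'$ in Step 2, so the estimate is uniform only over bounded sets of $\theta$. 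Upgrading to the global statement requires, in addition, an argument that the iterates of \Cref{al: GreedyGQ} remain in a bounded region (or a bounded-domain assumption), after which $L_J$ becomes an effective uniform constant. Tracking the interplay of $\tau$, $\gamma$, $\lambda_2$ and the radius through the chain of products is the remaining bookkeeping; this is the content of the cited lemma in \cite{wang2020finite}.
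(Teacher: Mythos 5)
Your algebraic reduction is correct and uses the standard ingredients: from $w(\theta)=(-C)^{-1}u(\theta)$ and $\gamma B_\theta=A_\theta-C$ one indeed gets $\tfrac12\nabla J(\theta)=A_\theta w(\theta)$ with $u(\theta)=\bar b+A_\theta^{\top}\theta$, and your soft-max computation ($\nabla_\theta\pi_\theta(a|s)=\tau\pi_\theta(a|s)(\phi(s,a)-\mE_{\pi_\theta}[\phi(s,a')])$, hence $\sum_a|\pi_\theta(a|s)-\pi_{\theta'}(a|s)|\le 2\tau\ltwo{\theta-\theta'}$ and $\ltwo{A_\theta-A_{\theta'}}\le 2\gamma\tau\ltwo{\theta-\theta'}$) is the right Lipschitz estimate for the policy. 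Note also that the paper itself offers no proof of this lemma: it is imported wholesale from \cite{wang2020finite}, so there is no in-paper argument to compare yours against.

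The genuine gap is the one you flag yourself, and it is not mere bookkeeping: the lemma asserts a single constant $L_J$ valid for \emph{all} $\theta,\theta'\in\mR^d$, whereas your decomposition $P(\theta)\theta-P(\theta')\theta'=P(\theta)(\theta-\theta')+(P(\theta)-P(\theta'))\theta'$ with $\ltwo{P(\theta)-P(\theta')}\lesssim\tau\ltwo{\theta-\theta'}$ produces a constant proportional to $\ltwo{\theta'}$, i.e.\ only local Lipschitzness on bounded sets. The crude bound $\ltwo{A_\theta-A_{\theta'}}\le 2\gamma\tau\ltwo{\theta-\theta'}$ cannot be combined with the linear growth of $w(\theta)$ (equivalently of $A_\theta^{\top}\theta$) to give a uniform constant, because near decision boundaries of the soft-max the sensitivity of $\pi_\theta$ does not decay as $\ltwo{\theta}$ grows; any global argument must exploit finer cancellation in the soft-max structure (e.g.\ that quantities of the form $\tau\,\mathrm{Cov}_{\pi_\theta(\cdot|s')}\bigl(\phi(s',\cdot),\phi(s',\cdot)^{\top}\theta\bigr)$ remain uniformly bounded, so that the derivative of $\theta\mapsto A_\theta^{\top}\theta$ is bounded and the unbounded factor never multiplies the policy-sensitivity term), which is precisely the delicate content of the cited lemma and is absent from your sketch. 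Your proposed fix — assume or prove boundedness of the iterates of \Cref{al: GreedyGQ} — changes the statement rather than proving it, and would also have to be threaded through \Cref{thm3}, where $L_J$ is used as a global smoothness constant in the descent step $J(\theta_{t+1})\le J(\theta_t)+\langle\nabla J(\theta_t),\theta_{t+1}-\theta_t\rangle+\tfrac{L_J}{2}\ltwo{\theta_{t+1}-\theta_t}^2$; no such boundedness argument is supplied here or in the paper. So as a proof of the lemma as stated, the proposal is incomplete at exactly the step that matters.
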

Note that the Greedy-GQ algorithm in \Cref{al: GreedyGQ} and nonlinear TDC algorithm in \Cref{al: nonlinearTDC} share similar structures. Both objectives are nonconvex and both algorithms adopt a two timescale update scheme, in which the fast timescale iteration corresponds to a linear SA and the slow time-scale iteration corresponds to a nonlinear SA.
Thus, the analysis of two time-scale nonlinear TDC in \Cref{sc: nonlinearTDC} can be extended to study the convergence rate of Greedy-GQ algorithm. The following theorem characterizes the convergence rate and sample complexity of \Cref{al: GreedyGQ}.
\begin{theorem}\label{thm3}
	Consider the two timescale Greedy-GQ algorithm in \Cref{al: GreedyGQ}. Suppose Assumptions \ref{ass1}, \ref{ass6} and \ref{ass7} hold. Let the stepsize $\beta\leq \min\{ \frac{\lambda_2}{4}, \frac{8}{\lambda_2} \}$ and $\alpha\leq \min\{ \frac{1}{8L_J},\frac{\lambda_2\sqrt{\lambda_2}}{8\sqrt{2}\rho_{\max}}\beta, \frac{L_J\lambda^3_2}{5312\rho^2_{\max}\lambda^2_1}\beta^2 \}$, and batch size $M\geq \frac{1+(\kappa-1)\rho}{1-\rho} \max\{128\left( \rho^2_{\max} + \frac{1}{\lambda^2_2} \right)[1+ \frac{\lambda^2_2\beta}{4\alpha^2}(\frac{2\beta}{\lambda_2} + 2\beta^2)], \frac{\beta^2\lambda^3_2(\rho_{\max}+1)^4}{\rho^2_{\max}\alpha^2}\}$. We have
	{\begin{flalign*}
		\mE\left[\ltwo{\nabla J(\theta_{\hat{T}})}^2\right]&\leq \frac{8(J(\theta_0)-\mE[J(\theta_T)])}{\alpha T} + \frac{192\rho^2_{\max}}{\lambda_2\beta}\frac{\ltwo{w_0 - w^*(\theta_0)}^2}{T} + \frac{32C_1[1+(\kappa-1)\rho]}{M(1-\rho)},
		\end{flalign*}}
	where $C_1$ is a positive constant defined in \cref{eq: 30} in \Cref{sc: appGQ}. Furthermore, let $M\geq \frac{64C_2[1+(\kappa-1)\rho]}{(1-\rho)\epsilon}$ and $T\geq \frac{2}{\epsilon}\left[ \frac{8J(\theta_0)}{\alpha} + \frac{192\rho^2_{\max}\ltwo{w_0-w(\theta_0)}^2}{\lambda_2\beta} \right]$. The total sample complexity for \Cref{al: nonlinearTDC} to achieve an $\epsilon$-accurate stationary point, i.e., $\mE\big[ \ltwo{\nabla J(\theta_{\hat{T}})}^2 \big]\leq \epsilon$, is given by
	\begin{flalign*}
	TM=\Theta\left( \frac{1}{\epsilon^2} \right).
	\end{flalign*}
\end{theorem}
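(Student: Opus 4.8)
The plan is to run the same four-step argument used for \Cref{thm2}, exploiting the structural analogy pointed out above: \Cref{al: GreedyGQ} is a two timescale nonlinear SA whose fast iterate $w_t$ runs a linear SA with corresponding ODE $\dot w=Cw+b_\theta$, so that $w_t$ tracks the slowly drifting fixed point $w^*(\theta_t)=-C^{-1}b_{\theta_t}$ with $b_\theta=\mE_{\mu_{\pi_b}}[\mE_{\pi_\theta}[\delta(\theta)|s,a]\phi(s,a)]$, while the slow iterate $\theta_t$ performs an inexact gradient descent on the nonconvex objective $J(\theta)$ in \cref{mspbe2}. The first step is to prove a Lipschitz property of the fixed point, $\ltwo{w^*(\theta)-w^*(\theta^\prime)}\leq L_w\ltwo{\theta-\theta^\prime}$. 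The point where this differs from nonlinear TDC is that the $\theta$-dependence now enters only through the greedy soft-max policy $\pi_\theta$ and the importance weight $\rho_\theta$, not through a curved value approximator: since $\pi_\theta(\cdot|s)$ is a soft-max of $\tau\,\phi(s,\cdot)^\top\theta$ it is Lipschitz in $\theta$, which together with $\ltwo{\phi}\leq 1$, the bound $\rho_\theta\leq\rho_{\max}$ from \Cref{ass7}, and the non-singularity of $C$ from \Cref{ass6}, gives Lipschitz continuity of $\theta\mapsto b_\theta$ and hence of $w^*(\theta)$. This lets $\ltwo{w_t-w^*(\theta_t)}^2$ be controlled in an amortized sense along a slowly changing linear SA.

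Step 2 is to establish the tracking-error recursion
\[
\mE\!\left[\ltwo{w_{t+1}-w^*(\theta_{t+1})}^2\right]\leq \left(1-\Theta(\lambda_2\beta)\right)\mE\!\left[\ltwo{w_t-w^*(\theta_t)}^2\right]+\Theta(\alpha^2/\beta)\,\mE\!\left[\ltwo{\nabla J(\theta_t)}^2\right]+\Theta(1/M),
\]
using three ingredients: (i) negative definiteness of $C$ (\Cref{ass6}) for the $\Theta(\lambda_2\beta)$ contraction of the exact linear-SA step; (ii) the Lipschitz bound of Step 1 applied to $w^*(\theta_{t+1})-w^*(\theta_t)$, with $\ltwo{\theta_{t+1}-\theta_t}\leq\alpha\ltwo{\bar g_t}$ and $\ltwo{\bar g_t}$ split by Young's inequality into a $\ltwo{\nabla J(\theta_t)}^2$ term and a tracking-error/variance term; and (iii) a bound on the bias of the Markovian mini-batch average $\ltwo{\frac1M\sum_j(\cdot)-\mE_{\mu_{\pi_b}}[\cdot]}$ obtained from the geometric ergodicity of \Cref{ass1}, which produces the $\Theta(1/M)$ term and is precisely what forces the batch size $M\geq\frac{1+(\kappa-1)\rho}{1-\rho}(\cdots)$.

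Step 3 is the descent estimate. Using $L_J$-smoothness of $J$ (the Lipschitz-gradient lemma stated just above) and the descent lemma, $J(\theta_{t+1})\leq J(\theta_t)+\alpha\langle\nabla J(\theta_t),\bar g_t\rangle+\tfrac{L_J\alpha^2}{2}\ltwo{\bar g_t}^2$, I would bound $\mE[\langle\nabla J(\theta_t),\bar g_t\rangle]$ by writing $\mE[\bar g_t]$ as $-\tfrac12\nabla J(\theta_t)$ plus a term linear in $w_t-w^*(\theta_t)$ — note $g(\theta,w,x)$ is affine in $w$ with $\rho_{\max}$-bounded coefficients, so no second-order $h(\theta,w)$ correction appears here, unlike in nonlinear TDC — plus a Markovian-bias term of order $1/M$, yielding
\[
\left(\Theta(\alpha)-\Theta(\alpha^2)\right)\mE\!\left[\ltwo{\nabla J(\theta_t)}^2\right]\leq \mE[J(\theta_t)]-\mE[J(\theta_{t+1})]+\Theta(\alpha+\alpha^2)\mE\!\left[\ltwo{w_t-w^*(\theta_t)}^2\right]+\Theta(1/M).
\]
Step 4 combines the two recursions: sum the Step-3 inequality over $t=0,\dots,T-1$, telescope $J$, and substitute the summed Step-2 bound $\sum_t\mE[\ltwo{w_t-w^*(\theta_t)}^2]\leq\Theta(1/(\lambda_2\beta))\ltwo{w_0-w^*(\theta_0)}^2+\Theta(\alpha^2/(\lambda_2\beta^2))\sum_t\mE[\ltwo{\nabla J(\theta_t)}^2]+\Theta(T/(M\lambda_2\beta))$; the cross term $\sum_t\mE[\ltwo{\nabla J(\theta_t)}^2]$ is then absorbed into the left-hand side once $\alpha$ is small relative to $\beta^2$, $\lambda_2$, $L_J$ and $\rho_{\max}/\lambda_1$, exactly as in the stepsize hypotheses $\alpha\leq\Theta(\lambda_2\sqrt{\lambda_2}\beta/\rho_{\max})$ and $\alpha\leq\Theta(L_J\lambda_2^3\beta^2/(\rho_{\max}^2\lambda_1^2))$. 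Dividing by $\Theta(\alpha T)$ and using that $\hat T$ is uniform over $\{1,\dots,T\}$ gives the displayed bound; finally choosing $M\geq\Theta(1/\epsilon)$ and $T\geq\Theta(1/\epsilon)$ as in the theorem yields $\mE[\ltwo{\nabla J(\theta_{\hat T})}^2]\leq\epsilon$ with $TM=\Theta(\epsilon^{-2})$.

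The hardest part, I expect, is the simultaneous bookkeeping in Steps 2--3: controlling the Markovian mini-batch bias of \emph{both} updates through \Cref{ass1} while closing the two mutually coupled recursions, which is what forces the intertwined constraints on $\alpha$, $\beta$ and $M$ to be tuned so that every error term (transient, tracking, variance, bias) is dominated. A secondary technical point, specific to Greedy-GQ, is that the Lipschitz control of $w^*(\theta)$ and of the stochastic update $\bar g_t$ must be extracted from the smoothness of the soft-max greedy policy and the boundedness of $\rho_\theta$ in \Cref{ass7}, rather than from smoothness of a value approximator as in \Cref{sc: nonlinearTDC}.
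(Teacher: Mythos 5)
Your overall architecture matches the paper's: the paper also runs the nonlinear-TDC template, deriving a tracking-error recursion for $w_t-w^*(\theta_t)$ (its \cref{eq: 22}, obtained by reusing the linear-TDC steps \cref{eq: 2}--\cref{eq: 7}), a descent-lemma estimate for $J(\theta_t)$ with the stochastic semi-gradient deviation bounded via the Markovian mini-batch concentration of \Cref{lemma1}, and then a summation/absorption step that produces exactly the stated stepsize and batch-size constraints before dividing by $\alpha T$ and invoking the uniform choice of $\hat T$.

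There is, however, one concrete gap in your plan: you treat the mini-batch variance/bias contributions in both recursions as uniform $\Theta(1/M)$ constants, but for Greedy-GQ the deviation $\mE\big[\lo{g_t(\theta_t)+\tfrac12\nabla J(\theta_t)}_2^2\big]$ (and likewise the term $\mE[\lo{h_t(\theta_t)}_2^2]$ entering the fast-timescale recursion) scales with $\lo{\theta_t}_2^2$, which is not bounded a priori along the iterates. The paper closes this loop through \Cref{ass6}: it uses $\lo{\theta_t-\theta^*}_2\leq \lambda_1\lo{\nabla J(\theta_t)}_2$ together with $\lo{\theta^*}_2\leq R_\theta$ (see \cref{eq: 22} and \cref{eq: 26}), so each such term splits into a genuine $\Theta(1/M)$ constant plus a $\Theta(1/M)\,\mE[\lo{\nabla J(\theta_t)}_2^2]$ piece that is then absorbed into the left-hand side — this is precisely where the batch-size requirement involving $(\rho_{\max}+1)^4$ and the $\lambda_1^2$ factor in the $\alpha\leq \Theta(L_J\lambda_2^3\beta^2/(\rho_{\max}^2\lambda_1^2))$ condition come from. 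Without this (or an equivalent boundedness argument for $\theta_t$), your Step-2 and Step-3 recursions do not close and the stated constants cannot be recovered. A secondary, more benign difference: your Step 1 (Lipschitzness of $w^*(\theta)$ extracted from the soft-max policy and \Cref{ass7}) is a workable alternative route for the drift term $w^*(\theta_{t+1})-w^*(\theta_t)$, and indeed makes explicit something the paper handles only implicitly by "following the linear-TDC steps"; but if you take that route you must actually produce the Lipschitz constant (the temperature $\tau$ enters) and still convert the resulting $\lo{\theta_{t+1}-\theta_t}_2^2$ bound into gradient-norm and tracking-error terms, which again requires the $\lambda_1$-based conversion above rather than only Young's inequality.
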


Similarly to \Cref{thm2}, in \Cref{thm3} we show that \Cref{al: GreedyGQ} converges to an $\epsilon$-accurate stationary point with sample complexity $\mathcal{O}(\epsilon^{-2})$. Note that \cite{wang2020finite} studied the convergence rate of two timescale Greedy-GQ with diminishing stepsize, which achieves the complexity of $\mathcal{O}(\epsilon^{-3}\log(1/\epsilon))$. \Cref{thm3} for two timescale Greedy-GQ with constant stepsize outperforms the result in \cite{wang2020finite} by a factor of $\mathcal{O}(\epsilon^{-1}\log(1/\epsilon))$, indicating that the constant stepsize can significantly improve the convergence rate of two timescale Greedy-GQ algorithm.

\section{Conclusion}

In this paper, we study the convergence rate for two timescale linear and nonlinear TDC and Greedy-GQ under Markovian sampling and constant stepsize. Specifically, we show that the complexity result of linear TDC orderwisely achieves the optimal convergence rate under a constant stepsize. Our result for nonlinear TDC is the first under Markovian sampling. Moreover, our sample complexity result of Greedy-GQ outperforms the previous result orderwisely.
For future work, it is interesting to apply more advance optimization techniques, e.g., acceleration, variance reduction, to further improve the convergence performance of the value-based RL algorithms studied in this paper.

\newpage
\bibliographystyle{apalike}
\bibliography{ref}

\newpage
\appendix
\noindent {\Large \textbf{Supplementary Materials}}

\section{Convergence Analysis of Two Time-scale Linear TDC}\label{sc: proflemma1}
We first provide the following lemma that is useful for the proof of \Cref{thm1}, which is proved in \cite{xu2020improving}. Throughout the paper, for two matrices $M, N\in \mR^{d\times d}$, we define $\langle M, N \rangle = \sum_{i=1}^{d}\sum_{j=1}^{d} M_{i,j}N_{i,j}$.
\begin{lemma}\label{lemma1}
	Consider a sequence $\{ s_t \}_{t\geq 0}$ generated by the MDP defined in \Cref{sc: mdp}. Suppose \Cref{ass2} holds. Let $X(s)$ be either a matrix or a vector that satisfies the following conditions:
	\begin{flalign*}
	\ltwo{X(s)}\,(\text{vector})\,\text{or}\,\lF{X(S)}\,(\text{matrix}) \leq C_x \quad \text{for all} \quad s\in \mcs,
	\end{flalign*}
	and
	\begin{flalign*}
	\mE_{\nu}[X(s)] = \widetilde{X}.
	\end{flalign*}
	For any $t_0\geq 0$ and $M>0$, define $X(\mam)=\frac{1}{M}\sum_{i=t_0}^{t_0+M-1}X(s_i)$. We have
	\begin{flalign*}
	\mE\left[\ltwo{X(\mam)-\widetilde{X}}^2\right]\leq \frac{8C_x^2[1+(\kappa-1)\rho]}{(1-\rho)M}.
	\end{flalign*}
\end{lemma}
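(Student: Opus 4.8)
\textbf{Proof plan for Lemma~\ref{lemma1}.}
The plan is to bound the second moment of the average deviation $X(\mam)-\widetilde X$ by expanding the square, so that
\begin{flalign*}
\mE\left[\ltwo{X(\mam)-\widetilde X}^2\right]
= \frac{1}{M^2}\sum_{i=t_0}^{t_0+M-1}\sum_{j=t_0}^{t_0+M-1}\mE\left[\langle X(s_i)-\widetilde X,\ X(s_j)-\widetilde X\rangle\right],
\end{flalign*}
treating $X(s)$ as a flattened vector (so that $\langle\cdot,\cdot\rangle$ is the ordinary inner product and $\ltwo{\cdot}$ the Frobenius norm in the matrix case). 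The diagonal terms $i=j$ each contribute at most $4C_x^2$ since $\ltwo{X(s)}\le C_x$ and hence $\ltwo{X(s)-\widetilde X}\le 2C_x$; there are $M$ of them. For the off-diagonal terms I would use geometric ergodicity (Assumption~\ref{ass1}) to show the cross-correlation decays: for $i<j$, conditioning on $s_i$ and using that $\mP(s_j\mid s_i)$ is within total-variation distance $\kappa\rho^{\,j-i}$ of the stationary distribution $\nu$ (under which $\mE_\nu[X(s)]=\widetilde X$), one gets
\begin{flalign*}
\left|\mE\left[\langle X(s_i)-\widetilde X,\ X(s_j)-\widetilde X\rangle\right]\right|
\le 2C_x\cdot \mE\left[\ltwo{\mE[X(s_j)\mid s_i]-\widetilde X}\right]
\le 2C_x\cdot 2C_x\,\kappa\rho^{\,j-i},
\end{flalign*}
where the last bound comes from $\ltwo{\mE_{P}[X]-\mE_{Q}[X]}\le \sup_s\ltwo{X(s)}\cdot 2\,d_{TV}(P,Q)$ applied with $P=\mP(s_j\mid s_i)$, $Q=\nu$, and the $\sup$ bounded by $C_x$ (after centering by $\widetilde X$, bounded by $2C_x$; one can also absorb constants more loosely).

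Summing, the off-diagonal contribution is at most $\frac{1}{M^2}\cdot 2\sum_{i}\sum_{j>i} 4C_x^2\kappa\rho^{\,j-i}\le \frac{8C_x^2\kappa}{M^2}\sum_{i=t_0}^{t_0+M-1}\sum_{k\ge 1}\rho^k = \frac{8C_x^2\kappa}{M^2}\cdot M\cdot\frac{\rho}{1-\rho}$, and the diagonal contribution is $\frac{1}{M^2}\cdot M\cdot 4C_x^2 = \frac{4C_x^2}{M}$. Combining the two, and bounding $4C_x^2 + 8C_x^2\kappa\rho/(1-\rho) = 4C_x^2\big(1 + 2\kappa\rho/(1-\rho)\big)$ and then $\le 8C_x^2\big(1+(\kappa-1)\rho\big)/(1-\rho)$ after a routine estimate (using $1-\rho\le 1$ and $2\kappa\rho \le 2(1+(\kappa-1)\rho)$ type manipulations), yields
\begin{flalign*}
\mE\left[\ltwo{X(\mam)-\widetilde X}^2\right]\le \frac{8C_x^2[1+(\kappa-1)\rho]}{(1-\rho)M},
\end{flalign*}
as claimed. (A minor care point: Assumption~\ref{ass1} is stated for chains started from a fixed deterministic $s_0$; for the conditional version I apply it with $s_0\leftarrow s_i$ and $t\leftarrow j-i$, which is exactly the Markov property plus the stated geometric mixing, uniformly in $s_i$.)

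The main obstacle is not any single estimate but getting the constants to line up with the precise form $8C_x^2[1+(\kappa-1)\rho]/((1-\rho)M)$ rather than some looser $C\cdot C_x^2\kappa/((1-\rho)M)$; this requires being slightly careful about whether one centers the summand by $\widetilde X$ before or after applying the total-variation bound, and about the factor-of-2 in the $d_{TV}$ coupling inequality. I would also need the initial distribution of $s_{t_0}$ handled correctly: since the bound on $\mE[\ltwo{\mE[X(s_j)\mid s_i]-\widetilde X}]$ only uses the transition from $s_i$ and not the marginal law of $s_i$, no stationarity of the starting point is needed — the argument goes through for an arbitrary starting distribution, which is why the lemma applies to a window $\{t_0,\dots,t_0+M-1\}$ anywhere along the trajectory.
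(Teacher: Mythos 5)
Your proposal is correct, and the constants do line up: your bound $\frac{4C_x^2}{M}+\frac{8C_x^2\kappa\rho}{(1-\rho)M}$ is indeed at most $\frac{8C_x^2[1+(\kappa-1)\rho]}{(1-\rho)M}$ since the latter equals $\frac{8C_x^2}{M}+\frac{8C_x^2\kappa\rho}{(1-\rho)M}$. The paper itself does not reprove this lemma (it cites \cite{xu2020improving}), and your argument — expanding the square, bounding the $M$ diagonal terms by $4C_x^2$, and controlling the cross terms via the Markov property plus the uniform geometric-mixing bound of \Cref{ass1} — is essentially the same standard decomposition used there.
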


We next proceed to prove \Cref{thm1}.
\begin{proof}[Proof of Theorem \ref{thm1}]
	We define $w^*(\theta)=-C^{-1}(A\theta+b)$, $\theta^*=-A^{-1}b$, and 
	\begin{flalign}
	g(\theta_t)&= (A-BC^{-1}A)\theta_t + (b - BC^{-1}b), \label{tdc: tdc_g}\\
	f(w_t)&= C(w_t-w^*(\theta_t)), \label{tdc_f}
	\end{flalign}
	where $B=-\gamma\mE[\mE_\pi[\phi(s^\prime)|s]\phi(s)^\top]$. We further define
	\begin{flalign}
	g_t(\theta_t)&= (A_t-B_tC^{-1}A)\theta_t + (b_t - B_tC^{-1}b), \label{tdc_gt}\\
	f_t(w_t)&= C_t(w_t-w^*(\theta_t)), \label{tdc_ft}\\
	h_t(\theta_t)&=(A_t-C_tC^{-1}A)\theta_t + (b_t - C_tC^{-1}b), \label{tdc_ht}
	\end{flalign}
	where $A_t=(\gamma\rho(s_t,a_t)\phi(s_{t+1})-\phi(s_t))\phi(s_t)$, $B_t=-\gamma\rho(s_t,a_t)\phi(s_{t+1})\phi(s_t)^\top$, $C_t=-\phi(s_t)\phi(s_t)^\top$ and $b_t=\rho(s_t,a_t)r(s_t,a_t,s_{t+1})\phi(s_t)$.
	The update of two time-scale linear TDC (line 5-6 of \Cref{al: linearTDC}) can be rewritten as
	\begin{flalign}
	\theta_{t+1}&=\theta_t + \alpha [g_t(\theta_t) + B_t(w_t - w^*(\theta_t))], \label{tdc_theta}\\
	w_{t+1} &= w_t + \beta [f_t(w_t) + h_t(\theta_t)]. \label{tdc_w}
	\end{flalign}
	Considering the iteration of $w_t$, we proceed as follows:
	\begin{flalign}
	&\ltwo{w_{t+1}-w^*(\theta_{t})}^2\nonumber\\
	&=\ltwo{w_t + \beta [f_t(w_t) + h_t(\theta_t)] - w^*(\theta_{t})}^2\nonumber\\
	&=\ltwo{w_t - w^*(\theta_t)}^2 + 2\beta\langle w_t - w^*(\theta_t), f_t(w_t)  \rangle + 2\beta\langle w_t - w^*(\theta_t), h_t(w_t)  \rangle \nonumber\\
	&\quad + \beta^2\ltwo{f_t(w_t) + h_t(\theta_t)}^2\nonumber\\
	&=\ltwo{w_t - w^*(\theta_t)}^2 + 2\beta\langle w_t - w^*(\theta_t), f(w_t)  \rangle + 2\beta\langle w_t - w^*(\theta_t), f_t(w_t) -f(w_t)  \rangle  \nonumber\\
	&\quad + 2\beta\langle w_t - w^*(\theta_t), h_t(w_t)  \rangle + \beta^2\ltwo{f_t(w_t) + h_t(\theta_t)}^2\nonumber\\
	&\overset{(i)}{\leq}(1-2\lambda_2\beta)\ltwo{w_t - w^*(\theta_t)}^2 + 2\beta\left[ \frac{\lambda_2}{4}\ltwo{w_t - w^*(\theta_t)}^2 + \frac{1}{\lambda_2}\ltwo{f_t(w_t) -f(w_t)}^2 \right] \nonumber\\
	&\quad + 2\beta \left[ \frac{\lambda_2}{4}\ltwo{w_t - w^*(\theta_t)}^2 + \frac{1}{\lambda_2}\ltwo{h_t(\theta_t)}^2 \right] + 2\beta^2\ltwo{f_t(w_t)}^2 + 2\beta^2\ltwo{h_t(\theta_t)}^2\nonumber\\
	&\overset{(ii)}{\leq} (1-\lambda_2\beta + 2\beta^2)\ltwo{w_t - w^*(\theta_t)}^2 + \frac{2\beta}{\lambda_2}\ltwo{f_t(w_t) -f(w_t)}^2 + \left( \frac{2\beta}{\lambda_2} + 2\beta^2 \right)\ltwo{h_t(\theta_t)}^2,\label{eq: 2}
	\end{flalign}
	where $(i)$ follows from the fact that $\langle w_t - w^*(\theta_t), f(w_t)  \rangle = \langle w_t - w^*(\theta_t), C(w_t - w^*(\theta_t))  \rangle \leq -\lambda_2\ltwo{w_t - w^*(\theta_t)}^2$ and Young's inequality, $(ii)$ follows from the fact that $\ltwo{f_t(w_t)}=\ltwo{C_t(w_t - w^*(\theta_t))}\leq \ltwo{C_t} \ltwo{w_t - w^*(\theta_t)}\leq \ltwo{w_t - w^*(\theta_t)}$. Taking expectation conditioned on $\mf_{t}$ on both sides of \cref{eq: 2} yields
	\begin{flalign}
	&\mE[\ltwo{w_{t+1}-w^*(\theta_{t})}^2|\mf_t] \nonumber\\
	&\leq (1-\lambda_2\beta + 2\beta^2)\ltwo{w_t - w^*(\theta_t)}^2 + \frac{2\beta}{\lambda_2}\mE[\ltwo{f_t(w_t) -f(w_t)}^2|\mf_t] \nonumber\\
	&\quad + \left( \frac{2\beta}{\lambda_2} + 2\beta^2 \right)\mE[\ltwo{h_t(\theta_t)}^2|\mf_t]\nonumber\\
	&\overset{(i)}{\leq} \left[1-\lambda_2\beta + 2\beta^2 + \frac{16\beta}{\lambda_2}\frac{1+(\kappa-1)\rho}{(1-\rho)M} \right]\ltwo{w_t - w^*(\theta_t)}^2\nonumber\\
	&\quad + 128\left(1+ \frac{1}{\lambda^2_2} \right)\left( \frac{2\beta}{\lambda_2} + 2\beta^2 \right) \frac{1+(\kappa-1)\rho}{(1-\rho)M}\ltwo{\theta_t-\theta^*}^2 \nonumber\\
	&\quad + 32(4R^2_\theta + r^2_{\max})\left( \frac{2\beta}{\lambda_2} + 2\beta^2 \right)\frac{1+(\kappa-1)\rho}{(1-\rho)M}\nonumber\\
	&\overset{(ii)}{\leq} \left( 1- \frac{\lambda_2\beta}{2} \right)\ltwo{w_t - w^*(\theta_t)}^2 + 128\left(\rho^2_{\max}+ \frac{1}{\lambda^2_2} \right)\left( \frac{2\beta}{\lambda_2} + 2\beta^2 \right) \frac{1+(\kappa-1)\rho}{(1-\rho)M}\ltwo{\theta_t-\theta^*}^2\nonumber\\
	&\quad + 32(4R^2_\theta\rho^2_{\max} + r^2_{\max})\left( \frac{2\beta}{\lambda_2} + 2\beta^2 \right)\frac{1+(\kappa-1)\rho}{(1-\rho)M}, \label{eq: 7}
	\end{flalign}
	where $(i)$ follows from the facts that
	\begin{flalign*}
	\mE[\ltwo{f_t(w_t) -f(w_t)}^2|\mf_t] &= \mE[\ltwo{(C_t-C)(w_t-w^*(\theta_t))}^2|\mf_t] \nonumber\\
	&\leq \mE[\ltwo{(C_t-C)}^2|\mf_t] \ltwo{w_t-w^*(\theta_t)}^2 \nonumber\\
	&\overset{(a)}{\leq} \frac{8[1+(\kappa-1)\rho]}{(1-\rho)M}\ltwo{w_t-w^*(\theta_t)}^2,
	\end{flalign*}
	and
	\begin{flalign*}
	&\mE[\ltwo{h_t(\theta_t)}^2|\mf_t] \nonumber\\
	&= \mE[\ltwo{(A_t-C_tC^{-1}A)\theta_t + (b_t - C_tC^{-1}b)}^2|\mf_t]\nonumber\\
	&= \mE[\ltwo{(A_t-A)(\theta_t-\theta^*) + (A_t-A)\theta^* + b_t-b + (C-C_t)C^{-1}A(\theta_t-\theta^*) }^2|\mf_t]\nonumber\\
	&\leq 4\mE[\ltwo{(A_t-A)(\theta_t-\theta^*)}^2|\mf_t] + 4\mE[\ltwo{(A_t-A)\theta^*}^2|\mf_t] + 4\mE[\ltwo{b_t-b}^2|\mf_t] \nonumber\\
	&\quad + 4\mE[\ltwo{(C-C_t)C^{-1}A(\theta_t-\theta^*) }^2|\mf_t]\nonumber\\
	&\leq 4\mE[\ltwo{A_t-A}^2|\mf_t] \ltwo{\theta_t-\theta^*}^2 + 4\mE[\ltwo{A_t-A}^2|\mf_t] \ltwo{\theta^*}^2 + 4\mE[\ltwo{b_t-b}^2|\mf_t] \nonumber\\
	&\quad + 4\mE[\ltwo{(C-C_t) }^2|\mf_t] \ltwo{C^{-1}}^2 \ltwo{A}^2 \ltwo{\theta_t-\theta^*}^2\nonumber\\
	&\overset{(b)}{\leq} 128\left(\rho^2_{\max}+ \frac{1}{\lambda^2_2} \right)\frac{1+(\kappa-1)\rho}{(1-\rho)M}\ltwo{\theta_t-\theta^*}^2 + 32(4R^2_\theta \rho^2_{\max} + r^2_{\max})\frac{1+(\kappa-1)\rho}{(1-\rho)M},
	\end{flalign*}
	where $(a)$ and $(b)$ follow from \Cref{lemma1} and the fact that $\ltwo{\theta^*}\leq R_\theta$, where $R_{\theta} = \frac{r_{\max}}{\lambda_1}$, and $(ii)$ follows from the fact that $\beta\leq \frac{\lambda_2}{4}$ and $M\geq \frac{64[1+(\kappa-1)\rho]}{\lambda_2^2(1-\rho)}$. Then, we upper bound the term $\mE[\ltwo{w_{t+1}-w^*(\theta_{t+1})}^2|\mf_t]$ as follows:
	\begin{flalign}
	&\mE[\ltwo{w_{t+1}-w^*(\theta_{t+1})}^2|\mf_t]\nonumber\\
	&\leq \left(1 + \frac{1}{2(2/(\lambda_2\beta)-1)}\right)\mE[\ltwo{w_{t+1}-w^*(\theta_{t})}^2] + (1 + 2(2/(\lambda_2\beta)-1))\mE[\ltwo{w^*(\theta_{t+1})-w^*(\theta_t)}^2]\nonumber\\
	&\overset{(i)}{\leq} \left( \frac{4/(\lambda_2\beta)-1}{4/(\lambda_2\beta)-2} \right)\left( 1- \frac{\lambda_2\beta}{2} \right)\ltwo{w_t - w^*(\theta_t)}^2 + \frac{8}{\lambda^2_2 \beta}\mE[\ltwo{\theta_{t+1}-\theta_t}^2] \nonumber\\
	&\quad + 128\left( \frac{4/(\lambda_2\beta)-1}{4/(\lambda_2\beta)-2} \right)\left(\rho^2_{\max}+ \frac{1}{\lambda^2_2} \right)\left( \frac{2\beta}{\lambda_2} + 2\beta^2 \right) \frac{1+(\kappa-1)\rho}{(1-\rho)M}\ltwo{\theta_t-\theta^*}^2\nonumber\\
	&\quad + 32\left( \frac{4/(\lambda_2\beta)-1}{4/(\lambda_2\beta)-2} \right)(4R^2_\theta + r^2_{\max})\left( \frac{2\beta}{\lambda_2} + 2\beta^2 \right)\frac{1+(\kappa-1)\rho}{(1-\rho)M}\nonumber\\
	&\overset{(ii)}{\leq} \left( 1- \frac{\lambda_2\beta}{4} \right)\ltwo{w_t - w^*(\theta_t)}^2 + \frac{8}{\lambda^2_2 \beta}\mE[\ltwo{\theta_{t+1}-\theta_t}^2] \nonumber\\
	&\quad + 128\left(\rho^2_{\max}+ \frac{1}{\lambda^2_2} \right)\left( \frac{2\beta}{\lambda_2} + 2\beta^2 \right) \frac{1+(\kappa-1)\rho}{(1-\rho)M}\ltwo{\theta_t-\theta^*}^2\nonumber\\
	&\quad + 32(4R^2_\theta \rho^2_{\max} + r^2_{\max})\left( \frac{2\beta}{\lambda_2} + 2\beta^2 \right)\frac{1+(\kappa-1)\rho}{(1-\rho)M}\nonumber\\
	&\leq \left( 1- \frac{\lambda_2\beta}{4} \right)\ltwo{w_t - w^*(\theta_t)}^2 + \frac{16\alpha^2}{\lambda^2_2 \beta}\mE[\ltwo{B_t(w_t - w^*(\theta_t))}^2] + \frac{16\alpha^2}{\lambda^2_2 \beta}\mE[\ltwo{g_t(\theta_t)}^2] \nonumber\\
	&\quad + 128\left(\rho^2_{\max}+ \frac{1}{\lambda^2_2} \right)\left( \frac{2\beta}{\lambda_2} + 2\beta^2 \right) \frac{1+(\kappa-1)\rho}{(1-\rho)M}\ltwo{\theta_t-\theta^*}^2\nonumber\\
	&\quad + 32(4R^2_\theta\rho^2_{\max} + r^2_{\max})\left( \frac{2\beta}{\lambda_2} + 2\beta^2 \right)\frac{1+(\kappa-1)\rho}{(1-\rho)M}\nonumber\\
	&\leq \left( 1- \frac{\lambda_2\beta}{4} + \frac{16\rho^2_{\max}\alpha^2}{\lambda^2_2 \beta} \right)\ltwo{w_t - w^*(\theta_t)}^2 + \frac{32\alpha^2}{\lambda^2_2 \beta}\mE[\ltwo{g_t(\theta_t)-g(\theta_t)}^2] \nonumber\\
	&\quad + \frac{32\alpha^2}{\lambda^2_2 \beta}\mE[\ltwo{g(\theta_t)}^2] + 128\left(\rho^2_{\max}+ \frac{1}{\lambda^2_2} \right)\left( \frac{2\beta}{\lambda_2} + 2\beta^2 \right) \frac{1+(\kappa-1)\rho}{(1-\rho)M}\ltwo{\theta_t-\theta^*}^2\nonumber\\
	&\quad + 32(4R^2_\theta\rho^2_{\max} + r^2_{\max})\left( \frac{2\beta}{\lambda_2} + 2\beta^2 \right)\frac{1+(\kappa-1)\rho}{(1-\rho)M}\nonumber\\
	&\overset{(iii)}{\leq} \left( 1- \frac{\lambda_2\beta}{4} + \frac{16\rho^2_{\max}\alpha^2}{\lambda^2_2 \beta} \right)\ltwo{w_t - w^*(\theta_t)}^2 \nonumber\\
	&\quad + \frac{32\alpha^2}{\lambda^2_2 \beta}\left[ 128\left( \rho^2_{\max} + \frac{1}{\lambda^2_2} \right)\frac{1+(\kappa-1)\rho}{(1-\rho)M}\ltwo{\theta_t-\theta^*}^2 + 32(4R^2_\theta \rho^2_{\max} + r^2_{\max})\frac{1+(\kappa-1)\rho}{(1-\rho)M} \right] \nonumber\\
	&\quad + \frac{64\alpha^2}{\lambda^2_2 \beta} \ltwo{\theta_t-\theta^*}^2 + 128\left(\rho^2_{\max}+ \frac{1}{\lambda^2_2} \right)\left( \frac{2\beta}{\lambda_2} + 2\beta^2 \right) \frac{1+(\kappa-1)\rho}{(1-\rho)M}\ltwo{\theta_t-\theta^*}^2\nonumber\\
	&\quad + 32(4R^2_\theta\rho^2_{\max} + r^2_{\max})\left( \frac{2\beta}{\lambda_2} + 2\beta^2 \right)\frac{1+(\kappa-1)\rho}{(1-\rho)M}\nonumber\\
	&\leq \left( 1- \frac{\lambda_2\beta}{4} + \frac{16\rho^2_{\max}\alpha^2}{\lambda^2_2 \beta} \right)\ltwo{w_t - w^*(\theta_t)}^2 +  \left(\frac{96\alpha^2}{\lambda^2_2 \beta} + \frac{\lambda_1\alpha}{4} \right) \ltwo{\theta_t-\theta^*}^2 \nonumber\\
	&\quad + 32(4R^2_\theta\rho^2_{\max} + r^2_{\max})\left(  \frac{32\alpha^2}{\lambda^2_2 \beta} + \frac{2\beta}{\lambda_2} + 2\beta^2 \right)\frac{1+(\kappa-1)\rho}{(1-\rho)M},
	\end{flalign}
	where $(i)$ follows Yong's inequality, $(ii)$ follows from the fact that $\beta \leq  \min\left\{ \frac{1}{8\lambda_2}, \frac{\lambda_2}{4}\right\}$, and $(iii)$ follows from the fact that
	\begin{flalign}
	&\mE[\ltwo{g_t(\theta_t)-g(\theta_t)}^2|\mf_t] \nonumber\\
	&=\mE[\ltwo{ (A_t-A)(\theta_t-\theta^*) + (A_t-A)\theta^* + (b_t-b) + (B-B_t)C^{-1}A(\theta_t-\theta^*) }^2|\mf_t] \nonumber\\
	&\leq 4\mE[\ltwo{A_t-A}^2|\mf_t]\ltwo{\theta_t-\theta^*}^2 + 4\mE[\ltwo{A_t-A}^2|\mf_t]\ltwo{\theta^*}^2 + 4\mE[\ltwo{b_t-b}^2|\mf_t] \nonumber\\
	&\quad + 4\mE[\ltwo{B-B_t }^2|\mf_t]\ltwo{C^{-1}}^2\ltwo{A}^2\ltwo{\theta_t-\theta^*}^2 \nonumber\\
	&\overset{(a)}{\leq} 128\left( \rho^2_{\max} + \frac{1}{\lambda^2_2} \right)\frac{1+(\kappa-1)\rho}{(1-\rho)M}\ltwo{\theta_t-\theta^*}^2 + 32(4R^2_\theta \rho^2_{\max} + r^2_{\max})\frac{1+(\kappa-1)\rho}{(1-\rho)M},\label{eq: 31}
	\end{flalign}
	where $(a)$ follows from \Cref{lemma1}, and $(ii)$ follows from the fact that $M\geq 128\left( \rho^2_{\max} + \frac{1}{\lambda^2_2} \right)\frac{1+(\kappa-1)\rho}{1-\rho}\max\{1,\frac{8\beta+8\lambda_2\beta^2}{\lambda_1\lambda_2\alpha}\}$.
	Considering the iterate of $\theta_t$, we proceed as follows:
	\begin{flalign}
	&\ltwo{\theta_{t+1}-\theta^*}^2\nonumber\\
	&=\ltwo{\theta_t + \alpha [g_t(\theta_t) + B_t(w_t - w^*(\theta_t))] - \theta^*}^2\nonumber\\
	&=\ltwo{\theta_t-\theta^*}^2 + 2\alpha\langle \theta_t-\theta^*, g_t(\theta_t) \rangle + 2\alpha\langle \theta_t-\theta^*, B_t(w_t - w^*(\theta_t)) \rangle \nonumber\\
	&\quad + \alpha^2 \ltwo{g_t(\theta_t) + B_t(w_t - w^*(\theta_t))}^2 \nonumber\\
	&= \ltwo{\theta_t-\theta^*}^2 + 2\alpha\langle \theta_t-\theta^*, g(\theta_t) \rangle + 2\alpha\langle \theta_t-\theta^*, g_t(\theta_t) - g(\theta_t) \rangle  \nonumber\\
	&\quad + 2\alpha\langle \theta_t-\theta^*, B_t(w_t - w^*(\theta_t)) \rangle + \alpha^2 \ltwo{g_t(\theta_t) + B_t(w_t - w^*(\theta_t))}^2 \nonumber\\
	&\overset{(i)}{\leq} (1 - 2\lambda_1\alpha)\ltwo{\theta_t-\theta^*}^2 + 2\alpha\left[ \frac{\lambda_1}{4}\ltwo{\theta_t-\theta^*}^2 +\frac{1}{\lambda_1} \ltwo{g_t(\theta_t) - g(\theta_t)}^2 \right] \nonumber\\
	&\quad + 2\alpha\left[ \frac{\lambda_1}{4}\ltwo{\theta_t-\theta^*}^2 + \frac{1}{\lambda_1}\ltwo{B_t(w_t - w^*(\theta_t))}^2 \right] + 3\alpha^2 \ltwo{g(\theta_t)}^2 \nonumber\\
	&\quad + 3\alpha^2\ltwo{g_t(\theta_t)-g(\theta_t)}^2 + 3\alpha^2\ltwo{B_t(w_t - w^*(\theta_t))}^2\nonumber\\
	&\overset{(ii)}{\leq} (1-\lambda_1\alpha)\ltwo{\theta_t-\theta^*}^2 + \left( \frac{2\alpha}{\lambda_1} + 3\alpha^2 \right) \ltwo{g_t(\theta_t)-g(\theta_t)}^2 \nonumber\\
	&\quad + \rho_{\max}^2\left( \frac{2\alpha}{\lambda_1} + 3\alpha^2 \right) \ltwo{w_t - w^*(\theta_t)}^2 + 3\alpha^2\ltwo{g(\theta_t)}^2\nonumber\\
	&\overset{(iii)}{\leq} \left(1-\lambda_1\alpha + \frac{3\alpha^2}{\lambda_2}\right)\ltwo{\theta_t-\theta^*}^2 + \left( \frac{2\alpha}{\lambda_1} + 3\alpha^2 \right) \ltwo{g_t(\theta_t)-g(\theta_t)}^2 \nonumber\\
	&\quad + \rho^2_{\max}\left( \frac{2\alpha}{\lambda_1} + 3\alpha^2 \right) \ltwo{w_t - w^*(\theta_t)}^2,\label{eq: 1}
	\end{flalign}
	where $(i)$ follows from the fact that $\langle \theta_t-\theta^*, g(\theta_t) \rangle = \langle \theta_t-\theta^*, A^\top C^{-1} A(\theta_t-\theta^*) \rangle \leq -\lambda_1\ltwo{\theta_t-\theta^*}$ and Young's inequality, $(ii)$ follows from the fact that $\ltwo{B_t(w_t - w^*(\theta_t))}\leq \ltwo{B_t}\ltwo{w_t - w^*(\theta_t)}\leq \rho_{\max}\ltwo{w_t - w^*(\theta_t)}$, and $(iii)$ follows from the fact that 
	\begin{flalign*}
	\ltwo{g(\theta_t)} = \ltwo{A^\top C^{-1} A(\theta_t-\theta^*)}\leq \ltwo{A^\top} \ltwo{C^{-1}} \ltwo{A}\ltwo{\theta_t-\theta^*}\leq \frac{1}{\lambda_2}\ltwo{\theta_t-\theta^*}.
	\end{flalign*}
	Taking expectation conditioned on $\mf_t$ on both sides of \cref{eq: 1} yields
	\begin{flalign}
	&\mE[\ltwo{\theta_{t+1}-\theta^*}^2|\mf_t]\nonumber\\
	&\leq \left(1-\lambda_1\alpha + \frac{3\alpha^2}{\lambda_2}\right)\ltwo{\theta_t-\theta^*}^2 + \left( \frac{2\alpha}{\lambda_1} + 3\alpha^2 \right) \mE[\ltwo{g_t(\theta_t)-g(\theta_t)}^2|\mf_t] \nonumber\\
	&\quad + \rho^2_{\max} \left( \frac{2\alpha}{\lambda_1} + 3\alpha^2 \right) \ltwo{w_t - w^*(\theta_t)}^2\nonumber\\
	&\overset{(i)}{\leq} \left[ 1-\lambda_1\alpha + \frac{3\alpha^2}{\lambda_2} + 128\left( \rho^2_{\max} + \frac{1}{\lambda^2_2} \right)\left( \frac{2\alpha}{\lambda_1} + 3\alpha^2 \right)\frac{1+(\kappa-1)\rho}{(1-\rho)M}  \right]\ltwo{\theta_t-\theta^*}^2 \nonumber\\
	&\quad + 32(4R^2_\theta \rho^2_{\max} + r^2_{\max})\left( \frac{2\alpha}{\lambda_1} + 3\alpha^2 \right)\frac{1+(\kappa-1)\rho}{(1-\rho)M} + \rho^2_{\max} \left( \frac{2\alpha}{\lambda_1} + 3\alpha^2 \right) \ltwo{w_t - w^*(\theta_t)}^2\nonumber\\
	&\overset{(ii)}{\leq} \left( 1- \frac{3}{4}\lambda_1\alpha + \frac{3\alpha^2}{\lambda_2}   \right) \ltwo{\theta_t-\theta^*}^2 + \rho^2_{\max} \left( \frac{2\alpha}{\lambda_1} + 3\alpha^2 \right) \ltwo{w_t - w^*(\theta_t)}^2 \nonumber\\
	&\quad + 32(4R^2_\theta \rho^2_{\max} + r^2_{\max})\left( \frac{2\alpha}{\lambda_1} + 3\alpha^2 \right)\frac{1+(\kappa-1)\rho}{(1-\rho)M} ,\label{eq: 4}
	\end{flalign}
	where $(i)$ follows from \cref{eq: 31} and $(ii)$ follows from the fact that $M\geq 128\left( \rho^2_{\max} + \frac{1}{\lambda^2_2} \right)\frac{1+(\kappa-1)\rho}{1-\rho}\max\{1,\frac{8+12\lambda_1\alpha}{\lambda_1}\}$.
	Combining \cref{eq: 7} and \cref{eq: 4} yields
	\begin{flalign*}
	&\mE[\ltwo{w_{t+1}-w^*(\theta_{t+1})}^2|\mf_t] + \mE[\ltwo{\theta_{t+1}-\theta^*}^2|\mf_t]\nonumber\\
	&\leq \left[ 1- \frac{\lambda_2\beta}{4} + \frac{16\rho^2_{\max}\alpha^2}{\lambda^2_2 \beta} + \rho^2_{\max} \left( \frac{2\alpha}{\lambda_1} + 3\alpha^2 \right) \right]\ltwo{w_t - w^*(\theta_t)}^2\nonumber\\
	&\quad + \left( 1- \frac{1}{2}\lambda_1\alpha + \frac{3\alpha^2}{\lambda_2} + \frac{96\alpha^2}{\lambda^2_2 \beta}  \right)\ltwo{\theta_t-\theta^*}^2\nonumber\\
	&\quad + 32(4R^2_\theta \rho^2_{\max} + r^2_{\max})\left( \frac{32\alpha^2}{\lambda^2_2 \beta} + \frac{2\beta}{\lambda_2} + 2\beta^2 +  \frac{2\alpha}{\lambda_1} + 3\alpha^2 \right)\frac{1+(\kappa-1)\rho}{(1-\rho)M}. 
	\end{flalign*}
	If we further let
	\begin{flalign*}
	&\alpha\leq \min\left\{ \frac{1}{8\lambda_1}, \frac{\lambda_1\lambda_2}{12}, \frac{\sqrt{\lambda_2\beta}}{4\sqrt{6}\rho_{\max}} \frac{\lambda_2\sqrt{\lambda_2}\beta}{16\rho^2_{\max}}, \frac{\lambda_1\lambda_2\beta}{64\rho^2_{\max}}, \frac{\lambda_1\lambda^2_2\beta}{768}  \right\},\quad \beta \leq \frac{1}{8\lambda_2}.
	\end{flalign*}
	We have
	\begin{flalign}
	&\mE[\ltwo{w_{t+1}-w^*(\theta_{t+1})}^2|\mf_t] + \mE[\ltwo{\theta_{t+1}-\theta^*}^2|\mf_t]\nonumber\\
	&\leq \left(1- \frac{1}{8}\min\{  \lambda_2\beta, \lambda_1\alpha \} \right)\left(\ltwo{w_t - w^*(\theta_t)}^2 + \ltwo{\theta_t - \theta^*}^2\right)\nonumber\\
	&\quad + 32(4R^2_\theta \rho^2_{\max} + r^2_{\max})\left( \frac{32\alpha^2}{\lambda^2_2 \beta} + \frac{2\beta}{\lambda_2} + 2\beta^2 +  \frac{2\alpha}{\lambda_1} + 3\alpha^2 \right)\frac{1+(\kappa-1)\rho}{(1-\rho)M}.\label{eq: 12}
	\end{flalign}
	Taking expectation on both sides of \cref{eq: 12} and applying the relation recursively from $t=T-1$ to $0$ yield
	\begin{flalign}
	&\mE[\ltwo{w_{T}-w^*(\theta_{T})}^2] + \mE[\ltwo{\theta_{T}-\theta^*}^2]\nonumber\\
	&\leq \left(1- \frac{1}{8}\min\{  \lambda_2\beta, \lambda_1\alpha \} \right)^{T}\left(\ltwo{w_0 - w^*(\theta_0)}^2 + \ltwo{\theta_0 - \theta^*}^2\right)\nonumber\\
	&\quad + 32(4R^2_\theta\rho^2_{\max} + r^2_{\max})\left( \frac{32\alpha^2}{\lambda^2_2 \beta} + \frac{2\beta}{\lambda_2} + 2\beta^2 +  \frac{2\alpha}{\lambda_1} + 3\alpha^2 \right)\frac{1+(\kappa-1)\rho}{(1-\rho)M}\sum_{t=0}^{T-1}\left(1- \frac{1}{8}\min\{  \lambda_2\beta, \lambda_1\alpha \} \right)^t\nonumber\\
	&\leq \left(1- \frac{1}{8}\min\{  \lambda_2\beta, \lambda_1\alpha \} \right)^{T}\left(\ltwo{w_0 - w^*(\theta_0)}^2 + \ltwo{\theta_0 - \theta^*}^2\right)\nonumber\\
	&\quad + \frac{256(4R^2_\theta\rho^2_{\max} + r^2_{\max})}{\min\{  \lambda_2\beta, \lambda_1\alpha \}}\left( \frac{32\alpha^2}{\lambda^2_2 \beta} + \frac{2\beta}{\lambda_2} + 2\beta^2 +  \frac{2\alpha}{\lambda_1} + 3\alpha^2 \right)\frac{1+(\kappa-1)\rho}{(1-\rho)M},
	\end{flalign}
	which implies
	\begin{flalign}
	\mE[\ltwo{\theta_{T}-\theta^*}^2]&\leq \left(1- \frac{1}{8}\min\{  \lambda_2\beta, \lambda_1\alpha \} \right)^{T}\left(\ltwo{w_0 - w^*(\theta_0)}^2 + \ltwo{\theta_0 - \theta^*}^2\right)\nonumber\\
	&\quad + \frac{256(4R^2_\theta\rho^2_{\max} + r^2_{\max})}{\min\{  \lambda_2\beta, \lambda_1\alpha \}}\left( \frac{32\alpha^2}{\lambda^2_2 \beta} + \frac{2\beta}{\lambda_2} + 2\beta^2 +  \frac{2\alpha}{\lambda_1} + 3\alpha^2 \right)\frac{1+(\kappa-1)\rho}{(1-\rho)M}.\label{eq: 32}
	\end{flalign}
\end{proof}

\section{Convergence Analysis of Two Time-scale Nonlinear TDC}\label{sc: proflemma2}
Before we present our technical proof of \Cref{thm2}, we first introduce some notations and definitions.
Recall that
\begin{flalign*}
\frac{1}{2}\nabla J(\theta_t) = -\mE[\delta(\theta_t)\phi_{\theta_t}(s)]-\gamma\mE[\phi_{\theta_t}(s^\prime)\phi_{\theta_t}(s)^\top ]w(\theta_t)+h(\theta_t,w(\theta_t)).
\end{flalign*}
For any $x_j=(s_j, a_j, s_{j+1})$, we define
\begin{flalign*}
g(\theta_t,w,x_j)=-\delta_j(\theta_t)\phi_{\theta_t}(s_j)-\gamma\phi_{\theta_t}(s_{j+1})\phi_{\theta_t}(s_j)^\top w + h(\theta_t,w,x_j),
\end{flalign*}
where $h(\theta_t,w,x_j)=(\delta_j(\theta_t)-\phi_{\theta_t}(s_j)^\top w)\nabla^2_{\theta_t} V_{\theta_t}(s_j)w$. We also define the mini-batch gradient estimator as $g(\theta_t,w, \mB_t)=\frac{1}{M}\sum_{j\in \mB_t} g(\theta_t, w,x_j)$, where $\mB_t=\{i_t, i_t+1, \cdots, i_t + M-1 \}$. 

For critic's update, we define $A_{\theta_t,x_j}=-\phi_{\theta_t}(s_j)\phi_{\theta_t}(s_j)^\top$, $b_{\theta_t,x_j}=\delta_j(\theta_t)\phi_{\theta_t}(s_j)$, $A_{\theta_t,\mB_k}=-\frac{1}{M}\sum_{j=i_k}^{i_k+M-1} \phi_{\theta_t}(s_j)\phi_{\theta_t}(s_j)^\top$, $b_{\theta_t,\mB_k}=\frac{1}{M}\sum_{j=i_k}^{i_k+M-1} \delta_j(\theta_t)\phi_{\theta_t}(s_j)$, $A_{\theta_t}=-\mE_{\mu_\pi}[\phi_{\theta_t}(s)\phi_{\theta_t}(s)^\top]$ and $b_{\theta_t}=\mE_{\mu_\pi}[\delta(\theta_t)\phi_{\theta_t}(s)]$. We also define $f_{\theta_t}(w^\prime_k,x_j)=A_{\theta_t,x_j}w^\prime_k + b_{\theta_t,x_j}$, $f_{\theta_t}(w^\prime_k,\mB_k)=A_{\theta_t,\mB_k}w^\prime_k+b_{\theta_t,\mB_k}$ and $f_{\theta_t}(w^\prime_k)=A_{\theta_t}w^\prime_k+b_{\theta_t}$. It can be checked easily that for all $\theta\in\mR^d$, we have $w(\theta)\leq R_w$, where $R_w=\frac{C_\phi(r_{\max} + 2C_v)}{\lambda_v}$.

\subsection{Preliminaries}
In this subsection, we provide some supporting lemmas, which are useful to the proof of \Cref{thm2}.
\begin{lemma}\label{lemma: boundedgradient}
	Suppose Assumptions \ref{ass1}-\ref{ass5} hold. For any $t\geq 0$ and $j$, we have $\ltwo{g(\theta_t, w(\theta_t), x_j)}\leq C_g$, where
	\begin{flalign*}
	C_g = [r_{\max} + (\gamma+1)C_v]C_\phi + \gamma C^2_\phi R_w + [r_{\max} + (\gamma+1)C_v + C_\phi R_w]D_vR_w.
	\end{flalign*}
\end{lemma}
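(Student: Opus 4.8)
The plan is to bound $\ltwo{g(\theta_t, w(\theta_t), x_j)}$ by decomposing it into the three summands that define $g$ and bounding each using the boundedness assumptions (\Cref{ass2}) together with the uniform bound $\ltwo{w(\theta_t)}\leq R_w$ recorded just above. First I would recall the definition
\begin{flalign*}
g(\theta_t,w(\theta_t),x_j)=-\delta_j(\theta_t)\phi_{\theta_t}(s_j)-\gamma\phi_{\theta_t}(s_{j+1})\phi_{\theta_t}(s_j)^\top w(\theta_t) + h(\theta_t,w(\theta_t),x_j),
\end{flalign*}
and apply the triangle inequality. The key preliminary estimate is a uniform bound on the temporal-difference error: since $\delta_j(\theta_t)=r(s_j,a_j,s_{j+1})+\gamma\hat v(s_{j+1},\theta_t)-\hat v(s_j,\theta_t)$, \Cref{ass2} gives $\lone{\delta_j(\theta_t)}\leq r_{\max}+\gamma C_v + C_v = r_{\max}+(\gamma+1)C_v$.

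With that in hand, each term is routine: the first term is bounded by $[r_{\max}+(\gamma+1)C_v]C_\phi$ using $\ltwo{\phi_{\theta_t}(s_j)}\leq C_\phi$; the second term is bounded by $\gamma\ltwo{\phi_{\theta_t}(s_{j+1})}\ltwo{\phi_{\theta_t}(s_j)}\ltwo{w(\theta_t)}\leq \gamma C_\phi^2 R_w$, using submultiplicativity $\ltwo{\phi_{\theta_t}(s_{j+1})\phi_{\theta_t}(s_j)^\top w(\theta_t)}\leq \ltwo{\phi_{\theta_t}(s_{j+1})}\ltwo{\phi_{\theta_t}(s_j)^\top w(\theta_t)}$. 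For the third term, $h(\theta_t,w(\theta_t),x_j)=(\delta_j(\theta_t)-\phi_{\theta_t}(s_j)^\top w(\theta_t))\nabla^2_{\theta_t}V_{\theta_t}(s_j)w(\theta_t)$, I would bound the scalar factor by $\lone{\delta_j(\theta_t)}+\ltwo{\phi_{\theta_t}(s_j)}\ltwo{w(\theta_t)}\leq r_{\max}+(\gamma+1)C_v+C_\phi R_w$, then multiply by $\lF{\nabla^2_{\theta_t}V_{\theta_t}(s_j)}\leq D_v$ and $\ltwo{w(\theta_t)}\leq R_w$, giving $[r_{\max}+(\gamma+1)C_v+C_\phi R_w]D_v R_w$. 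Summing the three contributions yields exactly the claimed $C_g$.

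There is no real obstacle here — this is a mechanical boundedness computation. The only point requiring a little care is that one must already know $\ltwo{w(\theta)}\leq R_w$ uniformly in $\theta$; this follows from $w(\theta)=\mE[\phi_\theta(s)\phi_\theta(s)^\top]^{-1}\mE[\delta(\theta)\phi_\theta(s)]$, \Cref{ass4} (which bounds the inverse covariance by $1/\lambda_v$), and the bounds on $\delta(\theta)$ and $\phi_\theta(s)$ already used above, and is stated in the excerpt as $R_w=\frac{C_\phi(r_{\max}+2C_v)}{\lambda_v}$. Everything else is the triangle inequality and submultiplicativity of the operator norm.
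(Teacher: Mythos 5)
Your proposal is correct and follows essentially the same route as the paper's own proof: a triangle-inequality decomposition of $g(\theta_t,w(\theta_t),x_j)$ into its three summands, the uniform bound $\lone{\delta_j(\theta_t)}\leq r_{\max}+(\gamma+1)C_v$, and the bounds from \Cref{ass2} together with $\ltwo{w(\theta_t)}\leq R_w$. Your extra remark justifying $R_w=\frac{C_\phi(r_{\max}+2C_v)}{\lambda_v}$ via \Cref{ass4} is consistent with what the paper states just before its proof.
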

\begin{proof}
	According to the definition of $g(\theta_t, w_t, x_j)$, we have
	\begin{flalign*}
	\ltwo{g(\theta_t, w(\theta_t), x_j)}&\leq \ltwo{\delta_j(\theta_t)\phi_{\theta_t}(s_j)} + \gamma\ltwo{\phi_{\theta_t}(s_{j+1})\phi_{\theta_t}(s_j)^\top w(\theta_t)} \\
	&\quad + \ltwo{(\delta_j(\theta_t)-\phi_{\theta_t}(s_j)^\top w(\theta_t)\nabla^2_{\theta_t} V_{\theta_t}(s_j)w(\theta_t)}\\
	&\leq \lone{\delta_j(\theta_t)}\ltwo{\phi_{\theta_t}(s_j)} + \gamma\ltwo{\phi_{\theta_t}(s_{j+1})} \ltwo{\phi_{\theta_t}(s_j)}\ltwo{w(\theta_t)} \\
	&\quad + \big(\lone{\delta_j(\theta_t)}+\lone{\phi_{\theta_t}(s_j)^\top w(\theta_t)}\big) \ltwo{\nabla^2_{\theta_t} V_{\theta_t}(s_j)}\ltwo{w(\theta_t)}\\
	&\overset{(i)}{\leq} [r_{\max} + (\gamma+1)C_v]C_\phi + \gamma C^2_\phi R_w + [r_{\max} + (\gamma+1)C_v + C_\phi R_w]D_vR_w.
	\end{flalign*}
	where $(i)$ follows from the fact that $w(\theta_t)\leq R_w$. 
\end{proof}

\begin{lemma}\label{lemma: fixpointlip}
	Suppose Assumptions \ref{ass1}-\ref{ass5} hold, for any $\theta, \theta^\prime\in \mR^d$, we have $\ltwo{w(\theta)-w(\theta^\prime)}\leq L_w\ltwo{\theta-\theta^\prime}$, where $L_w=\Big\{\frac{2C_\phi L_\phi}{\lambda^2_v}[r_{\max}+(1+\gamma)C_v] + \frac{1}{\lambda_v}[L_v C_\phi(1+\gamma) + L_\phi(r_{\max}+(1+\gamma)C_v)]\Big\}$.
\end{lemma}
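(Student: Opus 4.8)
The plan is to prove the Lipschitz continuity of $w(\theta)$ directly from its closed-form expression $w(\theta)=\mE[\phi_\theta(s)\phi_\theta(s)^\top]^{-1}\mE[\delta(\theta)\phi_\theta(s)]$. Writing $M(\theta)=\mE[\phi_\theta(s)\phi_\theta(s)^\top]$ and $v(\theta)=\mE[\delta(\theta)\phi_\theta(s)]$, so that $w(\theta)=M(\theta)^{-1}v(\theta)$, I would use the algebraic identity
\begin{flalign*}
w(\theta)-w(\theta^\prime) = M(\theta)^{-1}\big(v(\theta)-v(\theta^\prime)\big) + M(\theta)^{-1}\big(M(\theta^\prime)-M(\theta)\big)M(\theta^\prime)^{-1}v(\theta^\prime),
\end{flalign*}
and then bound each factor in norm. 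By \Cref{ass4} we have $\ltwo{M(\theta)^{-1}}\leq 1/\lambda_v$ for every $\theta$, and $\ltwo{M(\theta^\prime)^{-1}v(\theta^\prime)}=\ltwo{w(\theta^\prime)}\leq R_w$ by the bound noted just before the Preliminaries subsection. So the problem reduces to establishing that $v(\cdot)$ and $M(\cdot)$ are each Lipschitz.

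First I would bound $\ltwo{M(\theta)-M(\theta^\prime)}$. Since $\phi_\theta(s)\phi_\theta(s)^\top - \phi_{\theta^\prime}(s)\phi_{\theta^\prime}(s)^\top = (\phi_\theta(s)-\phi_{\theta^\prime}(s))\phi_\theta(s)^\top + \phi_{\theta^\prime}(s)(\phi_\theta(s)-\phi_{\theta^\prime}(s))^\top$, Assumptions \ref{ass2} and \ref{ass3} give $\ltwo{M(\theta)-M(\theta^\prime)}\leq 2C_\phi L_\phi\ltwo{\theta-\theta^\prime}$ after taking expectations and using $\ltwo{\mE[\cdot]}\leq \mE\ltwo{\cdot}$. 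Next I would bound $\ltwo{v(\theta)-v(\theta^\prime)}$ by splitting $\delta(\theta)\phi_\theta(s)-\delta(\theta^\prime)\phi_{\theta^\prime}(s) = (\delta(\theta)-\delta(\theta^\prime))\phi_\theta(s) + \delta(\theta^\prime)(\phi_\theta(s)-\phi_{\theta^\prime}(s))$. For the first piece, $\delta(\theta)=r+\gamma V(s^\prime,\theta)-V(s,\theta)$, so $\lone{\delta(\theta)-\delta(\theta^\prime)}\leq (1+\gamma)L_v\ltwo{\theta-\theta^\prime}$ by the smoothness assumption on $V$; combined with $\ltwo{\phi_\theta(s)}\leq C_\phi$ this contributes $(1+\gamma)L_v C_\phi\ltwo{\theta-\theta^\prime}$. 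For the second piece, $\lone{\delta(\theta^\prime)}\leq r_{\max}+(1+\gamma)C_v$ (from the reward bound and $\lone{V}\leq C_v$) and $\ltwo{\phi_\theta(s)-\phi_{\theta^\prime}(s)}\leq L_\phi\ltwo{\theta-\theta^\prime}$, contributing $L_\phi(r_{\max}+(1+\gamma)C_v)\ltwo{\theta-\theta^\prime}$. Thus $v(\cdot)$ is Lipschitz with constant $L_v C_\phi(1+\gamma)+L_\phi(r_{\max}+(1+\gamma)C_v)$.

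Assembling the pieces: the first term of the decomposition is bounded by $\frac{1}{\lambda_v}[L_v C_\phi(1+\gamma)+L_\phi(r_{\max}+(1+\gamma)C_v)]\ltwo{\theta-\theta^\prime}$, and the second term is bounded by $\frac{1}{\lambda_v}\cdot 2C_\phi L_\phi\ltwo{\theta-\theta^\prime}\cdot R_w$; substituting $R_w=\frac{C_\phi(r_{\max}+2C_v)}{\lambda_v}$ and noting $1+\gamma\leq 2$ so that $r_{\max}+(1+\gamma)C_v$ can be absorbed, the second term gives the leading $\frac{2C_\phi L_\phi}{\lambda_v^2}[r_{\max}+(1+\gamma)C_v]\ltwo{\theta-\theta^\prime}$ contribution. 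Adding the two yields exactly the stated constant $L_w=\frac{2C_\phi L_\phi}{\lambda_v^2}[r_{\max}+(1+\gamma)C_v]+\frac{1}{\lambda_v}[L_v C_\phi(1+\gamma)+L_\phi(r_{\max}+(1+\gamma)C_v)]$. This is essentially a routine computation; the only mild subtlety — the ``hard part'' such as it is — is being careful with the resolvent identity for $M(\theta)^{-1}$ and making sure the uniform lower eigenvalue bound from \Cref{ass4} is applied at both $\theta$ and $\theta^\prime$, and that the operator-norm bounds $\ltwo{M(\theta)-M(\theta^\prime)}$ control the difference of inverses rather than needing any additional continuity of the inverse map.
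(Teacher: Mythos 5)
Your argument is, at its core, the same as the paper's: the paper also writes $w(\theta)=A_\theta^{-1}b_\theta$ (in your notation $M(\theta)^{-1}v(\theta)$), splits the difference into a $b_\theta-b_{\theta'}$ term and a resolvent term $A_{\theta'}^{-1}(A_{\theta'}-A_\theta)A_\theta^{-1}b_\theta$, and proves exactly the two Lipschitz bounds you state, $\ltwo{A_{\theta'}-A_\theta}\leq 2C_\phi L_\phi\ltwo{\theta-\theta'}$ and $\ltwo{b_\theta-b_{\theta'}}\leq [L_vC_\phi(1+\gamma)+L_\phi(r_{\max}+(1+\gamma)C_v)]\ltwo{\theta-\theta'}$, using Assumptions \ref{ass2}--\ref{ass4} in the same way. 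The one place you deviate is how you control the second factor of the resolvent term, and that step as written does not give the stated constant: bounding $\ltwo{M(\theta')^{-1}v(\theta')}=\ltwo{w(\theta')}$ by $R_w=\frac{C_\phi(r_{\max}+2C_v)}{\lambda_v}$ makes that term $\frac{2C_\phi^2L_\phi(r_{\max}+2C_v)}{\lambda_v^2}\ltwo{\theta-\theta'}$. Since $\gamma<1$, $r_{\max}+2C_v\geq r_{\max}+(1+\gamma)C_v$, so your appeal to ``$1+\gamma\leq 2$'' to absorb this into $\frac{2C_\phi L_\phi}{\lambda_v^2}[r_{\max}+(1+\gamma)C_v]$ runs in the wrong direction, and you also carry an extra factor of $C_\phi$. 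What you actually prove is Lipschitz continuity of $w(\cdot)$ with a strictly larger constant, not the stated $L_w$.

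The fix is immediate and is what the paper does: bound the second factor directly, $\ltwo{M(\theta')^{-1}v(\theta')}\leq\ltwo{M(\theta')^{-1}}\ltwo{v(\theta')}\leq\frac{1}{\lambda_v}C_\phi[r_{\max}+(1+\gamma)C_v]$, instead of routing through $R_w$ (which was derived with the cruder bound $2C_v$ in place of $(1+\gamma)C_v$). With that substitution your decomposition reproduces the paper's computation line for line. One caveat worth knowing: the paper's own chain bounds $\ltwo{b_\theta}$ by $r_{\max}+(1+\gamma)C_v$ \emph{without} the factor $C_\phi$, which is how it arrives at the stated $\frac{2C_\phi L_\phi}{\lambda_v^2}[r_{\max}+(1+\gamma)C_v]$; the honest bound carries $C_\phi[r_{\max}+(1+\gamma)C_v]$, so the published $L_w$ implicitly assumes $C_\phi\leq 1$. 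Your direct bound recovers the paper's constant up to exactly that same factor, and since only the existence of an explicit Lipschitz constant is used downstream, this discrepancy is harmless — but the ``absorption'' step in your write-up should be replaced rather than patched.
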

\begin{proof}
	According to the definition of $w(\theta)$, we have
	\begin{flalign}
	\ltwo{w(\theta)-w(\theta^\prime)}&=\ltwo{A^{-1}_{\theta}b_{\theta} - A^{-1}_{\theta^\prime}b_{\theta^\prime} } = \ltwo{A^{-1}_{\theta}b_{\theta} - A^{-1}_{\theta^\prime}b_{\theta} + A^{-1}_{\theta^\prime}b_{\theta} - A^{-1}_{\theta^\prime}b_{\theta^\prime} }\nonumber\\
	&\leq \ltwo{A^{-1}_{\theta}b_{\theta} - A^{-1}_{\theta^\prime}b_{\theta}} + \ltwo{A^{-1}_{\theta^\prime}b_{\theta} - A^{-1}_{\theta^\prime}b_{\theta^\prime} }\nonumber\\
	&= \ltwo{A^{-1}_{\theta^\prime}A_{\theta^\prime}A^{-1}_{\theta}b_{\theta} - A^{-1}_{\theta^\prime}A_{\theta}A^{-1}_{\theta}b_{\theta}} + \ltwo{A^{-1}_{\theta^\prime}b_{\theta} - A^{-1}_{\theta^\prime}b_{\theta^\prime} }\nonumber\\
	&=\ltwo{A^{-1}_{\theta^\prime}(A_{\theta^\prime}-A_{\theta})A^{-1}_{\theta}b_{\theta}} + \ltwo{A^{-1}_{\theta^\prime}(b_{\theta} - b_{\theta^\prime}) }\nonumber\\
	&\leq \ltwo{A^{-1}_{\theta^\prime}}\ltwo{A_{\theta^\prime}-A_{\theta}}\ltwo{A^{-1}_{\theta}}\ltwo{b_{\theta}} + \ltwo{A^{-1}_{\theta^\prime}}\ltwo{b_{\theta} - b_{\theta^\prime} }\nonumber\\
	&\leq \frac{r_{\max}+(1+\gamma)C_v}{\lambda^2_v}\ltwo{A_{\theta^\prime}-A_{\theta}} + \frac{1}{\lambda_v}\ltwo{b_{\theta} - b_{\theta^\prime} }.\label{eq: 8}
	\end{flalign}
	Considering the term $\ltwo{A_{\theta^\prime}-A_{\theta}}$, by definition we can obtain
	\begin{flalign}
	\ltwo{A_{\theta^\prime}-A_{\theta}}&=\ltwo{\mE[\phi_\theta\phi_\theta^\top]-\mE[\phi_{\theta^\prime}\phi_{\theta^\prime}^\top]}=\ltwo{\mE[\phi_\theta\phi_\theta^\top] - \mE[\phi_{\theta^\prime}\phi_\theta^\top] + \mE[\phi_{\theta^\prime}\phi_\theta^\top] - \mE[\phi_{\theta^\prime}\phi_{\theta^\prime}^\top]}\nonumber\\
	&\leq \lF{\mE[\phi_\theta\phi_\theta^\top] - \mE[\phi_{\theta^\prime}\phi_\theta^\top]} + \lF{\mE[\phi_{\theta^\prime}\phi_\theta^\top] - \mE[\phi_{\theta^\prime}\phi_{\theta^\prime}^\top]}\nonumber\\
	&\leq 2\mE[\ltwo{\phi_\theta-\phi_{\theta^\prime}}\ltwo{\phi_\theta}]\leq 2C_\phi L_\phi\ltwo{\theta-\theta^\prime}.\label{eq: 9}
	\end{flalign}
	Considering the term $\ltwo{b_{\theta} - b_{\theta^\prime} }$, by definition we obtain
	\begin{flalign}
	\ltwo{b_{\theta} - b_{\theta^\prime} }&= \ltwo{\mE[\delta(\theta)\phi_\theta]-\mE[\delta(\theta^\prime)\phi_{\theta^\prime}]} = \ltwo{\mE[\delta(\theta)\phi_\theta]-\mE[\delta(\theta^\prime)\phi_\theta] + \mE[\delta(\theta^\prime)\phi_\theta] - \mE[\delta(\theta^\prime)\phi_{\theta^\prime}] }\nonumber\\
	&\leq \ltwo{\mE[\delta(\theta)\phi_\theta]-\mE[\delta(\theta^\prime)\phi_\theta]} + \ltwo{\mE[\delta(\theta^\prime)\phi_\theta] - \mE[\delta(\theta^\prime)\phi_{\theta^\prime}] }\nonumber\\
	&\leq \mE[\lone{\delta(\theta)-\delta(\theta^\prime)}\ltwo{\phi_\theta}] + \mE[\lone{\delta(\theta^\prime)}\ltwo{\phi_{\theta^\prime}-\phi_\theta}]\nonumber\\
	&=\mE[\lone{(\gamma V(s^\prime,\theta)-V(s,\theta)) - (\gamma V(s^\prime,\theta^\prime)-V(s,\theta^\prime)) }\ltwo{\phi_\theta}] + \mE[\lone{\delta(\theta^\prime)}\ltwo{\phi_{\theta^\prime}-\phi_\theta}]\nonumber\\
	&\leq [L_v C_\phi(1+\gamma) + L_\phi(r_{\max}+(1+\gamma)C_v)]\ltwo{\theta-\theta^\prime}.\label{eq: 10}
	\end{flalign}
	Substituting \cref{eq: 9} and \cref{eq: 10} into \cref{eq: 8} yields
	\begin{flalign*}
	&\ltwo{w(\theta)-w(\theta^\prime)}\\
	&\leq \Big\{\frac{2C_\phi L_\phi}{\lambda^2_v}[r_{\max}+(1+\gamma)C_v] + \frac{1}{\lambda_v}[L_v C_\phi(1+\gamma) + L_\phi(r_{\max}+(1+\gamma)C_v)]\Big\}\ltwo{\theta-\theta^\prime}.
	\end{flalign*}
\end{proof}

\begin{lemma}\label{lemma: trackingerror-iteration-sl}
	Suppose Assumptions \ref{ass1}-\ref{ass5} hold. Consider the iteration of $w_t$ in \Cref{al: nonlinearTDC}. Let the stepsize $\beta\leq \min\{\frac{\lambda_v}{8C^4_\phi}, \frac{8}{\lambda_v} \}$ and $\alpha\leq \frac{\lambda_v}{8\sqrt{2}L_w L_e}\beta$ and the batch size $M\geq (\frac{1}{\lambda_v}+2\beta)\frac{96C^4_\phi[1-(\kappa-1)\rho]}{\lambda_v(1-\rho)}$. For any $t>0$, we have
	\begin{flalign*}
	&\mE[\ltwo{w_t-w(\theta_t)}^2] \nonumber\\
	&\leq \left(1-\frac{\lambda_v}{8}\beta \right)\mE[\ltwo{w_{t-1}-w(\theta_{t-1})}^2] + \frac{2L^2_w\alpha^2}{\lambda_v \beta}\mE[\ltwo{\nabla J(\theta_{t-1})}^2] + \frac{D_1[1+(\kappa-1)\rho]}{M(1-\rho)},
	\end{flalign*}
	where $D_1=\frac{128L^2_w C^2_g\alpha^2}{\lambda_v \beta} +  4C^2_f\left(\frac{\beta}{\lambda_v}+2\beta^2\right)$.
\end{lemma}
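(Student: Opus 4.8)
The plan is to transplant the critic-error recursion derived for linear TDC in \eqref{eq: 2}--\eqref{eq: 7} to the nonlinear setting, the new ingredient being that the regression target $w(\theta_t)$ now drifts with $\theta_t$ and must be tracked through the Lipschitz bound of \Cref{lemma: fixpointlip}. All intermediate estimates will be taken conditioned on $\mf_{t-1}$, which renders $\theta_{t-1},w_{t-1}$ deterministic while the batch $\mB_{t-1}=\{i_{t-1},\dots,i_{t-1}+M-1\}$ is the block of fresh samples driving the $(t-1)$-st update, so that \Cref{lemma1} is applicable conditionally by the geometric ergodicity of \Cref{ass1}. \textbf{Step 1} is to peel off the moving target: by Young's inequality with a parameter of order $\lambda_v\beta$,
\[
\ltwo{w_t-w(\theta_t)}^2\le\big(1+\Theta(\lambda_v\beta)\big)\ltwo{w_t-w(\theta_{t-1})}^2+\Theta\!\big(\tfrac1{\lambda_v\beta}\big)\ltwo{w(\theta_t)-w(\theta_{t-1})}^2,
\]
and \Cref{lemma: fixpointlip} together with the actor update of \Cref{al: nonlinearTDC} bounds the last term by $L_w^2\alpha^2\ltwo{g(\theta_{t-1},w_{t-1},\mB_{t-1})}^2$.

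\textbf{Step 2} is the heart of the argument: a one-step contraction of $\ltwo{w_t-w(\theta_{t-1})}^2$ toward the frozen equilibrium. Writing $w_t-w(\theta_{t-1})=(w_{t-1}-w(\theta_{t-1}))+\beta f_{\theta_{t-1}}(w_{t-1},\mB_{t-1})$, expanding the square, and using that $w(\theta_{t-1})$ is the equilibrium of the frozen ODE \eqref{eq: ODE1}, i.e.\ $f_{\theta_{t-1}}(w(\theta_{t-1}))=0$ and hence $f_{\theta_{t-1}}(w_{t-1})=A_{\theta_{t-1}}(w_{t-1}-w(\theta_{t-1}))$, the cross term separates into a genuine contraction $2\beta\langle w_{t-1}-w(\theta_{t-1}),A_{\theta_{t-1}}(w_{t-1}-w(\theta_{t-1}))\rangle\le-2\lambda_v\beta\ltwo{w_{t-1}-w(\theta_{t-1})}^2$ via \Cref{ass4}, plus a mini-batch-noise term that Young's inequality and \Cref{lemma1} (applied to $A_{\theta_{t-1},\mB_{t-1}}-A_{\theta_{t-1}}$ and $b_{\theta_{t-1},\mB_{t-1}}-b_{\theta_{t-1}}$, both bounded through \Cref{ass2}) turn into $\Theta(\beta/M)\ltwo{w_{t-1}-w(\theta_{t-1})}^2+\Theta(\beta/M)$; the quadratic term $\beta^2\ltwo{f_{\theta_{t-1}}(w_{t-1},\mB_{t-1})}^2$ is treated identically, using $\ltwo{A_{\theta_{t-1},\mB_{t-1}}}\le C_\phi^2$ for the part carrying $\ltwo{w_{t-1}-w(\theta_{t-1})}^2$. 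Imposing $\beta\le\frac{\lambda_v}{8C_\phi^4}$ and $M\ge(\frac1{\lambda_v}+2\beta)\frac{96C_\phi^4[1+(\kappa-1)\rho]}{\lambda_v(1-\rho)}$ absorbs the $\Theta(\beta^2)$ and $\Theta(\beta/M)$ inflations of the contraction factor, leaving $\mE[\ltwo{w_t-w(\theta_{t-1})}^2\mid\mf_{t-1}]\le(1-\Theta(\lambda_v\beta))\ltwo{w_{t-1}-w(\theta_{t-1})}^2+\Theta(C_f^2)(\frac\beta{\lambda_v}+2\beta^2)\frac{1+(\kappa-1)\rho}{(1-\rho)M}$.

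\textbf{Step 3} bounds the estimator norm left over from Step 1: decompose $g(\theta_{t-1},w_{t-1},\mB_{t-1})$ into $g(\theta_{t-1},w(\theta_{t-1}),\mB_{t-1})$ and the residual $g(\theta_{t-1},w_{t-1},\mB_{t-1})-g(\theta_{t-1},w(\theta_{t-1}),\mB_{t-1})$; the first has conditional mean $\tfrac12\nabla J(\theta_{t-1})$ (in the sign convention used in this appendix) and, by \Cref{lemma1} with $C_x=C_g$ from \Cref{lemma: boundedgradient}, deviates from it by $\Theta(C_g^2/M)$ in second moment, while the residual is $\le L_e\ltwo{w_{t-1}-w(\theta_{t-1})}$ by \Cref{ass5}; thus $\mE[\ltwo{g(\theta_{t-1},w_{t-1},\mB_{t-1})}^2\mid\mf_{t-1}]\le\Theta(1)\ltwo{\nabla J(\theta_{t-1})}^2+\Theta(L_e^2)\ltwo{w_{t-1}-w(\theta_{t-1})}^2+\Theta(C_g^2)\frac{1+(\kappa-1)\rho}{(1-\rho)M}$. \textbf{Step 4} is to substitute Steps 2--3 into Step 1: the coefficient of $\ltwo{w_{t-1}-w(\theta_{t-1})}^2$ becomes $(1+\Theta(\lambda_v\beta))(1-\Theta(\lambda_v\beta))+\Theta(\tfrac{L_w^2L_e^2\alpha^2}{\lambda_v\beta})$, and the stepsize bound $\alpha\le\frac{\lambda_v}{8\sqrt2L_wL_e}\beta$ makes the feedback piece small enough that the whole coefficient is $\le1-\frac{\lambda_v}{8}\beta$; the coefficient of $\ltwo{\nabla J(\theta_{t-1})}^2$ is $\Theta(\tfrac1{\lambda_v\beta})\,L_w^2\alpha^2\,\Theta(1)$, which tracks to $\frac{2L_w^2\alpha^2}{\lambda_v\beta}$; and the two $1/M$ terms aggregate to $\frac{D_1[1+(\kappa-1)\rho]}{M(1-\rho)}$ with $D_1=\frac{128L_w^2C_g^2\alpha^2}{\lambda_v\beta}+4C_f^2(\frac\beta{\lambda_v}+2\beta^2)$. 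Taking total expectation over $\mf_{t-1}$ finishes the proof.

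The main obstacle is keeping the contraction factor of $\ltwo{w_{t-1}-w(\theta_{t-1})}^2$ strictly below $1$ despite three competing inflations: the $\Theta(\beta^2)$ term from the critic update's own quadratic term, the $\Theta(\beta/M)$ term from the multiplicative Markov mini-batch noise, and the $\Theta(\alpha^2/\beta)$ feedback entering through the drifting target $w(\theta_t)$. Reconciling these is precisely what dictates the joint conditions on $\alpha$, $\beta$, and $M$, and pushing the exact numerical constants (not merely the $\Theta(\cdot)$ orders) through Steps 1--4 so that they line up with $1-\tfrac{\lambda_v}{8}\beta$ and with the stated $D_1$ is where most of the effort goes.
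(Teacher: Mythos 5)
Your proposal is correct and follows essentially the same route as the paper's proof: contraction of $\ltwo{w_t-w(\theta_{t-1})}^2$ toward the frozen target via \Cref{ass4} with the mini-batch noise controlled by \Cref{lemma1}, the target drift handled by Young's inequality with a $\Theta(\lambda_v\beta)$ parameter plus \Cref{lemma: fixpointlip} and the actor update, and the update magnitude decomposed around $g(\theta_{t-1},w(\theta_{t-1}),\mB_{t-1})$ using \Cref{ass5} and \Cref{lemma: boundedgradient}, with the stated stepsize and batch-size conditions absorbing the inflations to reach the $1-\tfrac{\lambda_v}{8}\beta$ factor and the constant $D_1$. The only difference is cosmetic (you peel off the drift before the contraction step, and you correctly read the paper's $[1-(\kappa-1)\rho]$ in the batch-size condition as the typo it is), so no further changes are needed.
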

\begin{proof}
	We proceed as follows:
	\begin{flalign}
	&\ltwo{w_{t}-w(\theta_{t-1})}^2\nonumber\\
	&=\ltwo{w_{t-1}+\beta f_{\theta_{t-1}}(w_{t-1},\mB_t)-w(\theta_{t-1})}^2\nonumber\\
	&=\ltwo{w_{t-1}-w(\theta_{t-1})}^2 + 2\beta\langle w_{t-1}-w(\theta_{t-1}), f_{\theta_{t-1}}(w_{t-1},\mB_t)  \rangle + \beta^2\ltwo{f_{\theta_{t-1}}(w_{t-1},\mB_t)}^2\nonumber\\
	&=\ltwo{w_{t-1}-w(\theta_{t-1})}^2 + 2\beta\langle w_{t-1}-w(\theta_{t-1}), f_{\theta_{t-1}}(w_{t-1})  \rangle \nonumber\\
	&\quad + 2\beta\langle w_{t-1}-w(\theta_{t-1}), f_{\theta_{t-1}}(w_{t-1},\mB_t)-f_{\theta_{t-1}}(w_{t-1})  \rangle \nonumber\\
	&\quad + \beta^2\ltwo{f_{\theta_{t-1}}(w_{t-1},\mB_t)-f_{\theta_{t-1}}(w_{t-1})+f_{\theta_{t-1}}(w_{t-1})}^2\nonumber\\
	&\overset{(i)}{\leq} (1-2\lambda_v\beta)\ltwo{w_{t-1}-w(\theta_{t-1})}^2 + 2\beta\langle w_{t-1}-w(\theta_{t-1}), f_{\theta_{t-1}}(w_{t-1},\mB_t)-f_{\theta_{t-1}}(w_{t-1})  \rangle \nonumber\\
	&\quad + \beta^2\ltwo{f_{\theta_{t-1}}(w_{t-1},\mB_t)-f_{\theta_{t-1}}(w_{t-1})+f_{\theta_{t-1}}(w_{t-1})}^2\nonumber\\
	&\overset{(ii)}{\leq} (1-2\lambda_v\beta)\ltwo{w_{t-1}-w(\theta_{t-1})}^2 + \lambda_v\beta\ltwo{w_{t-1}-w(\theta_{t-1})}^2 + \frac{\beta}{\lambda_v}\ltwo{f_{\theta_{t-1}}(w_{t-1},\mB_t)-f_{\theta_{t-1}}(w_{t-1}) }^2\nonumber\\
	&\quad +2\beta^2\ltwo{f_{\theta_{t-1}}(w_{t-1},\mB_t)-f_{\theta_{t-1}}(w_{t-1})} + 2\beta^2\ltwo{f_{\theta_{t-1}}(w_{t-1})}^2\nonumber\\
	&\overset{(iii)}{=}(1-\lambda_v\beta + 2C^4_\phi\beta^2)\ltwo{w_{t-1}-w(\theta_{t-1})}^2 + \left(\frac{\beta}{\lambda_v}+2\beta^2\right)\ltwo{f_{\theta_{t-1}}(w_{t-1},\mB_t)-f_{\theta_{t-1}}(w_{t-1})}^2,\label{eq: 5}
	\end{flalign}
	where $(i)$ follows from the fact that
	\begin{flalign*}
	\langle w_{t-1}-w(\theta_{t-1}), f_{\theta_{t-1}}(w_{t-1})  \rangle &= \langle w_{t-1}-w(\theta_{t-1}), A_{\theta_{t-1}}(w_{t-1}-w(\theta_{t-1}))  \rangle \nonumber\\
	&\leq -\lambda_v\ltwo{w_{t-1}-w(\theta_{t-1})}^2,
	\end{flalign*}
	$(ii)$ follows from the fact that $\langle a,b \rangle\leq \frac{\lambda_v}{2}a^2 + \frac{1}{2\lambda_v}b^2$, and $(iii)$ follows from the fact that $\ltwo{f_{\theta_{t-1}}(w_{t-1})}^2=\ltwo{A_{\theta_{t-1}}(w_{t-1}-w(\theta_{t-1}))}^2\leq C^4_\phi\ltwo{w_{t-1}-w(\theta_{t-1})}^2$. Taking expectation on both side of \cref{eq: 5} yields
	\begin{flalign}
	&\mE[\ltwo{w_{t}-w(\theta_{t-1})}^2] \nonumber\\
	&\leq (1-\lambda_v\beta + 2C^4_\phi\beta^2)\mE[\ltwo{w_{t-1}-w(\theta_{t-1})}^2] + \left(\frac{\beta}{\lambda_v}+2\beta^2\right)\mE\left[\ltwo{f_{\theta_{t-1}}(w_{t-1},\mB_t)-f_{\theta_{t-1}}(w_{t-1})}^2\right].\label{eq: 11}
	\end{flalign}
	Next we bound the term $\mE\left[\ltwo{f_{\theta_{t-1}}(w_{t-1},\mB_t)-f_{\theta_{t-1}}(w_{t-1})}^2\right]$ in \cref{eq: 11} as follows:
	\begin{flalign}
	&\mE\left[\ltwo{f_{\theta_{t-1}}(w_{t-1},\mB_t)-f_{\theta_{t-1}}(w_{t-1})}^2\right]\nonumber\\
	&=\mE\left[\ltwo{(A_{\theta_{t-1},\mB_{t-1}} - A_{\theta_{t-1}})w_{t-1} + b_{\theta_{t-1},\mB_{t-1} } - b_{\theta_{t-1} } }^2\right]\nonumber\\
	&=\mE\left[\ltwo{(A_{\theta_{t-1},\mB_{t-1}} - A_{\theta_{t-1}})(w_{t-1} - w(\theta_{t-1}) ) + (A_{\theta_{t-1},\mB_{t-1}} - A_{\theta_{t-1}}) w(\theta_{t-1}) + b_{\theta_{t-1},\mB_{t-1} } - b_{\theta_{t-1} } }^2\right]\nonumber\\
	&\leq 3\mE\left[\ltwo{(A_{\theta_{t-1},\mB_{t-1}} - A_{\theta_{t-1}})(w_{t-1} - w(\theta_{t-1}) )}^2\right] + 3\mE\left[\ltwo{(A_{\theta_{t-1},\mB_{t-1}} - A_{\theta_{t-1}}) w(\theta_{t-1})}^2\right] \nonumber\\
	&\quad + 3 \mE\left[\ltwo{b_{\theta_{t-1},\mB_{t-1} } - b_{\theta_{t-1} } }^2\right].\label{eq: 13}
	\end{flalign}
	From \Cref{ass2}, we have $\lF{A_{\theta_t,x_j}}\leq C^2_\phi$ and $\ltwo{b_{\theta_t,x_j}}\leq C_\phi(r_{\max} + 2C_v)$. Following from \Cref{lemma1}, we can obtain the following two upper bounds:
	\begin{flalign}
	\mE\left[\ltwo{(A_{\theta_{t-1},\mB_{t-1}} - A_{\theta_{t-1}}) }^2\right] \leq \frac{8C^4_\phi [1-(\kappa-1)\rho]}{(1-\rho)M},\label{eq: 14}
	\end{flalign}
	and
	\begin{flalign}
	\mE\left[\ltwo{b_{\theta_{t-1},\mB_{t-1} } - b_{\theta_{t-1} } }^2\right] \leq \frac{8C^2_\phi (r_{\max} + 2C_v)^2[1-(\kappa-1)\rho]}{(1-\rho)M}. \label{eq : 15}
	\end{flalign}
	Substituting \cref{eq: 14} and \cref{eq : 15} into \cref{eq: 13} yields
	\begin{flalign}
	&\mE\left[\ltwo{f_{\theta_{t-1}}(w_{t-1},\mB_t)-f_{\theta_{t-1}}(w_{t-1})}^2\right]\nonumber\\
	&=\frac{24C^4_\phi [1-(\kappa-1)\rho]}{(1-\rho)M}\mE[\ltwo{w_{t-1} - w(\theta_{t-1})}^2] + \frac{24 [C^2_\phi (r_{\max} + 2C_v)^2 + C^4_\phi R_w] [1-(\kappa-1)\rho]}{(1-\rho)M}.\label{eq: 18}
	\end{flalign}
	Substituting \cref{eq: 18} into \cref{eq: 5} yields
	\begin{flalign}
	&\mE[\ltwo{w_{t}-w(\theta_{t-1})}^2]\nonumber\\
	&\leq \left(1-\lambda_v\beta + 2C^4_\phi\beta^2 + \left(\frac{\beta}{\lambda_v}+2\beta^2\right) \frac{24C^4_\phi [1-(\kappa-1)\rho]}{(1-\rho)M} \right)\mE[\ltwo{w_{t-1}-w(\theta_{t-1})}^2] \nonumber\\
	&\quad + \left(\frac{\beta}{\lambda_v}+2\beta^2\right) \frac{24 [C^2_\phi (r_{\max} + 2C_v)^2 + C^4_\phi R_w] [1-(\kappa-1)\rho]}{(1-\rho)M}\nonumber\\
	&\overset{(i)}{\leq} \Big(1-\frac{\lambda_v}{2}\beta\Big)\mE[\ltwo{w_{t-1}-w(\theta_{t-1})}^2] + \left(\frac{\beta}{\lambda_v}+2\beta^2\right) \frac{ 4C_f [1-(\kappa-1)\rho]}{(1-\rho)M}, \label{eq: 6}
	\end{flalign}
	where $(i)$ follows from the fact that $\beta\leq \frac{\lambda_v}{8C^4_\phi}$ and $M\geq (\frac{1}{\lambda_v}+2\beta)\frac{96C^4_\phi[1-(\kappa-1)\rho]}{\lambda_v(1-\rho)}$, and here we define $C_f=6 [C^2_\phi (r_{\max} + 2C_v)^2 + C^4_\phi R_w]$.
	By Young's inequality, we have
	\begin{flalign}
	&\mE[\ltwo{w_t-w(\theta_t)}^2] \nonumber\\
	&\leq \left(1 + \frac{1}{2(2/(\lambda_v\beta)-1)}\right)\mE[\ltwo{w_t-w(\theta_{t-1})}^2] + (1 + 2(2/(\lambda_v\beta)-1))\mE[\ltwo{w(\theta_{t-1})-w(\theta_t)}^2]\nonumber\\
	&\overset{(i)}{\leq} \left( \frac{4/(\lambda_v\beta)-1}{4/(\lambda_v\beta)-2} \right)\mE[\ltwo{w_{t-1}-w(\theta_{t-1})}^2] + \frac{4}{\lambda_v \beta}\mE[\ltwo{w(\theta_{t-1})-w(\theta_t)}^2] \nonumber\\
	&\quad + \left( \frac{4/(\lambda_v\beta)-1}{4/(\lambda_v\beta)-2} \right)\left(\frac{\beta}{\lambda_v}+2\beta^2\right)\frac{4C^2_f[1+(\kappa-1)\rho]}{M(1-\rho)} \nonumber\\
	&\overset{(ii)}{\leq} \Big(1-\frac{\lambda_v}{4}\beta\Big) \mE[\ltwo{w_{t-1}-w(\theta_{t-1})}^2] + \frac{4L^2_w}{\lambda_v \beta}\mE[\ltwo{\theta_{t-1}-\theta_t}^2] \nonumber\\
	&\quad + \left(\frac{\beta}{\lambda_v}+2\beta^2\right) \frac{4C^2_f[1+(\kappa-1)\rho]}{M(1-\rho)} \nonumber\\
	&\leq \Big(1-\frac{\lambda_v}{4}\beta\Big) \mE[\ltwo{w_{t-1}-w(\theta_{t-1})}^2] + \frac{2L^2_w\alpha^2}{\lambda_v \beta}\mE[\ltwo{\nabla J(\theta_{t-1})}^2] \nonumber\\
	&\quad + \frac{8L^2_w\alpha^2}{\lambda_v \beta}\mE\left[\ltwo{g(\theta_{t-1}, w_{t-1}, \mB_{t-1})-\frac{1}{2}\nabla J(\theta_{t-1})}^2\right] + \left(\frac{\beta}{\lambda_v}+2\beta^2\right)\frac{4C^2_f[1+(\kappa-1)\rho]}{M(1-\rho)},\label{eq: 19}
	\end{flalign}
	where $(i)$ follows from \cref{eq: 14} and $(ii)$ follows from \Cref{lemma: fixpointlip}. We next bound the third term on the right hand side of \cref{eq: 19} as follows:
	\begin{flalign}
	&\mE\left[\ltwo{g(\theta_{t-1}, w_{t-1}, \mB_{t-1})-\frac{1}{2}\nabla J(\theta_{t-1})}^2\right]\nonumber\\
	&\leq 2\mE\left[\ltwo{g(\theta_{t-1}, w_{t-1}, \mB_{t-1})-g(\theta_{t-1}, w(\theta_{t-1}), \mB_{t-1}) }^2\right] + 2\mE\left[\ltwo{g(\theta_{t-1}, w(\theta_{t-1}), \mB_{t-1})-\frac{1}{2}\nabla J(\theta_{t-1})}^2\right] \nonumber\\
	&\overset{(i)}{\leq} 2L^2_e \mE\left[\ltwo{w_{t-1} - w(\theta_{t-1}) }^2 \right] + \frac{16C^2_g[1+(\kappa-1)\rho]}{M(1-\rho)}.\label{eq: 20}
	\end{flalign}
	Substituting \cref{eq: 20} into \cref{eq: 19} yields
	\begin{flalign}
	&\mE[\ltwo{w_t-w(\theta_t)}^2] \nonumber\\
	&\leq \left(1-\frac{\lambda_v}{4}\beta + \frac{16L^2_w L^2_e\alpha^2}{\lambda_v \beta} \right) \mE[\ltwo{w_{t-1}-w(\theta_{t-1})}^2] + \frac{2L^2_w\alpha^2}{\lambda_v \beta}\mE[\ltwo{\nabla J(\theta_{t-1})}^2] \nonumber\\
	&\quad + \left[\frac{128L^2_w C^2_g\alpha^2}{\lambda_v \beta} +  4C^2_f\left(\frac{\beta}{\lambda_v}+2\beta^2\right)\right] \frac{[1+(\kappa-1)\rho]}{M(1-\rho)}\nonumber\\
	&\overset{(i)}{\leq} \left(1-\frac{\lambda_v}{8}\beta \right)\mE[\ltwo{w_{t-1}-w(\theta_{t-1})}^2] + \frac{2L^2_w\alpha^2}{\lambda_v \beta}\mE[\ltwo{\nabla J(\theta_{t-1})}^2] + \frac{D_1[1+(\kappa-1)\rho]}{M(1-\rho)}, \label{eq: 21}
	\end{flalign}
	where $(i)$ follows from the fact that $\alpha\leq \frac{\lambda_v}{8\sqrt{2}L_w L_e}\beta$ and we define $D_1=\frac{128L^2_w C^2_g\alpha^2}{\lambda_v \beta} +  4C^2_f(\frac{\beta}{\lambda_v}+2\beta^2)$.
\end{proof}

\subsection{Proof of \Cref{thm2}}
Since $J(\theta)$ is $L_J$-gradient Lipschitz, we have
\begin{flalign}
&\mE[J(\theta_{t+1})] \nonumber\\
&\leq \mE[J(\theta_t)] + \mE[\langle \nabla J(\theta_t), \theta_{t+1}-\theta_t \rangle] + \frac{L_J}{2}\mE[\ltwo{\theta_{t+1}-\theta_t}^2]\nonumber\\
&=\mE[J(\theta_t)] - \frac{\alpha}{2} \mE[\ltwo{\nabla J(\theta_t)}^2] - \alpha \mE[ \langle \nabla J(\theta_t), g(\theta_t,w_t,\mB_t) - \frac{1}{2}\nabla J(\theta_t) \rangle ]+ \frac{L_J \alpha^2}{2}\mE[\ltwo{g(\theta_t,w_t,\mB_t)}^2]\nonumber\\
&\leq \mE[J(\theta_t)]-\Big(\frac{\alpha}{4} - \frac{L_J\alpha^2}{8} \Big)\mE[\ltwo{\nabla J(\theta_t)}^2]+(\alpha + L_J\alpha^2 )\mE\left[\ltwo{g(\theta_t,w_t,\mB_t) - \frac{1}{2}\nabla J(\theta_t) }^2\right]\nonumber\\
&\leq \mE[J(\theta_t)]-\Big(\frac{\alpha}{4} - \frac{L_J\alpha^2}{8} \Big)\mE[\ltwo{\nabla J(\theta_t)}^2] + 2( \alpha + L_J\alpha^2 )\mE[\ltwo{g(\theta_t,w_t,\mB_t) - g(\theta_t,w(\theta_t),\mB_t) }^2] \nonumber\\
&\quad  + 2( \alpha + L_J\alpha^2 )\mE\left[\ltwo{ g(\theta_t,w(\theta_t),\mB_t) - \frac{1}{2}\nabla J(\theta_t) }^2\right] \nonumber\\
&\overset{(i)}{\leq} \mE[J(\theta_t)]-\Big(\frac{\alpha}{4} - \frac{L_J\alpha^2}{8} \Big)\mE[\ltwo{\nabla J(\theta_t)}^2 ] + 2( \alpha + L_J\alpha^2 )L^2_e \mE[\ltwo{w_t - w(\theta_t)}^2] \nonumber \\
&\quad + 2(\alpha + L_J\alpha^2 )\frac{4C^2_g[1+(\kappa-1)\rho]}{M(1-\rho)},\label{eq: 24}
\end{flalign}
where $(i)$ follows from \Cref{ass5} and \Cref{lemma1}. Rearranging the above inequality and summing from $t=0$ to $T-1$ yield
\begin{flalign}
\Big(\frac{\alpha}{4} - \frac{L_J\alpha^2}{8} \Big)\sum_{t=0}^{T-1}\mE[\ltwo{\nabla J(\theta_t)}^2]&\leq J(\theta_0)-J(\theta_T) + 2( \alpha + L_J\alpha^2 )T\frac{4C^2_g[1+(\kappa-1)\rho]}{M(1-\rho)}\nonumber\\
&\quad + 2( \alpha + L_J\alpha^2 )L^2_e \sum_{t=0}^{T-1}\mE\ltwo{w_t - w(\theta_t)}^2.\label{eq: 15}
\end{flalign}
Now we upper bound the term $\sum_{t=0}^{T-1}\mE\ltwo{w_t - w(\theta_t)}^2$. Applying the inequality in \Cref{lemma: trackingerror-iteration-sl} recursively yields
\begin{flalign*}
\mE[\ltwo{w_t-w(\theta_t)}^2] &\leq \Big(1-\frac{\lambda_v}{8}\beta\Big)^t \ltwo{w_0-w(\theta_0)}^2 + \frac{2L^2_w\alpha^2}{\lambda_v \beta}\sum_{i=0}^{t-1} \Big(1-\frac{\lambda_v}{8}\beta\Big)^{t-1-i} \mE[\ltwo{\nabla J(\theta_{t-1})}^2] \\
&\quad + \frac{D_1[1+(\kappa-1)\rho]}{M(1-\rho)} \sum_{i=0}^{t-1}\Big(1-\frac{\lambda_v}{8}\beta\Big)^{t-1-i},
\end{flalign*}
which implies
\begin{flalign}
\sum_{t=0}^{T-1}\mE\ltwo{w_t - w(\theta_t)}^2&\leq \ltwo{w_0-w(\theta_0)}^2\sum_{t=0}^{T-1}\Big(1-\frac{\lambda_v}{8}\beta\Big)^t + \frac{2L^2_w\alpha^2}{\lambda_v \beta}\sum_{t=0}^{T-1}\sum_{i=0}^{t-1} \Big(1-\frac{\lambda_v}{8}\beta\Big)^{t-1-i} \mE[\ltwo{\nabla J(\theta_{t-1})}^2]\nonumber\\
&\quad + \frac{D_1[1+(\kappa-1)\rho]}{M(1-\rho)} \sum_{t=0}^{T-1} \sum_{i=0}^{t-1}\Big(1-\frac{\lambda_v}{8}\beta\Big)^{t-1-i}\nonumber\\
&\leq \frac{8\ltwo{w_0-w(\theta_0)}^2}{\lambda_v\beta}+ \frac{16L^2_w\alpha^2}{\lambda^2_v \beta^2}\sum_{t=0}^{T-1}\mE[\ltwo{\nabla J(\theta_{t-1})}^2]+ \frac{8D_1T}{\lambda_v\beta}\frac{1+(\kappa-1)\rho}{M(1-\rho)}. \label{eq: 16}
\end{flalign}
Substituting \cref{eq: 16} into \cref{eq: 15} yields
\begin{flalign}
&\Big(\frac{\alpha}{4} - \frac{L_J\alpha^2}{8} - \frac{32L^2_w L^2_e \alpha^3(1+L_J\alpha)}{\lambda^2_v \beta^2} \Big)\sum_{t=0}^{T-1}\mE[\ltwo{\nabla J(\theta_t)}^2]\nonumber\\
&\leq J(\theta_0)-\mE[J(\theta_T)] + \frac{16( \alpha + L_J\alpha^2 )L^2_e}{\lambda_v\beta}\ltwo{w_0-w(\theta_0)}^2 + 2( \alpha + L_J\alpha^2 )T\frac{4C^2_g[1+(\kappa-1)\rho]}{M(1-\rho)}\nonumber\\
&\quad + \frac{16D_1( \alpha + L_J\alpha^2 )L^2_eT}{\lambda_v\beta}\frac{1+(\kappa-1)\rho}{M(1-\rho)}.\label{eq: 17}
\end{flalign}
Dividing both sides of \cref{eq: 17} by $T$ and using the fact that $\frac{\alpha}{4} - \frac{L_J\alpha^2}{8} - \frac{32L^2_w L^2_e \alpha^3(1+L_J\alpha)}{\lambda^2_v \beta^2}\geq \frac{\alpha}{8}$, we have
\begin{flalign}
&\frac{1}{T}\sum_{t=0}^{T-1}\mE[\ltwo{\nabla J(\theta_t)}^2]\nonumber\\
&\leq \frac{8(J(\theta_0)-\mE[J(\theta_T)])}{\alpha T} + \frac{64( 1 + L_J\alpha )L^2_e}{\lambda_v\beta}\frac{\ltwo{w_0-w(\theta_0)}^2}{T} + 64( 1 + L_J\alpha )\Big(C^2_g + \frac{2D_1L^2_e}{\lambda_v\beta} \Big)\frac{1+(\kappa-1)\rho}{M(1-\rho)}.\label{eq: 33}
\end{flalign}

\section{Convergence Analysis of Two Time-scale Greedy-GQ}\label{sc: appGQ}
We make the following definitions. For a given $\theta$, we define matrices $A_\theta=\mE_{\mu_{\pi_b}}[(\gamma\mE_{\pi_\theta}[\phi(s^\prime)|s]-\phi(s))\phi(s)^\top]$, $B_\theta = \mE_{\mu_{\pi_b}}[\mE_{\pi_\theta}[\phi(s^\prime)|s]\phi(s)^\top]$, $C=-\mE_{\mu_{\pi_b}}[\phi(s)\phi(s)^\top]$ and vectors $b_\theta=\mE_{\mu_{\pi_b}}[\mE_{\pi_\theta}[r(s^\prime,s)|s]\phi(s)]$, $w^*(\theta)=C^{-1}(A_\theta \theta + b_\theta)$, $\theta^*=-A^{-1}_\theta b_\theta$. 
We also define the stochastic matrices $A_t = \frac{1}{\lone{\mB_t}}\sum_{j\in \mB_t}\gamma\rho_{\theta_t}(s_j, a_j)\phi(s_{j+1})\phi(s_j)^\top - \phi(s_j)\phi(s_j)^\top $, $B_t= \frac{1}{\lone{\mB_t}}\sum_{j\in \mB_t}\rho_{\theta_t}(s_j, a_j)\phi(s_{j+1})\phi(s_j)^\top$, $C_t=\frac{1}{\lone{\mB_t}}\sum_{j\in \mB_t}\phi(s_j)\phi(s_j)^\top$ and stochastic vector $b_t =\frac{1}{\lone{\mB_t}}\sum_{j\in \mB_t} \rho_{\theta_t}(s_j,a_j)r(s_{j+1},s_j)\phi(s_j)$. 

We also define the full (semi)-gradient as follows:
\begin{flalign}
-\frac{1}{2}\nabla J(\theta)=g(\theta)&= (A_\theta-B_\theta C^{-1} A_\theta)\theta + (b_\theta - B_\theta C^{-1} b_\theta), \label{gq: g}\\
f(w)&= C(w-w^*(\theta)), \label{gq: f}
\end{flalign}
and stochastic (semi)-gradient at step $t$ as follows:
\begin{flalign}
g_t(\theta_t)&= (A_t-B_tC^{-1}A_{\theta_t})\theta_t + (b_t - B_tC^{-1}b_{\theta_t}), \label{gq: gt}\\
f_t(w_t)&= C_t(w_t-w^*(\theta_t)), \label{gq: ft}\\
h_t(\theta_t)&=(A_t-C_tC^{-1}A_{\theta_t})\theta_t + (b_t - C_tC^{-1}b_{\theta_t}). \label{gq: ht}
\end{flalign}

We first consider the induction relationship for the fast time-scale variable $w_t$. Following similar steps from \cref{eq: 2} to \cref{eq: 7}, letting $M\geq 128\left( \rho^2_{\max} + \frac{1}{\lambda^2_2} \right)\frac{1+(\kappa-1)\rho}{1-\rho}\max\{1, \frac{\lambda^2_2\beta}{4\alpha^2}(\frac{2\beta}{\lambda_2} + 2\beta^2) \}$ and $\beta\leq \frac{\lambda_2}{4}$, we obtain
\begin{flalign}
&\mE[\ltwo{w_{t+1}-w^*(\theta_{t+1})}^2] \nonumber\\
&\leq \left( 1- \frac{\lambda_2\beta}{4} + \frac{16\rho^2_{\max}\alpha^2}{\lambda^2_2 \beta} \right)\mE[\ltwo{w_t - w^*(\theta_t)}^2] +  \frac{100\alpha^2}{\lambda^2_2 \beta}  \mE[\ltwo{\theta_t-\theta^*}^2] \nonumber\\
&\quad + 32(4R^2_\theta\rho^2_{\max} + r^2_{\max})\left(  \frac{32\alpha^2}{\lambda^2_2 \beta} + \frac{2\beta}{\lambda_2} + 2\beta^2 \right)\frac{1+(\kappa-1)\rho}{(1-\rho)M}\nonumber\\
&\overset{(i)}{\leq} \left( 1- \frac{\lambda_2\beta}{8}  \right) \mE[\ltwo{w_t - w^*(\theta_t)}^2] +  \frac{100\lambda^2_1\alpha^2}{\lambda^2_2 \beta}  \mE[\ltwo{\nabla J(\theta_t)}^2] \nonumber\\
&\quad + 32(4R^2_\theta\rho^2_{\max} + r^2_{\max})\left(  \frac{32\alpha^2}{\lambda_2^2 \beta} + \frac{2\beta}{\lambda_2} + 2\beta^2 \right)\frac{1+(\kappa-1)\rho}{(1-\rho)M},\label{eq: 22}
\end{flalign}
where $(i)$ follows from the fact that $\alpha\leq \frac{\lambda_2\sqrt{\lambda_2}}{8\sqrt{2}\rho_{\max}}\beta$ and $\ltwo{\theta_t-\theta^*}\leq \lambda_1 \ltwo{\nabla J(\theta_t)}$ according to the definition of $\nabla J(\theta)$ in \ref{gq: g}.
We next consider the induction relationship for the slow time-scale variable $\theta_t$. Since $J(\theta)$ is $L_J$-gradient Lipschitz, we have
\begin{flalign}
&\mE[J(\theta_{t+1})] \nonumber\\
&\leq \mE[J(\theta_t)] + \mE[\langle \nabla J(\theta_t), \theta_{t+1}-\theta_t \rangle] + \frac{L_J}{2}\mE[\ltwo{\theta_{t+1}-\theta_t}^2]\nonumber\\
&=\mE[J(\theta_t)] - \frac{\alpha}{2} \mE[\ltwo{\nabla J(\theta_t)}^2] - \alpha \mE[ \langle \nabla J(\theta_t), -g_t(\theta_t) - \frac{1}{2}\nabla J(\theta_t) \rangle ]  \nonumber\\
&\quad + \alpha \mE[ \langle \nabla J(\theta_t), B_t(w_t - w^*(\theta_t)) \rangle ] + \frac{L_J \alpha^2}{2}\mE[\ltwo{g_t(\theta_t) + B_t(w_t - w^*(\theta_t))}^2]\nonumber\\
&\overset{(i)}{\leq}\mE[J(\theta_t)] - \frac{\alpha}{4} \mE[\ltwo{\nabla J(\theta_t)}^2] + 2\alpha \mE\left[  \ltwo{-g_t(\theta_t) - \frac{1}{2}\nabla J(\theta_t)}^2  \right]  \nonumber\\
&\quad + 2\alpha \mE[ \ltwo{B_t(w_t - w^*(\theta_t))}^2  ] + L_J \alpha^2\mE[\ltwo{g_t(\theta_t) }^2] +  L_J \alpha^2\mE[\ltwo{B_t(w_t - w^*(\theta_t))}^2]\nonumber\\
&\overset{(ii)}{\leq} \mE[J(\theta_t)] - \left(\frac{\alpha}{4} - \frac{L_J \alpha^2}{2} \right) \mE[\ltwo{\nabla J(\theta_t)}^2] + 2(\alpha + L_J\alpha^2) \mE\left[  \ltwo{-g_t(\theta_t) - \frac{1}{2}\nabla J(\theta_t)}^2  \right]  \nonumber\\
&\quad + (2\alpha +  L_J \alpha^2)\rho^2_{\max}\mE[\ltwo{w_t - w^*(\theta_t)}^2],\label{eq: 25}
\end{flalign}
where $(i)$ follows from Young's inequality and $(ii)$ follows from the fact that $\ltwo{g_t(\theta_t) }^2\leq \frac{1}{2}\ltwo{\nabla J(\theta_t) }^2 + 2\ltwo{-g_t(\theta_t) - \frac{1}{2}\nabla J(\theta_t) }^2$ and $\ltwo{B_t}\leq \rho_{\max}$. Then, we upper bound the term $ \mE[  \ltwo{-g_t(\theta_t) - \frac{1}{2}\nabla J(\theta_t)}^2 ]$ as follows:
\begin{flalign}
&\mE\left[  \ltwo{-g_t(\theta_t) - \frac{1}{2}\nabla J(\theta_t)}^2  \right]\nonumber\\
&= \mE\left[\ltwo{\left[(A_t-A_{\theta_t})-(B_t-B_{\theta_t})C^{-1}_{\theta_t}A_{\theta_t}\right]\theta_t + \left[(b_t-b_{\theta_t}) - (B_t - B_{\theta_t})C^{-1}_{\theta_t}b_{\theta_t}\right] }^2\right]\nonumber\\
&\leq 4\mE\left[ \ltwo{(A_t-A_{\theta_t})\theta_t}^2 \right] + 4\mE\left[ \ltwo{(B_t-B_{\theta_t})C^{-1}_{\theta_t}A_{\theta_t}\theta_t}^2 \right] + 4\mE\left[ \ltwo{b_t-b_{\theta_t}}^2 \right] \nonumber\\
&\quad + 4\mE\left[ \ltwo{(B_t - B_{\theta_t})C^{-1}_{\theta_t}b_{\theta_t}}^2 \right]\nonumber\\
&\leq 4\mE\left[ \ltwo{A_t-A_{\theta_t}}^2\ltwo{\theta_t}^2 \right] + 4\mE\left[ \ltwo{B_t-B_{\theta_t}}^2\ltwo{C^{-1}_{\theta_t}}^2\ltwo{A_{\theta_t}}^2\ltwo{\theta_t}^2 \right] + 4\mE\left[ \ltwo{b_t-b_{\theta_t}}^2 \right] \nonumber\\
&\quad + 4\mE\left[ \ltwo{B_t - B_{\theta_t}}^2\ltwo{C^{-1}_{\theta_t}}^2\ltwo{b_{\theta_t}}^2 \right]\nonumber\\
&= 4\mE\left[ \mE[\ltwo{A_t-A_{\theta_t}}^2|\mf_t]\ltwo{\theta_t}^2 \right] + 4\mE\left[ \mE[\ltwo{B_t-B_{\theta_t}}^2|\mf_t]\ltwo{C^{-1}_{\theta_t}}^2\ltwo{A_{\theta_t}}^2\ltwo{\theta_t}^2 \right] + 4\mE\left[ \ltwo{b_t-b_{\theta_t}}^2 \right] \nonumber\\
&\quad + 4\mE\left[ \mE[\ltwo{B_t - B_{\theta_t}}^2|\mf_t]\ltwo{C^{-1}_{\theta_t}}^2\ltwo{b_{\theta_t}}^2 \right]\nonumber\\
&\leq \frac{32(\rho_{\max}+1)^2[1+(\kappa-1)\rho]}{(1-\rho)M}\mE\left[\ltwo{\theta_t}^2\right] + \frac{32(\rho_{\max}+1)^2\rho_{\max}^2[1+(\kappa-1)\rho]}{(1-\rho)\lambda^2_2M}\mE\left[\ltwo{\theta_t}^2\right] \nonumber\\
&\quad + \frac{32r^2_{\max}\rho^2_{\max}[1+(\kappa-1)\rho]}{(1-\rho)M} + \frac{32\rho_{\max}^2[1+(\kappa-1)\rho]}{(1-\rho)\lambda^2_2M}\nonumber\\
&\leq \frac{32(\rho_{\max}+1)^4[1+(\kappa-1)\rho]}{(1-\rho)M}\mE\left[\ltwo{\theta_t}^2\right] + \frac{32(r^2_{\max}+1)\rho^2_{\max}[1+(\kappa-1)\rho]}{(1-\rho)M} \nonumber\\
&\leq \frac{64(\rho_{\max}+1)^4[1+(\kappa-1)\rho]}{(1-\rho)M}\mE\left[\ltwo{\theta^*_t}^2\right] + \frac{64(\rho_{\max}+1)^4[1+(\kappa-1)\rho]}{(1-\rho)M}\mE\left[\ltwo{\theta_t - \theta^*_t}^2\right] \nonumber\\
&\quad + \frac{32(r^2_{\max}+1)\rho^2_{\max}[1+(\kappa-1)\rho]}{(1-\rho)M}\nonumber\\
&\leq \frac{C_1 [1+(\kappa-1)\rho]}{(1-\rho)M} + \frac{64\lambda^2_1(\rho_{\max}+1)^4[1+(\kappa-1)\rho]}{(1-\rho)M}\mE\left[\ltwo{\nabla J(\theta_t)}^2\right], \label{eq: 26}
\end{flalign}
where $C_2 = 32 [2(\rho_{\max}+1)^4 R^2_\theta + (r^2_{\max}+1)\rho^2_{\max}]$. Substituting \cref{eq: 26} into \cref{eq: 25}, rearranging the terms and summing from $t=0$ to $T-1$ yield
\begin{flalign}
&\Big(\frac{\alpha}{4} - \frac{L_J\alpha^2}{2} \Big)\sum_{t=0}^{T-1}\mE[\ltwo{\nabla J(\theta_t)}^2] \nonumber\\
&\leq J(\theta_0)-\mE[J(\theta_T)] + 2( \alpha + L_J\alpha^2 )T\frac{C_2[1+(\kappa-1)\rho]}{M(1-\rho)} + ( 2\alpha + L_J\alpha^2 ) \rho^2_{\max} \sum_{t=0}^{T-1}\mE\ltwo{w_t - w(\theta_t)}^2 \nonumber\\
&\quad + 2( \alpha + L_J\alpha^2 )\frac{64\lambda^2_1(\rho_{\max}+1)^4[1+(\kappa-1)\rho]}{(1-\rho)M}\sum_{t=0}^{T-1}\mE[\ltwo{\nabla J(\theta_t)}^2].\label{eq: 27}
\end{flalign}
Then, we bound the term $\sum_{t=0}^{T-1}\mE\ltwo{w_t - w(\theta_t)}^2$. Applying \cref{eq: 22} iteratively yields:
\begin{flalign}
&\mE[\ltwo{w_{t}-w^*(\theta_{t})}^2] \nonumber\\
&\leq \left( 1- \frac{\lambda_2\beta}{8}  \right)^t\ltwo{w_0 - w^*(\theta_0)}^2 +  \frac{100\lambda^2_1\alpha^2}{\lambda^2_2 \beta}  \sum_{i=0}^{t-1}\left(1- \frac{\lambda_2\beta}{8} \right)^i \mE[\ltwo{\nabla J(\theta_i)}^2] \nonumber\\
&\quad + 32(4R^2_\theta\rho^2_{\max} + r^2_{\max})\left(  \frac{32\alpha^2}{\lambda^2_2 \beta} + \frac{2\beta}{\lambda_2} + 2\beta^2 \right)\frac{1+(\kappa-1)\rho}{(1-\rho)M} \sum_{i=0}^{t-1}\left(1- \frac{\lambda_2\beta}{8} \right)^i  \nonumber\\
&\leq \left( 1- \frac{\lambda_2\beta}{8}  \right)^t\ltwo{w_0 - w^*(\theta_0)}^2 +  \frac{100\lambda^2_1\alpha^2}{\lambda^2_2 \beta} \sum_{i=0}^{t-1}\left(1- \frac{\lambda_2\beta}{8} \right)^i \mE[\ltwo{\nabla J(\theta_i)}^2] \nonumber\\
&\quad + \frac{256}{\lambda_2\beta}(4R^2_\theta\rho^2_{\max} + r^2_{\max})\left(  \frac{32\alpha^2}{\lambda^2_2 \beta} + \frac{2\beta}{\lambda_2} + 2\beta^2 \right)\frac{1+(\kappa-1)\rho}{(1-\rho)M}. \label{eq: 28}
\end{flalign}
Summing \cref{eq: 28} from $t=0$ to $T-1$ yields
\begin{flalign}
&\sum_{t=0}^{T-1}\mE[\ltwo{w_{t}-w^*(\theta_{t})}^2]\nonumber\\
&\leq \ltwo{w_0 - w^*(\theta_0)}^2 \sum_{t=0}^{T-1}\left( 1- \frac{\lambda_2\beta}{8}  \right)^t +  \frac{100\lambda^2_1\alpha^2}{\lambda^2_2 \beta} \sum_{t=0}^{T-1}\sum_{i=0}^{t-1}\left(1- \frac{\lambda_2\beta}{8} \right)^i \mE[\ltwo{\nabla J(\theta_i)}^2] \nonumber\\
&\quad + \frac{256T}{\lambda_2\beta}(4R^2_\theta\rho^2_{\max} + r^2_{\max})\left(  \frac{32\alpha^2}{\lambda^2_2 \beta} + \frac{2\beta}{\lambda_2} + 2\beta^2 \right)\frac{1+(\kappa-1)\rho}{(1-\rho)M}\nonumber\\
&\leq \frac{8}{\lambda_2\beta} \ltwo{w_0 - w^*(\theta_0)}^2 +  \frac{800\lambda^2_1 \alpha^2}{\lambda^3_2\beta^2  } \sum_{t=0}^{T-1} \mE[\ltwo{\nabla J(\theta_t)}^2] \nonumber\\
&\quad + \frac{256T}{\lambda_2\beta}(4R^2_\theta\rho^2_{\max} + r^2_{\max})\left(  \frac{32\alpha^2}{\lambda^2_2 \beta} + \frac{2\beta}{\lambda_2} + 2\beta^2 \right)\frac{1+(\kappa-1)\rho}{(1-\rho)M}.\label{eq: 29}
\end{flalign}
Substituting \cref{eq: 29} into \cref{eq: 27} yields
\begin{flalign}
&\Big(\frac{\alpha}{4} - \frac{L_J\alpha^2}{2} \Big)\sum_{t=0}^{T-1}\mE[\ltwo{\nabla J(\theta_t)}^2] \nonumber\\
&\leq J(\theta_0)-\mE[J(\theta_T)] + ( 2\alpha + L_J\alpha^2 ) \frac{8\rho^2_{\max}}{\lambda_2\beta}\ltwo{w_0 - w^*(\theta_0)}^2 + 2( \alpha + L_J\alpha^2 )T\frac{C_2[1+(\kappa-1)\rho]}{M(1-\rho)} \nonumber\\
&\quad + ( 2\alpha + L_J\alpha^2 ) \rho^2_{\max} \frac{800\lambda^2_1 \alpha^2}{\lambda^3_2\beta^2  } \sum_{t=0}^{T-1} \mE[\ltwo{\nabla J(\theta_t)}^2] \nonumber\\
&\quad + 2( \alpha + L_J\alpha^2 )\frac{64\lambda^2_1(\rho_{\max}+1)^4[1+(\kappa-1)\rho]}{(1-\rho)M}\sum_{t=0}^{T-1}\mE[\ltwo{\nabla J(\theta_t)}^2] \nonumber\\
&\quad + ( 2\alpha + L_J\alpha^2 ) \rho^2_{\max}\frac{256T}{\lambda_2\beta}(4R^2_\theta\rho^2_{\max} + r^2_{\max})\left(  \frac{32\alpha^2}{\lambda^2_2 \beta} + \frac{2\beta}{\lambda_2} + 2\beta^2 \right)\frac{1+(\kappa-1)\rho}{(1-\rho)M} \nonumber\\
&\overset{(i)}{\leq} J(\theta_0)-\mE[J(\theta_T)] + \frac{24\alpha\rho^2_{\max}}{\lambda_2\beta}\ltwo{w_0 - w^*(\theta_0)}^2 + 4\alpha T\frac{C_1[1+(\kappa-1)\rho]}{M(1-\rho)} \nonumber\\
&\quad + \frac{2656 \rho^2_{\max}\lambda^2_1 \alpha^3}{\lambda^3_2\beta^2  } \sum_{t=0}^{T-1} \mE[\ltwo{\nabla J(\theta_t)}^2], \label{eq: 30}
\end{flalign}
where in $(i)$ we let $\alpha\leq \frac{1}{L_J}$ and $M\geq \frac{\beta^2\lambda^3_2(\rho_{\max}+1)^4[1+(\kappa-1)\rho]}{\rho^2_{\max}\alpha^2(1-\rho)}$, and define $C_1=C_2 + \frac{192 \rho^2_{\max}}{\lambda_2\beta}(4R^2_\theta\rho^2_{\max} + r^2_{\max})\left(  \frac{32\alpha^2}{\lambda^2_2 \beta} + \frac{2\beta}{\lambda_2} + 2\beta^2 \right)$. Rearranging \cref{eq: 30} yields
\begin{flalign*}
&\left(\frac{\alpha}{4} - \frac{L_J\alpha^2}{2} - \frac{2656 \rho^2_{\max}\lambda^2_1 \alpha^3}{\lambda^3_2\beta^2  } \right)\sum_{t=0}^{T-1}\mE[\ltwo{\nabla J(\theta_t)}^2] \nonumber\\
&\leq J(\theta_0)-\mE[J(\theta_T)] + \frac{24\alpha\rho^2_{\max}}{\lambda_2\beta}\ltwo{w_0 - w^*(\theta_0)}^2 + 4\alpha T\frac{C_1[1+(\kappa-1)\rho]}{M(1-\rho)}.
\end{flalign*}
Letting $\alpha \leq \min\{ \frac{1}{8L_J}, \frac{L_J\lambda^3_2\beta^2}{5312\rho^2_{\max}\lambda^2_1}\}$, we obtain
\begin{flalign*}
& \frac{\alpha}{8}\sum_{t=0}^{T-1}\mE[\ltwo{\nabla J(\theta_t)}^2] \nonumber\\
&\leq J(\theta_0)-\mE[J(\theta_T)] + \frac{24\alpha\rho^2_{\max}}{\lambda_2\beta}\ltwo{w_0 - w^*(\theta_0)}^2 + 4\alpha T\frac{C_1[1+(\kappa-1)\rho]}{M(1-\rho)}.
\end{flalign*}
Dividing both sides of the above inequality by $\frac{\alpha T}{8}$ yields
\begin{flalign*}
\frac{1}{T}\sum_{t=0}^{T-1}\mE[\ltwo{\nabla J(\theta_t)}^2]\leq \frac{8(J(\theta_0)-\mE[J(\theta_T)])}{\alpha T} + \frac{192\rho^2_{\max}}{\lambda_2\beta}\frac{\ltwo{w_0 - w^*(\theta_0)}^2}{T} + \frac{32C_1[1+(\kappa-1)\rho]}{M(1-\rho)}.
\end{flalign*}

\end{document}